\documentclass[twoside]{article}

\usepackage[accepted]{aistats2026}
\usepackage{booktabs}
\renewcommand{\mathbf}{\boldsymbol}

\usepackage{url}
\usepackage[hidelinks]{hyperref}
\usepackage{amsmath,amsthm,amssymb,amsbsy, mathrsfs}
\usepackage{paralist}
\usepackage{xcolor}
\usepackage{color}
\usepackage{graphicx}
\graphicspath{{./figs/}}
\usepackage{algorithm}
\usepackage{algorithmic}
\usepackage{multirow}

\usepackage{enumerate}
\usepackage{enumitem}
\usepackage{graphicx}
\usepackage{fancyhdr}
\usepackage{cite}
\usepackage{cleveref}
\usepackage{subcaption}
\usepackage{bbm}
\usepackage{wrapfig}

\newtheorem{lemma}{Lemma}%
\newtheorem{cor}{Corollary}%
\newtheorem{prop}{Proposition}%

\theoremstyle{remark}

\theoremstyle{problem}

\definecolor{mplBlue}{RGB}{31, 119, 180}
\usepackage{tikz}

\newcommand{\R}{\mathbb{R}}

\newcommand{\e}{\begin{equation}}
\newcommand{\ee}{\end{equation}}
\newcommand{\en}{\begin{equation*}}
\newcommand{\een}{\end{equation*}}
\newcommand{\eqn}{\begin{eqnarray}}
\newcommand{\eeqn}{\end{eqnarray}}
\newcommand{\bmat}{\begin{bmatrix}}
\newcommand{\emat}{\end{bmatrix}}

\DeclareMathAlphabet\mathbfcal{OMS}{cmsy}{b}{n}

\newcommand{\mb}{\mathbf}
\newcommand{\mc}{\mathcal}

\newcommand{\vct}[1]{\boldsymbol{#1}}
\newcommand{\mtx}[1]{\boldsymbol{#1}}

\DeclareMathOperator*{\argmin}{\text{arg~min}}

\newcommand{\calD}{\mathcal{D}}

\newcommand{\calL}{\mathcal{L}}

\newcommand{\NC}{$\mathcal{NC}$}

\newcommand{\vb}{\vct{b}}

\newcommand{\vh}{\vct{h}}

\newcommand{\vw}{\vct{w}}
\newcommand{\vx}{\vct{x}}
\newcommand{\vy}{\vct{y}}

\newcommand{\vtheta}{\vct{\theta}}

\newcommand{\vmu}{\vct{\mu}}

\newcommand{\mW}{\mtx{W}}

\newcommand{\mTheta}{\mtx{\Theta}}

\graphicspath{{./figs/}}

\newlength{\imgwidth}
\newboolean{twoColVersion}
\setboolean{twoColVersion}{false}
\newcommand{\twoCol}[2]{\ifthenelse{\boolean{twoColVersion}} {#1} {#2} }

\newcommand{\forget}{\operatorname{f}}
\newcommand{\retain}{\text{r}}

\usepackage[dvipsnames]{xcolor}

\usepackage[markup=underlined]{changes}

\definechangesauthor[color=BrickRed]{JLM}
\definechangesauthor[color=NavyBlue]{YC}

\usepackage[round]{natbib}

\usepackage{titlesec}

\titlespacing*{\section}
{0pt}      %
{3pt}      %
{-1pt}      %

\titlespacing*{\subsection}
{0pt}      %
{1pt}      %
{-3pt}      %

\renewcommand{\bibname}{References}
\renewcommand{\bibsection}{\subsubsection*{\bibname}}

\begin{document}

\twocolumn[

\aistatstitle{An Illusion of Unlearning? Assessing Machine Unlearning Through Internal Representations}

\aistatsauthor{
  Yichen Gao$^{1,*}$ \And
  Altay Unal$^{2,*}$ \And
  Akshay Rangamani$^{2,\dagger}$ \And
  Zhihui Zhu$^{1,\dagger}$
}

\aistatsaddress{
$^1$Department of Computer Science \& Engineering, The Ohio State University\\
$^2$Department of Data Science,
New Jersey Institute of Technology\\
$^{*}$Equal contribution $^{\dagger}$Equal advising \\
}
 ]

\begin{abstract}

While numerous machine unlearning (MU) methods have recently been developed with promising results in erasing the influence of forgotten data, classes, or concepts, they are also highly vulnerable—for example, simple fine-tuning can inadvertently reintroduce erased concepts. In this paper, we address this contradiction by examining the {\it internal representations} of unlearned models, in contrast to prior work that focuses primarily on output-level behavior.
Our analysis shows that many state-of-the-art MU methods appear successful mainly due to a misalignment between last-layer features and the classifier—a phenomenon we call {\it feature–classifier misalignment}. In fact, hidden features remain highly discriminative, and simple linear probing can recover near-original accuracy. Assuming neural collapse in the original model, we further demonstrate that adjusting only the classifier can achieve negligible forget accuracy while preserving retain accuracy, and we corroborate this with experiments using classifier-only fine-tuning. Motivated by these findings, we propose MU methods based on a class-mean features (CMF) classifier, which explicitly enforces alignment between features and classifiers. Experiments on standard benchmarks show that CMF-based unlearning reduces forgotten information in representations while maintaining high retain accuracy, highlighting the need for faithful representation-level evaluation of MU. %

\end{abstract}

\section{Introduction}

\begin{figure*}[t]
    \centering
    \begin{subfigure}[t]{0.95\textwidth}
        \centering
        \includegraphics[width=\linewidth]{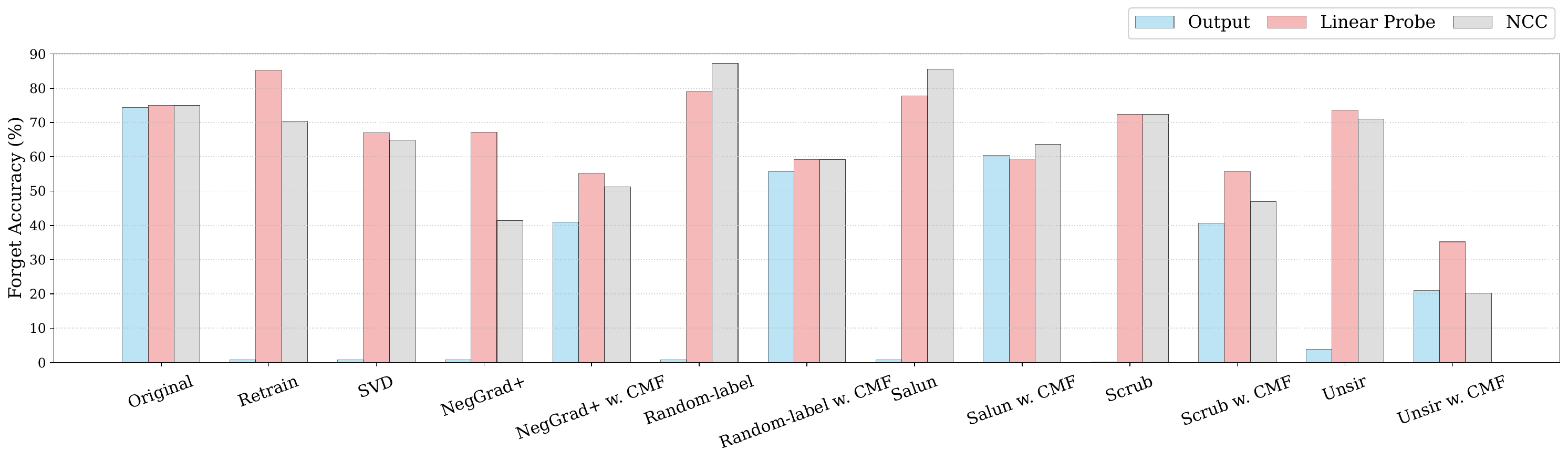}
    \end{subfigure}
\vspace{-.1in}
    \caption{Comparison of forget accuracies evaluated in the output level (blue bars) VS feature-level via linear probe (red bars) and nearest class center accuracy (NCC, grey bars) for original, retain-only retrain, and various MU methods and those with CMF classifier on cifar100 with forget 1 class scenario.
   }
    \label{fig:linear_probe_bars}
\vspace{-.15in}
\end{figure*}

\begin{figure}[t]
        \centering
        \includegraphics[width=\linewidth]{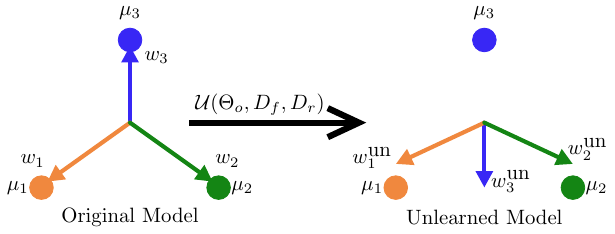}%
        \vspace{-.1in}
        \caption{Visualization of Proposition \ref{prop:nc_unlearn_inf}. We observe a misalignment caused by the unlearning methods between the class mean and the corresponding classifier weights for the {\color{blue}forget class (blue)} while the alignment is mainly preserved for the retain classes ({\color{ForestGreen}green} and {\color{orange}orange}). %
        The spheres represent the class means while the arrows represent the last layer classifier weights. %
        } %
        \vspace{-.25in}
        \label{fig:prop_1_vis}
 \end{figure}

Machine Unlearning (MU) \citep{bourtoule2021machine} aims to remove the influence of specific training samples from a model without retraining it from scratch. This capability is increasingly critical in practice due to requirements such as compliance with privacy regulations, protection of intellectual property and copyrighted content, and safety concerns arising from the retention of harmful or biased data \citep{voigt2017eu}. Beyond performance, unlearning directly impacts the trustworthiness and ethical deployment of machine learning systems \citep{jin2023forgettable} since some of the data might be tainted \citep{jagielski2018manipulating} or the data might contain harmful biases \citep{fabbrizzi2022survey}.

Due to these practical and ethical demands, MU has been extensively studied and empirically demonstrated across a range of tasks, including in classifiers \citep{choi2023towards, Golatkar_2020_CVPR, tarun2023fast} and generative models \citep{fan2023salun, li2024machine}. MU in classification focuses on forgetting individual examples or entire classes used in training, with the goal of erasing their influence while preserving performance on the remaining data. MU in generative models targets the removal of specific concepts, ensuring that the model cannot produce outputs based on them. 

Despite these encouraging results, recent studies have also revealed significant vulnerabilities of MU. For example, MU can be unstable in generative models, where forgotten concepts may re-emerge. In particular, simply fine-tuning on seemingly unrelated images can inadvertently reintroduce erased content \citep{suriyakumar2024unstable}. In addition, unintended concepts can also be affected by the erase process of a concept \citep{lu2024mace, yu2025forgetme}.  This raises a fundamental question: {\it do unlearned models truly forget, and how should we faithfully assess their performance?}

To investigate these questions, 
we focus on forgetting entire classes in the context of image classification, since the presence of label information makes it easier to assess the effectiveness of unlearning through accuracy metrics. Classification has 
served as a testbed for developing and validating MU methods, and 
heuristic approaches have been proposed in recent years \citep{choi2023towards, kurmanji2023towards, tarun2023fast, kodge2024deep}. Yet, measuring the effectiveness of MU remains a challenging problem. Current evaluations primarily rely on output-level metrics, which measure model predictions on the forget set ({\it forget accuracy}) and the retain set ({\it retain accuracy}). However, it remains unclear whether forgetting truly occurs at the level of internal feature representations or whether 
MU methods merely suppress classifier outputs while forgotten concepts persist in the representation space. In this work, we propose to assess unlearning effectiveness by studying the internal representations rather than relying solely on output-level metrics.

 \begin{figure}[t]
\centering
\begin{subfigure}[t]{0.49\linewidth}
  \centering
  \includegraphics[width=\linewidth]
  {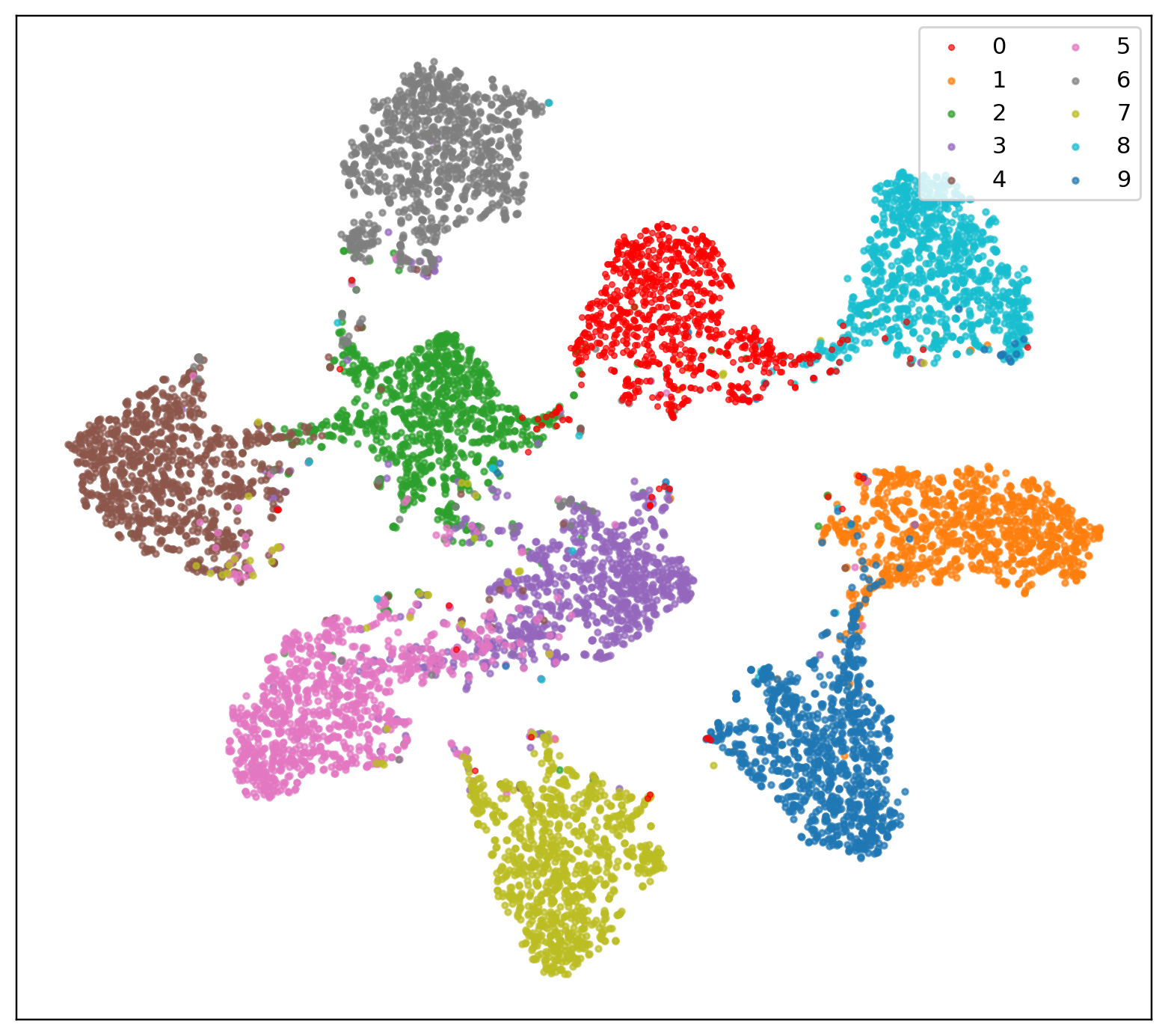}
  \caption{Original}
\end{subfigure}\hfill
\begin{subfigure}[t]{0.49\linewidth}
  \centering
  \includegraphics[width=\linewidth]{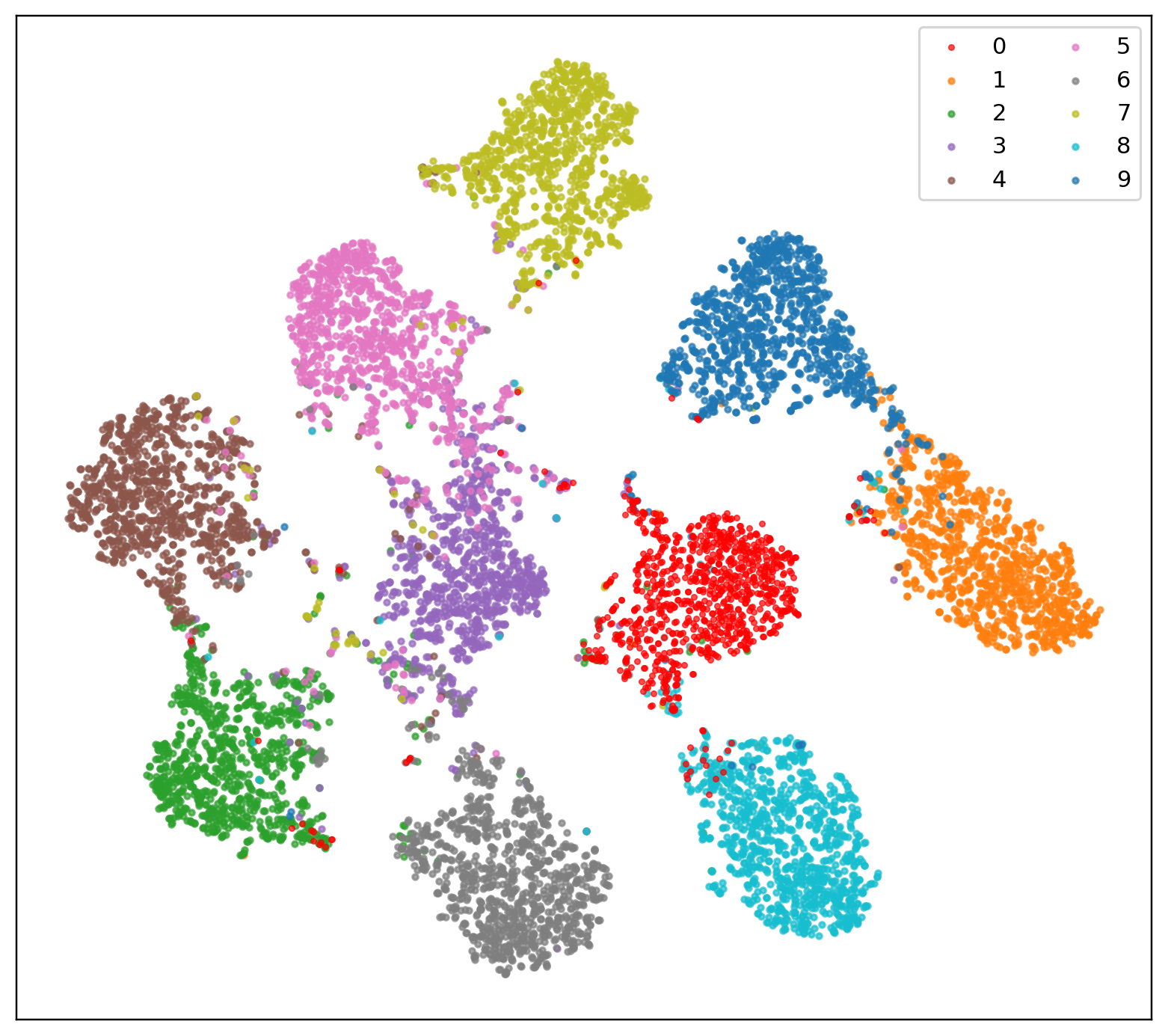}
  \caption{Random-label unlearned}
\end{subfigure}\hfill

\vspace{-.1in}
\caption{t-SNE on CIFAR-10 with (a) original model and (b) Random-label unlearned model.
The forgotten class (dark blue points) remains linearly separable in unlearned models.}
\label{fig:tsne_cifar10_baseline}
\vspace{-.2in}
\end{figure}

\vspace{-.1in}
\paragraph{Contribution} Our main contributions are summarized as follows:
\begin{itemize}
[leftmargin=*]
    \item We find that while many state-of-the-art MU methods, including Random Label, SalUn, NegGrad+, SCRUB, and UNSIR, achieve negligible forget accuracy, their hidden-layer features remain highly discriminative; see \Cref{fig:tsne_cifar10_baseline} for t-SNE plot. As shown in \Cref{fig:linear_probe_bars}, a simple linear probe on the last-layer representation can recover near-original accuracy. This reveals that even when unlearning appears successful according to standard metrics (forget and retain accuracy), current methods often fail to remove information from the hidden representation space, leaving latent traces of forgotten data that can be recovered with retraining.

    \item Inspired by the {\it neural collapse (NC)} phenomenon in deep classifiers \citep{papyan2020prevalence} (see Section~\ref{sec:background_DNN and NC} for details), we study the alignment between class-mean features and the classifier. We show that after unlearning, self-duality between the classifier and last-layer class-mean features persists for retain classes—classifiers remain almost perfectly matched with their class means—but for forget classes there exists significant {\it feature–classifier misalignment} as depicted in \Cref{fig:prop_1_vis}. Assuming NC in the original model, we further demonstrate that a simple MU method can be constructed by adjusting the classifier, yielding negligible accuracy on the forget set while preserving accuracy on the retain set. We corroborate this analysis by applying SOTA MU methods with classifier-only fine-tuning, which achieve comparable 
    forgetting and retaining accuracy at the output level, underscoring the limitations of current evaluation metrics. 

    \vspace{-.05in}
    
    \item Motivated by our analysis, we propose a representation-level unlearning framework that enforces alignment between features and the classifier, ensuring that forgetting occurs within the hidden representations as well. Specifically, we employ a class-mean features (CMF) classifier, which explicitly sets each classifier weight to the mean feature vector of its corresponding class and can be seamlessly integrated into existing MU methods. As shown in \Cref{fig:linear_probe_bars}, experiments on standard benchmarks demonstrate that CMF-based unlearning substantially reduces the retention of forgotten information at the representation level (i.e., achieving much lower forget accuracy under linear probing), while maintaining high accuracy on retained data.

\end{itemize}
\vspace{-.2in}
We make our code publicly available at \url{https://github.com/ycgao1/CMF_Unlearning}.

\section{Preliminaries, Machine Unlearning, and Its Evaluation}

\subsection{Neural Networks and Neural Collapse}
\label{sec:background_DNN and NC}
A standard deep neural network (DNN) classifier $f:\R^{d_{\text{in}}}\rightarrow \R^K$ consists of a multi-layer nonlinear compositional
feature mapping $\phi_{\vtheta}:\R^{d_{\text{in}}} \rightarrow \R^d$ with $\vtheta$ denoting the network parameters in the feature mapping and a linear classifier $(\mW,\vb)$ with $\mW = [\vw_1, \vw_2, \dots, \vw_K]^\top \in \mathbb{R}^{K \times d}$ and $\vb$, which can  expressed as 
\setlength{\belowdisplayskip}{2pt} \setlength{\belowdisplayshortskip}{2pt}
\setlength{\abovedisplayskip}{2pt} \setlength{\abovedisplayshortskip}{2pt}
\e
    f_{\mTheta}(\vx) = \mW \phi_{\vtheta}(\vx) + \vb \;\in\; \mathbb{R}^K.
\label{eq:DNN}\ee
Here $\mTheta = \{\vtheta,\mW,\vb\}$ denotes all the network parameters.
The feature extractor $\phi_{\vtheta}(\cdot)$ generates the data-dependent feature vectors in $\mathbb{R}^d$, 
while the linear classifier $(\mW,\vb)$ determines the linear decision boundary in the feature space. 

With an appropriate loss function, the parameters $\mTheta$ of the network are optimized to learn the underlying relation between an input sample $\vx$ and its corresponding target $\vy$, such that the network output $f_{\mTheta}(\vx)$ approximates $\vy$. Specifically, let $\mathcal{D}=\{(\vx_{i,k}, \vy_{i,k})\}_{i=1}^N$ be a dataset of $N$ training samples, where $\vx_{i,k}$ is the $i$-th sample from $k$-th class and $\vy_{i,k} \in \mathbb{R}^K$ is the corresponding one-hot label vector. The parameters $\mTheta$ are learned by minimizing the empirical risk over all the training samples:
\e
\mTheta_o = \argmin_{\mTheta} \sum_{k=1}^K\sum_{i=1}^{n_k} \calL(f_{\mTheta}(\vx_{i,k}),\vy_{i,k}),
\ee
where $\calL(f_{\mTheta}(\vx_{i,k}),\vy_{i,k})$ is a predefined loss function, such as the cross-entropy loss, that appropriately measures the discrepancy between
the output $f_{\mTheta}(\vx_{i,k})$ and the target $\vy_{i,k}$.

\paragraph{Neural Collapse}
Neural Collapse (\NC) \citep{papyan2020prevalence} is an intriguing phenomenon observed in the last-layer classifier and feature representations during the terminal phase of training (TPT), when the training error approaches zero. In this regime, features from the final layer align with their corresponding class mean vectors, which collectively form a simplex equiangular tight frame (ETF) structure.

More precisely, \NC comprises the following properties:
\textbf{(i) Variability collapse (\NC$_1$):} features within each class collapse to their class mean;
\textbf{(ii) Simplex ETF structure (\NC$_2$):} the class means, centered at their global mean, are not only linearly separable but are maximally separated and form a simplex ETF;
\textbf{(iii) Feature–classifier alignment (\NC$_3$):} each class mean is perfectly aligned with the corresponding last-layer linear classifier;
\textbf{(iv) Nearest class center decision rule (\NC$_4$):} the last-layer classifier becomes equivalent to a nearest class center (NCC) classifier.

To quantify \NC, let ${\vh}_{i,k} = \phi_{\vtheta}(\vx_{i,k})$ denote the learned feature representation of sample $\vx_{i,k}$ from class $k$. We define the class-wise mean features and the global mean feature as
\begin{equation}
\vmu_k := \frac{1}{N_k}\sum_{i=1}^{N_k} {\vh}_{i,k},
\qquad
\vmu_G := \frac{1}{K} \sum_{k=1}^{K} \vmu_k .
\label{eq:means}
\end{equation}

Neural collapse characterizes the convergence of features ${\vh}_{i,k}$ toward their corresponding class means $\vmu_k$, along with the alignment of the classifier weights $\vw_k$ with these means. 

In the context of unlearning, two NC measures are particularly informative.
The first is \textbf{feature–classifier alignment}, measured by
\begin{align}
{\mc NC}_3 := \left\| \frac{\vw_k}{\|\vw_k\|} - \frac{\vmu_k - \vmu_G}{\|\vmu_k - \vmu_G\|} \right\|,
\label{eq:NC3}
\end{align}
which quantifies the alignment between the normalized classifier weight $\vw_k$ and the centered class mean $\vmu_k - \vmu_G$.

The second measure is the \textbf{nearest class center (NCC) classification accuracy}, defined as
\begin{align}
\text{NCC}:= \mathbb{P}\left[ y = \arg\min_k \left\| \phi_{\vtheta}(\vx) - \vmu_k \right\|_2 \right],
\end{align}
where $\phi_{\vtheta}(\vx)$ denotes the  representation of input $\vx$, and the probability is taken over data samples $(\vx, y)$. In words, under the NCC rule, a sample $\vx$ is assigned to the closest class mean.

In this paper, we adopt \NC\ analysis as a diagnostic tool for unlearning by tracking the \NC$_3$ and NCC metrics throughout the unlearning process.

\subsection{Machine Unlearning}
\label{sec:MU}
Machine unlearning (MU) is a paradigm that aims to make a machine learning (ML)  model forget about certain data. Originally motivated by privacy concerns and the ``right to be forgotten'', the goal of machine unlearning is to allow people to opt out of their data being used in the training of ML models. Machine unlearning is also useful in contexts outside of privacy such as correcting models trained on erroneous data \citep{ali2025evaluating}, removing classes from classifier, etc.
Since training ML models from scratch may be quite expensive, machine unlearning aims to provide a sustainable solution for such cases. 

Given a dataset $\calD$, let $\mathcal{D}_{f} \subset \mathcal{D}$ denote the subset of data targeted for unlearning, referred to as the forget set. Its complement, $\mathcal{D}_{\text{\retain}} = \mathcal{D} \setminus \mathcal{D}_{\forget}$, is the portion of the dataset to be retained, referred to as the retain set. The content of forget and retain sets varies according to the application. For the class unlearning scenario, $\mathcal{D}_{\forget}$  and $\mathcal{D}_{\retain}$ denote data corresponding to the forgotten classes $C_{\forget}$ and retained classes $C_{\retain}$, respectively. $\mathcal{D}_{\forget}$ contains all the examples belonging to $C_{\forget}$ while $\mathcal{D}_{\retain}$ contains the rest of the training data. 

In the literature, retraining a fresh model $\mTheta_{\text{r}}$ solely on $\mathcal{D}_{\retain}$ is widely regarded as the \emph{gold standard} for MU \citep{bourtoule2021machine,279996}. 
Nevertheless, full retraining is both computationally expensive and time-consuming, which is impractical for large-scale models or frequent removal requests. Recent research therefore focuses on designing approximate methods that modify the original model $\mTheta_{\text{o}}$ to achieve the effect of unlearning. Formally, given training data $\calD$ and an original trained model $\mTheta_{\text{o}}$, an unlearning algorithm defines a transformation
\begin{equation}
    \mTheta_{\text{u}} \;=\; \mathcal{U}(\mTheta_{\text{o}},\,\mathcal{D}_{\forget},\,\mathcal{D}_{\retain}),
    \label{eq:general_mu}
\end{equation}
where $\mTheta_{\text{u}}$ is the unlearned model and $\mathcal{U}$ denotes the unlearning operator.

NegGrad \citep{Golatkar_2020_CVPR,choi2023towards} performs gradient ascent on the forget set, sometimes combined with a retain loss to mitigate over-forgetting.  
Random-label \citep{Golatkar_2020_CVPR} assigns random labels to the forget samples, forcing the model to fit noise and degrade its predictive ability on $\mathcal{D}_{\forget}$.  
Saliency Unlearning (SalUn) \citep{fan2023salun} improves upon this by updating only parameters most salient to the forget set, enhancing efficiency and stability.  
SCRUB \citep{kurmanji2023towards}
formulates unlearning as a selective knowledge distillation problem,
encouraging the model to diverge from the teacher on the forget set
while preserving behavior on the retain set.
UNSIR \citep{tarun2023fast} generates error-maximizing noise to impair model weights associated with the forget classes, followed by a repair step using retain data to restore overall model performance.
Beyond gradient-based strategies, SVD-based unlearning \citep{kodge2024deep} offers a gradient-free alternative by projecting feature representations onto the orthogonal complement of the forget subspace to suppress discriminative information.

\section{Evaluation of Unlearning}

Evaluating the effectiveness of machine unlearning is challenging. Currently, machine unlearning methods are mainly evaluated using output-level metrics, which focus solely on model predictions. These metrics are unsuitable for assessing unlearning in the learned representation space \citep{xu2024don} since the inner representation space has larger dimensionality. In addition to output-level metrics, some models consider relearn time as a metric for evaluating machine unlearning \citep{xue2025towards}, which refers to the number of epochs for an unlearned model to relearn and restore its performance on the forgotten data. However, this is also unsuitable, since we will explain further that performance on forgotten data can be easily retrieved.

In this section, we first describe the current output level evaluation metrics, then propose feature-level evaluation metrics for machine unlearning. In this section, we first review existing output-level evaluation metrics, and then introduce feature-level evaluation metrics for machine unlearning. Most prior work evaluates the effectiveness of unlearning by measuring the performance of the entire network at the output layer. From this perspective, we refer to such evaluations as shallow unlearning. In contrast, we also assess the effectiveness of unlearning at the feature level, which we term deep unlearning.

\begingroup
\setlength{\tabcolsep}{4pt}
\small

\begin{table*}[ht]
\centering
\caption{
Evaluation of various MU methods on three datasets for unlearning certain number of classes. For all the (unlearned) models, we report both mean forget accuracy and mean retain accuracy evaluated for the entire model (labeled as Output) and the feature mapping by linear probe and NCC classification accuracy.
}
\vspace{-.15in}
\label{tab:ce_linearprobe}
\resizebox{1.0\linewidth}{!}{
\begin{tabular}{l l cc cc cc cc cc cc}
\toprule
\multirow{3}{*}{\textbf{Method}}
  & \multirow{3}{*}{\textbf{Accuracy}}
  & \multicolumn{4}{c}{\textbf{CIFAR-10}}
  & \multicolumn{4}{c}{\textbf{CIFAR-100}}
  & \multicolumn{4}{c}{\textbf{Tiny-ImageNet}}\\
\cmidrule(lr){3-6}\cmidrule(lr){7-10}\cmidrule(lr){11-14}
  &  & \multicolumn{2}{c}{1} & \multicolumn{2}{c}{3}
     & \multicolumn{2}{c}{1} & \multicolumn{2}{c}{10}
     & \multicolumn{2}{c}{1} & \multicolumn{2}{c}{20}\\
\cmidrule(lr){3-4}\cmidrule(lr){5-6}\cmidrule(lr){7-8}\cmidrule(lr){9-10}\cmidrule(lr){11-12}\cmidrule(lr){13-14}
  &  & Retain & Forget & Retain & Forget & Retain & Forget & Retain & Forget & Retain & Forget & Retain & Forget \\
\midrule

\multirow{3}{*}{Original}
   & Output & 93.98 & 93.98 & 94.00 & 93.94 & 74.61 & 74.40 & 74.47 & 75.88  & 65.27 & 58.80 & 65.15 & 66.02 \\
 & Linear Probe & 94.02 & 94.02 & 94.03 & 94.00 & 74.53 & 75.00 & 74.38 & 75.90 & 65.10 & 60.80 & 64.97 & 66.08 \\
 & NCC & 94.00 & 93.99 & 94.03 & 93.92 & 74.40 & 75.00 & 74.28 & 75.68 & 64.65 & 60.00 & 64.56 & 65.00 \\
\midrule
\multirow{3}{*}{\shortstack{Retain-only\\Retrain}}
  & Output & 94.74 & 0.00 & 95.37 & 0.00 & 76.01 & 0.00 & 76.50 & 0.00 & 66.52 & 0.00 & 66.38 & 0.00 \\
 & Linear Probe & 90.49 & 77.35 & 85.64 & 67.33 & 74.09 & 85.20 & 69.34 & 60.94 & 65.90 & 46.40 & 65.21 & 30.36 \\
 & NCC & 93.31 & 47.06 & 91.37 & 37.07 & 73.90 & 70.40 & 70.98 & 43.18 & 63.57 & 71.20 & 59.37 & 44.30 \\

\midrule

\multirow{3}{*}{\shortstack{Retain-only\\FT}}
  & Output         & 94.26 & 47.67 & 95.24 & 52.48 & 74.08 & 53.20 & 74.53 & 64.96 & 65.26 & 37.60 & 65.60 & 50.92 \\
  & Linear Probe   & 93.94 & 89.71 & 94.14 & 90.44 & 73.89 & 73.80 & 73.97 & 74.50 & 64.44 & 56.00 & 64.05 & 63.82 \\
  & NCC   & 93.70 & 89.63 & 93.80 & 88.75 & 74.04 & 73.20 & 74.08 & 73.56 & 64.00 & 57.60 & 63.90 & 63.58 \\
  
\midrule

\multirow{3}{*}{NegGrad+}
  & Output        & 92.85 &  0.00 & 93.29 & 0.01 & 69.90 & 0.00 & 70.80 & 0.28 &  57.96 & 0.00 & 59.06 & 0.00 \\
  & Linear Probe   & 92.14 & 67.09 & 88.18 & 73.91 & 72.55 & 67.20 & 72.20 & 62.32 & 60.43 & 58.00 & 60.75 & 54.68 \\
  & NCC   & 91.33 & 52.00 & 87.28 & 59.17 & 71.58 & 41.40 & 70.08 & 41.24 & 59.01 & 37.60 & 56.45 & 36.78 \\
  
\midrule

\multirow{3}{*}{SVD}
  & Output        & 92.02 &  0.00 & 94.05 & 57.43 &71.09 &  0.00 & 73.10 & 55.56 & 64.43 & 2.00 & 65.45 & 59.02 \\
  & Linear Probe   & 90.44 & 61.80 & 92.43 & 83.58 & 73.14 & 67.00 & 73.38 & 74.08 & 63.10 & 60.40 & 63.03 & 63.64 \\
  & NCC   & 90.11 & 34.54 & 93.23 & 72.32 & 71.81 & 64.80 & 73.33 & 72.22  & 64.65 & 60.00 & 64.56 & 65.00  \\

\midrule

\multirow{3}{*}{Random-label}
  & Output        & 92.93 &  0.00 & 94.14 &  0.00 & 72.33 & 0.00 & 72.19 & 0.00 & 65.48 & 0.40 & 64.85 & 0.98 \\
  & Linear Probe   & 92.65 & 92.49 & 92.45 & 90.25 & 73.08 & 79.00 & 72.08 & 72.08 & 64.07 & 58.00 & 62.02 & 57.82\\
  & NCC   & 92.25 & 80.25 & 91.92 & 73.22 & 72.38 & 87.20 & 70.67 & 62.22& 63.20 & 69.20 & 59.42 & 42.72 \\

\midrule

\multirow{3}{*}{SalUn}
  & Output       & 93.19 &  0.00 & 94.43 &  0.00 & 72.96 & 0.00 & 72.92 & 0.06 & 65.49 & 0.40 & 64.63 & 3.40 \\
  & Linear Probe   & 93.05 & 92.57 & 92.89 & 89.63 & 73.26 & 77.80 & 72.10 & 72.66 & 64.07 & 55.60 & 61.86 & 56.52  \\

  & NCC   & 91.31 & 93.70 & 91.99 & 68.45 & 72.65 & 85.60 & 70.62 & 63.34 & 63.33 & 68.80 & 59.07 & 39.70   \\
\midrule
\multirow{3}{*}{SCRUB}
  & Output        & 91.37 & 0.00 & 93.61 &  0.00 & 73.67 & 0.20 & 74.56 & 0.32 & 65.43 & 1.20 & 65.30 & 5.48  \\
  & Linear Probe   & 91.71 & 74.79 & 91.88 & 78.23 &  73.69 & 72.40 & 73.05 & 66.92 & 64.44 & 56.00 & 62.87 & 56.90 \\

  & NCC   & 89.96 & 55.30 & 89.73 & 47.52 & 73.51 & 72.40 & 72.35 & 53.58 & 64.37 & 60.40 & 61.41 & 53.50 \\

  \midrule
\multirow{3}{*}{UNSIR} & Output & 91.84 & 0.48 & 92.87 & 0.01 & 73.77 & 3.80 & 73.58 & 14.58 & 64.66 & 0.00 & 65.46 & 9.72 \\
 & Linear Probe & 89.71 & 85.29 & 88.21 & 70.59 & 73.40 & 73.60 & 72.15 & 67.64 & 63.88 & 61.20 & 62.87 & 61.10 \\
 & NCC & 89.73 & 62.72 & 87.73 & 49.42 & 73.05 & 71.00 & 71.26 & 59.24 & 63.16 & 59.20 & 61.80 & 56.14 \\
\bottomrule
\end{tabular}
}
\vspace{-.3in}
\end{table*}

\endgroup

\subsection{Evaluation Metrics for Shallow Unlearning}
Broadly, evaluation is based on two aspects: unlearning effectiveness, measured on the forget set, and post-unlearning model utility, measured on the retain set. Accuracy-based metrics are the most widely used. Specifically, given a testing datset $\calD$, {\it forget accuracy}, denoted by $\mathrm{Acc}_{\forget}$, quantifies prediction performance on the forget set $\mathcal{D}_{\forget}$, while retain accuracy, denoted by $\mathrm{Acc}_{\retain}$, measures performance on the retain set $\mathcal{D}_{\retain}$. An effective unlearning method should substantially reduce $\mathrm{Acc}_{\forget}$ while maintaining high $\mathrm{Acc}_{\retain}$ on test data. %

Another common evaluation metric is the success of membership inference attacks (MIA) \citep{shokri2017membership}, which tests whether an adversary can determine if a sample was included in training. MIA is a useful metric to measure privacy guarantees where individuals would like their data to be excluded. In this paper we focus on class forgetting in the context of classification, and evaluating whether ``forgotten'' knowledge can be retrieved.
In this context MIA is not a relevant metric \citep{kurmanji2023towards}.

\subsection{Evaluation Metrics for Deep Unlearning}
According to (\ref{eq:DNN}), a DNN classifier consists of two components: the feature mapping $\phi_{\vtheta}$ and the linear classifier. If the model performs poorly, the source of error may lie in either the feature mapping or the classifier. Similarly, in the context of unlearning, even if the classifier is updated to suppress performance on the forget set, the feature extractor may still preserve discriminative information about the forgotten data, which means that the applied unlearning method may not have succeeded at all. Output-level metrics, therefore, risk overlooking hidden representations that continue to encode forgotten concepts as they simplify the evaluation of the unlearning methods. A robust MU method is expected to remove the influence of the forget set across the entire model, particularly within the feature mapping.

Motivated by this, we propose to evaluate MU not only at the output level but also at the representation level. Specifically, we assess the effectiveness of the feature mapping in terms of its discriminative and predictive ability. A common tool for this purpose is the linear probe: a new linear classifier is trained on top of the frozen features using the full dataset $\mathcal{D}=\mathcal{D}_{r}  \cup \mathcal{D}_{f}$, after which performance on the forget set and retain set is evaluated. We also adopt \NC\ analysis as we mentioned in Section \ref{sec:background_DNN and NC}. \NC\ describes how features $\vh_{i,k}$ of each class converge to their class mean $\vmu_k$, and classifier weights $\vw_k$ align with these means. We will be utilizing \NC$_3$ and \NC$_3$ for the unlearning to evaluate the unlearned features.

Overall, feature-level metrics (linear probe, \NC$_3$, NCC) provide a more faithful evaluation of unlearning by directly testing whether forgotten concepts survive in the internal representation, complementing output-level metrics. With these metrics, we can actually observe the true performance of the unlearning methods as the evaluation is not restricted to the accuracy metrics anymore. Unlike output-level evaluation metrics, there has not been a universally established feature-level evaluation metric. Although there have been some recent developments in terms of defining a new feature-level metric regarding the representations based on information theory \citep{jeon2024information} and Centered Kernel Alignment (CKA) \citep{kim2025we}, both metrics compare against a model trained on only the retain set $\mathcal{D}_r$ which is a significant computational expense.

\section{The Illusion of Unlearning}

\paragraph{Data, models, and unlearning setups.}
Following prior work \citep{kodge2024deep,fan2023salun}, we conduct
class-unlearning experiments on both convolutional and transformer-based
architectures.
For convolutional models, we use ResNet architectures \citep{he2016deep},
evaluating ResNet-18 on CIFAR-10 and CIFAR-100 \citep{krizhevsky2009learning},
and ResNet-50 on Tiny-ImageNet \citep{le2015tiny}.
For each dataset, we consider both single-class and multi-class unlearning
scenarios.
Specifically, the numbers of forgotten classes $|\mathcal{D}_f|$ are
$\{1,3\}$ for CIFAR-10,
$\{1,10\}$ (corresponding to two super-classes) for CIFAR-100,
and $\{1,20\}$ for Tiny-ImageNet.

For ResNet-based experiments, we evaluate several types of models:
(i) the \emph{Original} model trained on the full dataset,
(ii) the \emph{Retain-only Retrain} model trained from scratch using only
the retain dataset,
(iii) the \emph{Retain-only Fine-tuning (FT)} model obtained by fine-tuning the original model on the retain dataset,
and (iv) various unlearned models produced by applying representative machine
unlearning methods on the original model.
In particular, we consider six unlearning algorithms:
Retain-only FT, NegGrad+, Random Label, SalUn, SVD, SCRUB, and UNSIR,
as described in \Cref{sec:MU} while experimental setup is described in Appendix \ref{unlearning_scenrios}.
We also report additional results with variance across ResNet and Vision Transformers (ViT-S/16) in Appendix~\ref{app:tables}.

\subsection{Shallow Unlearning with Persistent Discriminative Features}

\begin{figure}[t]
        \centering
        \includegraphics[width=\linewidth]{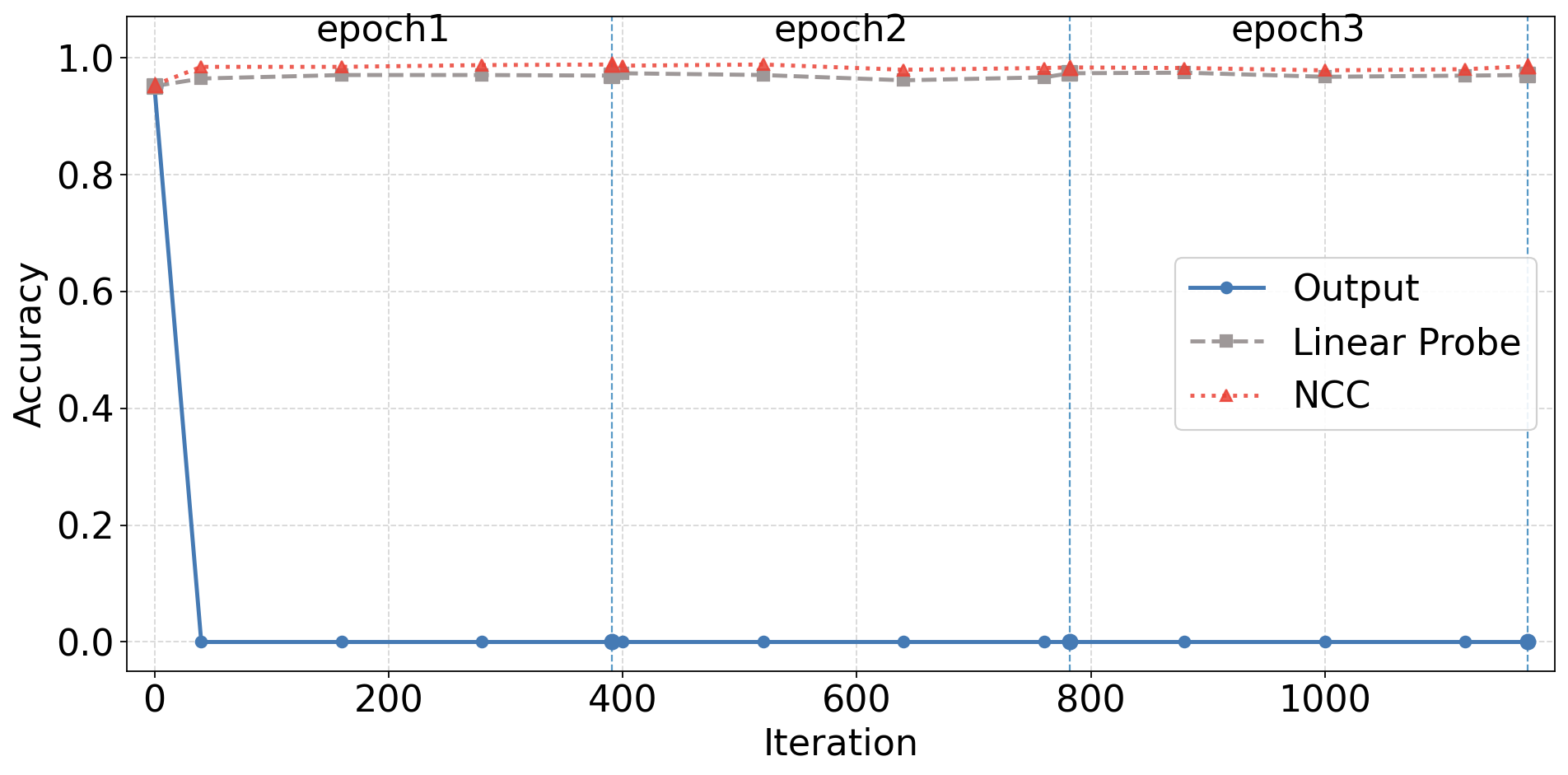}
       \vspace{-.1in} 
        \caption{
Learning curve of Random-label unlearning on CIFAR-10 when forgetting class 0 (airplane). While the output-level forget accuracy drops to zero quickly, the linear-probe and NCC accuracies remain consistently high throughout the unlearning process.
}
        
        \label{fig: random_label forget learning curve}
        \vspace{-.2in}
 \end{figure}

Table~\ref{tab:ce_linearprobe} displays the effectiveness of different MU methods in terms of mean output-level accuracies and mean feature-level accuracies (evaluated by linear probe and NCC) for both forget set (forget accuracy).

\vspace{-.1in}
\paragraph{Observation 1: Representations remain linearly separable after unlearning.} 
As shown in Table~\ref{tab:ce_linearprobe}, many unlearning algorithms appear successful when evaluated at the output level: the accuracy on the forget set drops to nearly zero, suggesting effective forgetting. 
However when the learned representations are frozen and a linear probe is trained on top of them, the forget accuracy recovers to a high level.
A similar trend is observed under the NCC accuracy indicating that the ``forgotten'' representations still cluster around their class means.
This implies that while current MU methods appear successful according to standard output-level metrics (forget and retain accuracy) they often fail to remove information from the hidden representation space, leaving latent traces of forgotten data that can be recovered by a simple classifier.

This phenomenon is further illustrated by the learning curves in
Figure~\ref{fig: random_label forget learning curve}, using the random-label unlearning method an an example. The output-level forget accuracy drops to nearly zero within the first few iterations, whereas the linear-probe and NCC accuracies remain almost unchanged throughout the entire unlearning process. This suggests that unlearning primarily suppresses the output classifier early in training, while the underlying feature representations of the forgotten class remain largely preserved and linearly separable.

Remarkably, the linear separability of forget class representations also persists in the model trained only on the retain set (denoted as Retain-only Retrain in Table~\ref{tab:ce_linearprobe}), a baseline commonly regarded as the “gold standard’’ for MU. Such a model achieves zero forget accuracy simply because it has not seen the forget data, yet linear probing still yields substantial forget accuracy. This is largely due to the transferability of DNN representations—typically viewed as one of their main advantages. In the context of unlearning, transferability makes it challenging to truly remove information at the
representation level.

Hence in this paper we primarily evaluate unlearning algorithms on their feature-level forget and retain accuracies. Since the “gold standard’’ model already contains the transferable features, this suggests a potential trade-off between output-level and feature-level retain and forget accuracies. This opens the door to methods that can outperform Retain-only Retrain by performing worse on output level metrics while being less transferable. For example, NegGrad+ achieves lower forget accuracy on CIFAR-10 when forgetting one class, albeit with a slight drop in retain accuracy. Nevertheless, as shown in Table~\ref{tab:ce_linearprobe}, such cases are rare, and Retain-only Retrain typically achieves lower feature-level forget accuracies overall. In \Cref{sec:cmf-unlearning}, we will introduce new MU methods that can remove information from hidden representations and consistently achieve lower feature-level forget accuracies.

\subsection{Feature-classifier Misalignment by \NC\ Analysis} \label{subsec:nc_analysis}

In the previous section, we have shown that current unlearning methods allow us to obtain zero forget accuracy while maintaining comparable accuracy on the retain set. However, performance on the forget set can be recovered with simple linear probing. What is the mechanism of unlearning that explains this observation? In this subsection we explain this illusion of unlearning through the lens of Neural Collapse (\NC).

In collapsed models, the last layer features of samples within the same class are concentrated around their class means (\NC$_1$), the class means form a simplex ETF (\NC$_2$), the classifier weights align with the class means (\NC$_3$), and the NCC rule at the last layer agrees with the decision of the deep network (\NC$_4$).

\begin{table*}[ht!]
\centering
\caption{ 
Full model VS classifier-only unlearning evaluated via mean output-level forget and retain accuracies.
}
\label{tab:unlearn_lastlayer}
\vspace{-.15in}
\resizebox{1.0\linewidth}{!}{
\begin{tabular}{l l cc cc cc cc}
\toprule
\multirow{3}{*}{\textbf{Method}}
  & \multirow{3}{*}{\shortstack{\textbf{Layers}\\\textbf{Finetuned}}}
  & \multicolumn{4}{c}{\textbf{CIFAR-10}}
  & \multicolumn{4}{c}{\textbf{CIFAR-100}} \\
\cmidrule(lr){3-6}\cmidrule(lr){7-10}
  &  & \multicolumn{2}{c}{1} & \multicolumn{2}{c}{3}
     & \multicolumn{2}{c}{1} & \multicolumn{2}{c}{10} \\
\cmidrule(lr){3-4}\cmidrule(lr){5-6}\cmidrule(lr){7-8}\cmidrule(lr){9-10}
  &  & Retain & Forget & Retain & Forget & Retain & Forget & Retain & Forget \\
\midrule
Original
  & Full Model             & 93.98 & 93.98 & 94.00 & 93.94 & 74.61 & 74.40 & 74.47 & 75.88 \\

\midrule
Retain-only Retrain
  & Full Model             & 94.74 & 0.00 & 95.37 & 0.00 & 76.01 & 0.00 & 76.50 & 0.00  \\

\midrule
\multirow{2}{*}{Retain-only FT}
  & Full Model             & 94.26 & 47.67 & 95.24 & 52.48 & 74.08 & 53.20 & 74.53 & 64.96\\
  & Classifier only        & 93.20 &  0.00 & 93.66 &  0.00 & 73.29 &  0.00 & 73.51 &  0.00 \\

\midrule
\multirow{2}{*}{NegGrad+}
  & Full Model             & 92.85 &  0.00 & 93.29 & 0.01 & 69.90 & 0.00 & 70.80 & 0.28  \\
  & Classifier only        & 93.08 &  0.00 & 93.72 &  0.00 & 73.28 &  0.00 & 66.85 &  0.00 \\

\midrule
\multirow{2}{*}{Random-label}
  & Full Model             & 92.93 &  0.00 & 94.14 &  0.00 & 72.33 & 0.00 & 72.19 & 0.00 \\
  & Classifier only        & 93.39 &  0.00 & 94.56 &  0.00 & 73.87 &  0.20 & 74.09 &  2.32 \\
\midrule
\multirow{2}{*}{Salun}
  & Full Model             & 93.19 &  0.00 & 94.43 &  0.00 & 72.96 & 0.00 & 72.92 & 0.06 \\
  & Classifier only        & 93.38 &  0.00 & 94.44 &  0.00 & 73.93 &  1.00 & 73.98 &  4.18 \\
\midrule
\multirow{2}{*}{SVD}
  & Full Model             & 92.02 &  0.00 & 94.05 & 57.43 &71.09 &  0.00 & 73.10 & 55.56 \\
  & Classifier only        & 93.53 &  0.01 & 93.73 & 68.45 & 73.47 &  2.60 & 74.13 & 50.56 \\
  \midrule
\multirow{2}{*}{SCRUB}
  & Full Model            & 91.37 & 0.00 & 93.61 &  0.00 & 73.67 & 0.20 & 74.56 & 0.32 \\
  & Classifier only        & 93.12 &  0.00 & 93.68 & 0.00 & 74.01 &  5.00 & 74.01 & 0.00 \\
  \midrule
\multirow{2}{*}{UNSIR}
  & Full Model             & 91.84 & 0.48 & 92.87 & 0.01 & 73.77 & 3.80 & 73.58 & 14.58  \\
  & Classifier only        & 94.48 & 0.00 & 94.46 & 1.39 & 74.64 & 0.00 & 74.27 & 0.52  \\
\bottomrule
\end{tabular}
}
\vspace{-.2in}
\end{table*}

 \begin{figure}[t]
        \centering
        \includegraphics[width=\linewidth]{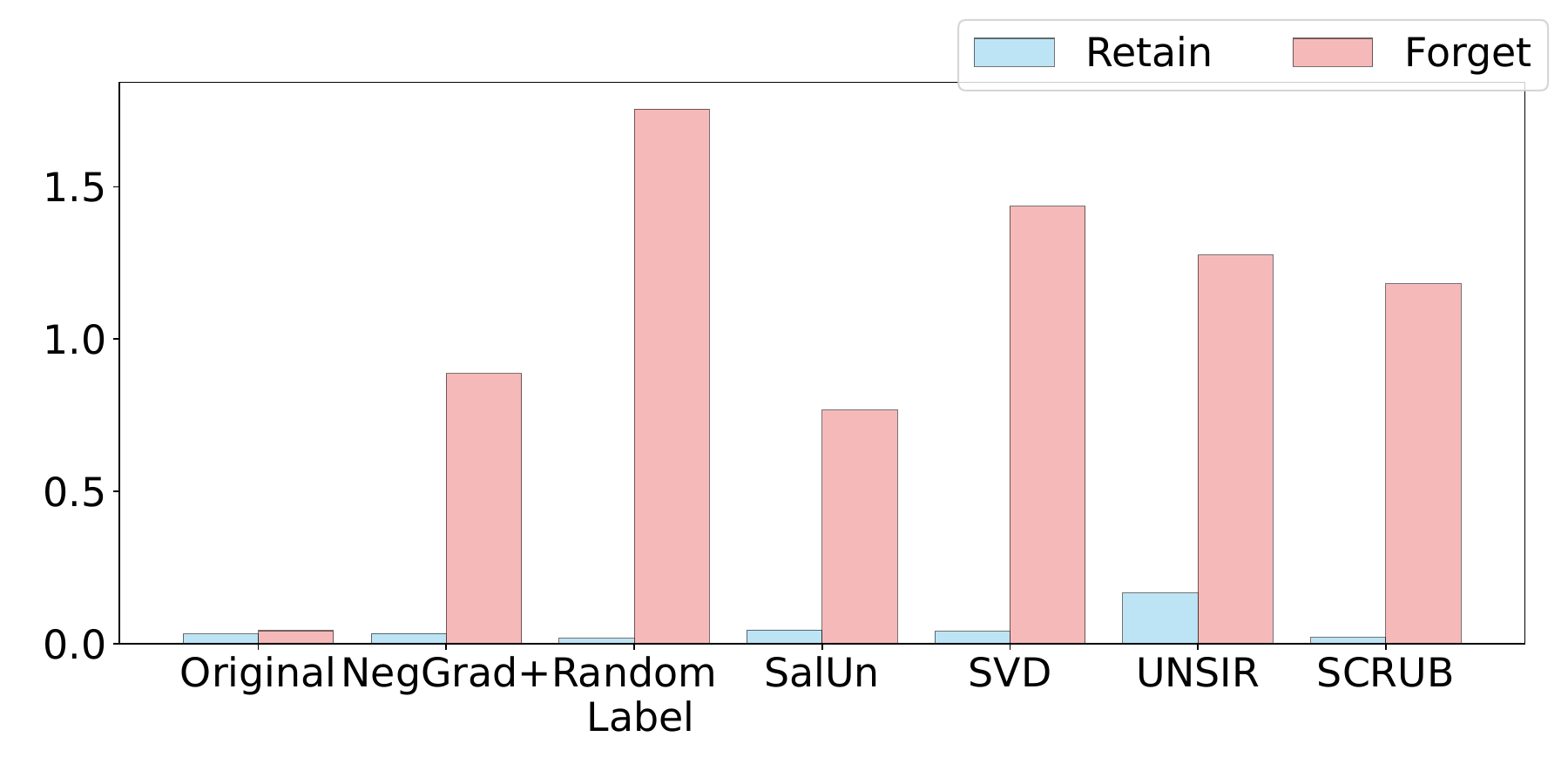}
    \vspace{-.4in}    
        \caption{Feature-classifier Alignment (\NC$_3$) for single class forgetting on CIFAR-10:  distance between class means and classifier weights for forget class is increased while the distance is preserved for retain class.}%
        \label{fig:cifarnc3}
        \vspace{-.2in}
 \end{figure}

\vspace{-.1in}
\paragraph{Observation 2: The illusion of unlearning is primarily caused by feature-classifier misalignment.}
The NCC accuracy (\NC$_4$) reported in Table~\ref{tab:ce_linearprobe} shows that the NCC classifier also achieves high forget accuracy across many unlearning methods. Since NCC classification depends only on distances between features and class means, this indicates that the feature representations of the forgotten classes remain clustered around their class means even after unlearning. In other words, the discriminative structure of the feature space is largely preserved.
We next explore how models that have their last layer features clustered around their class means still have near-zero forget accuracy. To measure this, we calculated the feature-weight alignment between the last layer features and the classifier weights (\NC$_3$). As shown in Figure~\ref{fig:cifarnc3}, the alignment for the retain classes is largely preserved after unlearning, whereas the classifier weight corresponding to the forget class becomes significantly misaligned with its class mean, which we term as {\it feature-classifier misalignment}. This shows that the model achieves zero forget accuracy by only shifting the last layer weights in an appropriate manner. 

In fact, under the assumptions of \NC\  and fixed class means, we can show that the optimal configuration of last layer weights after unlearning using NegGrad flips the forget classifier vector to be maximally misaligned with the forget class mean while minimally shifting the weights of the retain classifier vectors (a similar argument holds for Random label unlearning as well). We prove the following proposition in Appendix \ref{app:last-layer-analysis}.

\begin{prop} \label{prop:nc_unlearn_inf}

    Let $f_{\mb{\mb{W}, \vct{\theta}}} (\vx) = \mb{W} \phi_{\vct{\theta}}(\vx)$ be a classification model trained to collapse with last layer class mean features $\{\mb{\mu}_k\}_{k=1}^K$ that form a simplex equiangular tight frame, and assume that the mean features do not change during unlearning. If class $k \in [K]$ is unlearned using the NegGrad objective, then the resulting weights satisfy $\mb{w}^{\textrm{un}}_k = - (1- \gamma) \mb{\mu}_k$ for the forget class $k$, and $\mb{w}^{\textrm{un}}_i \propto \alpha \mb{\mu}_i + \beta \mb{\mu}_k$ for the retain classes $i \neq k$ where $0<\alpha, \beta, \gamma<1$.
\end{prop}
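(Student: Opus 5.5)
We will reduce everything to a problem on the last-layer weights alone. Because the class means are held fixed and the features collapse onto them (\NC$_1$), every sample of class $j$ has feature $\vmu_j$. The empirical losses on $\mathcal{D}_{\retain}$ and $\mathcal{D}_{\forget}$ therefore become functions of the $K$ logit vectors $\mW\vmu_j$. We adopt the simplex ETF normalization $\|\vmu_j\|=1$ and $\langle \vmu_i,\vmu_j\rangle=-\tfrac{1}{K-1}$ for $i\neq j$, centered so that $\vmu_G=0$. We start unlearning from the collapsed solution, which by \NC$_3$ satisfies $\vw_j=c\,\vmu_j$.

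We model NegGrad as one (or a few) gradient steps of size $\eta$ on the objective
\[
\mathcal{J}(\mW)=-\calL_{\forget}(\mW)+\lambda\,\calL_{\retain}(\mW),
\]
with cross-entropy loss. For a linear classifier, the gradient of the cross-entropy on a single collapsed sample with feature $\vmu_j$ is
\[
\nabla_{\vw_i}=(p_{j,i}-\mathbb{1}[i=j])\,\vmu_j,
\qquad p_{j,\cdot}=\mathrm{softmax}(\mW\vmu_j).
\]
Using ETF symmetry, we compute these softmax probabilities explicitly. For every $j$, the correct class gets some common value $p$, and each other class gets $q=(1-p)/(K-1)$.

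From this we obtain the update of each weight vector in closed form as a combination of class means.
\begin{itemize}
\item For the forget class $k$, gradient ascent contributes $-\eta(1-p)\vmu_k$. Terms from the retain samples, $-\eta\lambda q\sum_{j\neq k}\vmu_j$, reduce back to a multiple of $+\vmu_k$ because $\sum_j\vmu_j=0$. So $\vw_k$ stays on the line through $\vmu_k$, and its coefficient shrinks and eventually turns negative.
\item For a retain class $i\neq k$, the retain term pushes $\vw_i$ toward $\vmu_i$ (again using $\sum_j\vmu_j=0$ to collapse the off-diagonal sums). The forget term adds $+\eta q\,\vmu_k$. The result is $\vw_i\in\mathrm{span}\{\vmu_i,\vmu_k\}$ with positive coefficients $\alpha,\beta$.
\end{itemize}

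Alternatively, we can argue by symmetry on the fixed point. If the loss is regularized with weight decay so that a stationary point exists, the symmetric group of ETF permutations fixing $k$ forces $\vw_k\propto\vmu_k$ and $\vw_i\in\mathrm{span}\{\vmu_i,\vmu_k\}$, since the vectors $\vmu_j$, $j\notin\{i,k\}$, enter symmetrically and sum to $-\vmu_i-\vmu_k$. The stationarity conditions then reduce to scalar equations in the two or three coefficients.

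The main obstacle is pinning down the exact constants: that the forget weight equals $-(1-\gamma)\vmu_k$ with $0<\gamma<1$, and that $0<\alpha,\beta<1$. Pure NegGrad is unbounded, since $-\calL_{\forget}$ can be driven to $-\infty$ by scaling $\vw_k\to-\infty\cdot\vmu_k$. So we must fix a normalization, either a norm constraint $\|\vw_j\|\le 1$ or a single step from $\vw_k=\vmu_k$ with step size chosen so the coefficient lands in $(-1,0)$. Our first attempt is the norm-constrained version. Under a norm constraint, maximizing forget loss subject to the retain term gives a forget coefficient in $(-1,0)$, which identifies $\gamma$ through the balance between $\lambda$ and the softmax terms. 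The retain vectors, after normalization, yield coefficients in $(0,1)$. We will then verify the sign conditions from the explicit softmax values $p>q$.
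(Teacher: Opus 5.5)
Your structural part is sound and uses the same mechanism as the paper. The paper also works on the collapsed means, invokes the symmetric logit pattern (its Lemma~\ref{lm:stat_pts}), and applies $\sum_{i\neq j,k}\vmu_i=-\vmu_j-\vmu_k$ to place $\vw_k$ on the line through $\vmu_k$ and $\vw_j$ in $\mathrm{span}\{\vmu_j,\vmu_k\}$. Your equivariance argument along a gradient trajectory started at the collapsed point is a clean substitute for that lemma. For a fixed point, however, symmetry alone does not \emph{force} a stationary point of the nonconvex objective $-\mathcal{L}_f+\lambda\mathcal{L}_r+\frac{\lambda_W}{2}\|\mW\|_F^2$ to be symmetric. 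You need uniqueness, which holds for instance when $\lambda_W>\tfrac12\|\vmu_k\|^2$, because the softmax Hessian has norm at most $\tfrac12$ and so $-\mathcal{L}_f$ cannot overcome the weight decay.

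The genuine gap is in the constants, and it comes from the normalization you commit to. Under $\|\vw_j\|\le 1$, restrict to $\vw_k=t\vmu_k$. The $t$-derivative of $-\mathcal{L}_f+\lambda\mathcal{L}_r$ is $(1-p_{kk})-\frac{\lambda}{K-1}q_k$, where $p_{kk}$ is the softmax mass on class $k$ at $\vmu_k$ and $q_k$ is the mass on class $k$ at a retain mean. An interior optimum would need $\lambda=(K-1)(1-p_{kk})/q_k$. That value is large precisely in the regime you care about: class $k$ is suppressed and retain means are still classified correctly, so $q_k$ is small. For ordinary $\lambda$ the derivative therefore stays positive and the constraint binds. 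The minimizer is then exactly $\vw_k=-\vmu_k$, i.e.\ $\gamma=0$, which violates $0<\gamma$; the softmax terms only fix the KKT multiplier, not the magnitude. Your single-step variant reaches $(-1,0)$ only because you tune $\eta$ to put it there, so $\gamma$ is not derived from the objective.

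The missing observation is that the weight decay you mention in your alternative already \emph{is} the normalization, and it is what the paper uses. Set the gradient of the weight-decayed NegGrad objective to zero; this is just your one-step increments balanced against $\lambda_W\mW$. It gives
\[
\lambda_W\vw_k=-\Big(1-p_{kk}-\tfrac{1}{K-1}\,q_k\Big)\vmu_k .
\]
So $\gamma$ is literally the softmax sum $p_{kk}+q_k/(K-1)$, up to the overall factor $\lambda_W^{-1}$. It is positive automatically and below $1$ whenever that sum is. The same computation for $j\neq k$ gives $\lambda_W\vw_j$ as a positive combination of $\vmu_j$ and $\vmu_k$ with softmax-valued coefficients, which is $\alpha,\beta>0$.
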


We depict this optimal configuration with the misaligned classifier vector in \Cref{fig:prop_1_vis}. This configuration of last layer weights also achieves zero output-level forget accuracy.

\begin{cor} \label{cor:nc_zero_forget}

Consider the same setting as proposition \ref{prop:nc_unlearn_inf}. Let $\hat{y}(\mb{x}) = \arg\max_{k \in [K]} \cos (\angle \mb{w}_k^\textrm{un}, \phi_{\vct{\theta}}(\mb{x}))$ be the prediction of the model where class $k$ has been unlearned. This model achieves zero forget accuracy, i.e., $\hat{y}(\mb{x}_{i,k}) \neq k$ for training samples $\mb{x}_{i,k}$ that belong to class $k$. 
\end{cor}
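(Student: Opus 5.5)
The plan is to reduce the claim to comparing cosines between fixed vectors, using the closed form from Proposition~\ref{prop:nc_unlearn_inf}.

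\textbf{Setup.} Under the collapse assumption (\NC$_1$), every training sample of the forget class satisfies $\phi_{\vct{\theta}}(\mb{x}_{i,k}) = \mb{\mu}_k$. So it suffices to show that $\cos(\angle \mb{w}^{\textrm{un}}_k, \mb{\mu}_k)$ is strictly smaller than $\cos(\angle \mb{w}^{\textrm{un}}_i, \mb{\mu}_k)$ for at least one retain class $i \neq k$. I will work with the centered simplex ETF normalization used in the proof of Proposition~\ref{prop:nc_unlearn_inf}: $\|\mb{\mu}_j\| = 1$ and $\langle \mb{\mu}_j, \mb{\mu}_l\rangle = -\tfrac{1}{K-1}$ for $j \neq l$. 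Any common scaling of the means cancels in the cosine, so this normalization loses nothing.

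\textbf{Forget class.} Proposition~\ref{prop:nc_unlearn_inf} gives $\mb{w}^{\textrm{un}}_k = -(1-\gamma)\mb{\mu}_k$ with $0<\gamma<1$. Hence $\cos(\angle \mb{w}^{\textrm{un}}_k, \mb{\mu}_k) = -1$, the minimum possible value.

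\textbf{Retain classes.} For $i\neq k$, the weight is $\mb{w}^{\textrm{un}}_i \propto \alpha\mb{\mu}_i + \beta\mb{\mu}_k$ with a positive proportionality constant, which I take from the proof of the proposition. Then
\[
\langle \alpha\mb{\mu}_i + \beta\mb{\mu}_k, \mb{\mu}_k\rangle = \beta - \frac{\alpha}{K-1}.
\]
It is enough to note that this cosine is strictly greater than $-1$. Equality would require $\mb{w}^{\textrm{un}}_i$ to be a negative multiple of $\mb{\mu}_k$. That is impossible, because $\mb{\mu}_i$ and $\mb{\mu}_k$ are linearly independent for $K\ge 3$, and $\alpha>0$ means $\mb{w}^{\textrm{un}}_i$ has a nonzero component along $\mb{\mu}_i$. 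For $K=2$ the ETF gives $\mb{\mu}_i = -\mb{\mu}_k$. Then $\mb{w}^{\textrm{un}}_i \propto (\beta-\alpha)\mb{\mu}_k$, and I would use the explicit values of $\alpha$ and $\beta$ from the proposition's proof to check that this vector is nonzero and does not point exactly along $-\mb{\mu}_k$ (or treat a tie as a non-$k$ prediction). That case deserves a brief separate remark.

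\textbf{Conclusion and main obstacle.} Since some retain class attains a cosine strictly above $-1$, the $\arg\max$ never returns $k$. Therefore $\hat{y}(\mb{x}_{i,k})\neq k$ for every forget training sample, i.e.\ zero forget accuracy. The only delicate point is confirming that the proportionality constant in $\mb{w}^{\textrm{un}}_i$ is positive and that $\mb{w}^{\textrm{un}}_i\neq 0$. I would verify both directly from the stationarity equations derived in the proof of Proposition~\ref{prop:nc_unlearn_inf}.
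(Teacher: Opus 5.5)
Your proposal is correct and follows the paper's argument. Forget samples sit at $\mb{\mu}_k$, so $\cos(\angle \mb{w}^{\textrm{un}}_k, \mb{\mu}_k) = -1$, while every retain class has cosine strictly above $-1$; hence the $\arg\max$ cannot be $k$. The paper only writes the retain cosine as $(\beta - \frac{\alpha}{K-1})/\sqrt{\alpha^2+\beta^2-\frac{2\alpha\beta}{K-1}}$ and asserts it exceeds $-1$, whereas your linear-independence (strict Cauchy--Schwarz) argument actually proves this for $K\ge 3$. That argument needs only a nonzero proportionality constant, not a positive one, and your $K=2$ caveat is an edge case the paper's proof also leaves unaddressed.
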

\begin{proof}
    Recall that in the \NC\ setting the last layer features are mapped to the fixed class means $\{\mb{\mu}_c\}_{c=1}^K$, and the class means form a simplex ETF, i.e. $\mb{M}^\top \mb{M} = \left( \mb{I}_K - \frac{1}{K} \mb{1}_K\mb{1}_K^\top \right)$ where $\mb{M}$ is a matrix with the columns set to the class means. This means that training samples in class $c$ are mapped to $\mb{\mu}_c, \forall c \in [K]$. For samples in the forget class $k$ we have
    \begin{equation*}
        \begin{split}
        \mb{w}_i^{\textrm{un} \top} \mb{\mu}_k &= (\alpha \mb{\mu}_i + \beta \mb{\mu}_k)^\top \mb{\mu}_k = \frac{-\alpha}{K-1} + \beta \\
        \implies \cos (\angle \mb{w}_c^\textrm{un}, \mb{\mu}_k) &= \frac{\frac{-\alpha}{K-1} + \beta} {\sqrt{\alpha^2 + \beta^2 - \frac{2\alpha \beta}{K-1}}} > -1, \forall c\neq k \\
        \mb{w}_k^{\textrm{un} \top} \mb{\mu}_k &= -(1-\gamma)\mb{\mu}_k^\top \mb{\mu}_k = \gamma - 1 \\
        \implies \cos (\angle \mb{w}_k^\textrm{un}, \mb{\mu}_k) &= -1
        \end{split}
    \end{equation*}
    This means that $\arg\max_{c \in [K]} \cos (\angle \mb{w}_c^\textrm{un}, \mb{\mu}_k) \neq k$, and hence the model achieves zero forget accuracy.

 \vspace{-.2in}   
\end{proof}

In order to confirm our hypothesis that last layer unlearning is sufficient to get zero forget set accuracy, we run experiments where we update only the last layer weights during unlearning and show the results in Table~\ref{tab:unlearn_lastlayer}. This leads us to our next observation.

\begingroup
\setlength{\tabcolsep}{4pt}
\small
\begin{table*}[t]
\centering
\caption{Evaluation of MU methods with CMF classifiers for unlearning certain number of classes. %
}
\label{tab:eval_resnet18}
\vspace{-0.12in}
\resizebox{1.0\linewidth}{!}{
\begin{tabular}{l l cccc cccc cccc}
\toprule
\multirow{3}{*}{\textbf{Method}} & \multirow{3}{*}{\textbf{Accuracy}} & \multicolumn{4}{c}{\textbf{CIFAR-10}} & \multicolumn{4}{c}{\textbf{CIFAR-100}} & \multicolumn{4}{c}{\textbf{Tiny-ImageNet}} \\
\cmidrule(lr){3-6}\cmidrule(lr){7-10}\cmidrule(lr){11-14}
 &  & \multicolumn{2}{c}{\textbf{1}} & \multicolumn{2}{c}{\textbf{3}} & \multicolumn{2}{c}{\textbf{1}} & \multicolumn{2}{c}{\textbf{10}} & \multicolumn{2}{c}{\textbf{1}} & \multicolumn{2}{c}{\textbf{20}}\\
\cmidrule(lr){3-4}\cmidrule(lr){5-6}\cmidrule(lr){7-8}\cmidrule(lr){9-10}\cmidrule(lr){11-12}\cmidrule(lr){13-14}
 &  & \textbf{Retain} & \textbf{Forget} & \textbf{Retain} & \textbf{Forget} & \textbf{Retain} & \textbf{Forget} & \textbf{Retain} & \textbf{Forget}& \textbf{Retain} & \textbf{Forget} & \textbf{Retain} & \textbf{Forget} \\
 \midrule
 \multirow{3}{*}{Original} & Output & 93.98 & 93.98 & 94.00 & 93.94 & 74.61 & 74.40 & 74.47 & 75.88 & 65.27 & 58.80 & 65.15 & 66.02 \\
 & Linear Probe & 94.02 & 94.02 & 94.03 & 94.00 & 74.53 & 75.00 & 74.38 & 75.90 & 65.10 & 60.80 & 64.97 & 66.08 \\
 & NCC & 94.00 & 93.99 & 94.03 & 93.92 & 74.40 & 75.00 & 74.28 & 75.68 & 64.65 & 60.00 & 64.56 & 65.00 \\
 \midrule
 \multirow{3}{*}{Retain-only Retrain} & Output & 94.74 & 0.00 & 95.37 & 0.00 & 76.01 & 0.00 & 76.50 & 0.00 & 66.52 & 0.00 & 66.38 & 0.00 \\
 & Linear Probe& 90.49 & 77.35 & 85.64 & 67.33 & 74.09 & 85.20 & 69.34 & 60.94 & 65.90 & 46.40 & 65.21 & 30.36 \\
 & NCC & 93.31 & 47.06 & 91.37 & 37.07 & 73.90 & 70.40 & 70.98 & 43.18 & 63.57 & 71.20 & 59.37 & 44.30 \\
\midrule
\multirow{3}{*}{Random-label with CMF} & Output  & 94.27 & 80.70 & 94.81 & 75.69 & 74.38 & 55.60 & 74.85 & 54.40 & 62.01 & 22.40 & 62.25 & 22.20\\
 & Linear Probe & 94.19 & 85.71 & 94.49 & 81.47 & 74.63 & 59.20 & 74.98 & 66.44 & 62.56 & 32.80 & 62.38 & 36.58 \\
 & NCC & 94.25 & 82.04 & 94.73 & 76.74 & 74.49 & 59.20 & 74.82 & 60.24 & 61.96 & 27.20 & 61.93 & 28.24 \\
\midrule
\multirow{3}{*}{Salun with CMF} & Output & 94.33 & 78.07 & 95.01 & 75.96  & 74.62 & 60.40 & 74.98 & 62.10  
& 62.62 & 30.00 & 63.14 & 32.74 \\
 & Linear Probe & 94.26 & 84.68 & 94.60 & 83.39 & 74.79 & 59.40 & 75.18 & 67.20 & 63.18 & 41.20 & 63.12 & 46.14 \\
 & NCC & 94.32 & 79.50 & 94.91 & 77.31 & 74.78 & 63.60 & 75.03 & 65.20 & 62.67 & 37.20 & 62.77 & 38.10 \\
\midrule
\multirow{3}{*}{NegGrad+ with CMF} & Output & 91.87 & 54.50 & 94.97 & 60.00 & 71.82 & 41.00 & 71.45 & 30.38 & 61.18 & 22.00 & 60.82 & 41.42\\
 & Linear Probe & 92.35 & 68.02 & 94.60 & 75.29 & 72.86 & 55.20 & 72.35 & 50.62 & 62.90 & 41.20 & 62.66 & 53.82 \\
 & NCC & 91.99 & 57.83 & 94.93 & 62.89 & 72.36 & 51.20 & 71.75 & 41.54 & 62.31 & 38.80 & 62.11 & 51.30 \\
 \midrule
\multirow{3}{*}{Scrub with CMF} & Output  & 92.51 & 33.78 & 95.37 & 35.11 & 73.86 & 40.60 & 74.27 & 35.02 & 61.64 & 27.20 & 63.31 & 47.76 \\
 & Linear Probe  & 92.48 & 60.68 & 95.26 & 62.77 & 74.03 & 55.60 & 74.18 & 58.34 & 62.13 & 36.80 & 63.76 & 54.28 \\
 & NCC & 92.48 & 35.53 & 95.34 & 40.04 & 73.87 & 47.00 & 74.12 & 42.82 & 61.66 & 34.40 & 63.28 & 50.42 \\

 \midrule
\multirow{3}{*}{UNSIR with CMF}& Output & 91.79 & 12.91 & 93.56 & 11.51 & 72.72 & 21.00 & 72.61 & 9.16 & 60.81 & 14.00 & 61.34 & 14.44 \\
 & Linear Probe & 91.63 & 31.16 & 93.01 & 28.65 & 72.91 & 35.20 & 72.51 & 20.98 & 60.96 & 26.80 & 61.27 & 23.02 \\
 & NCC & 91.87 & 9.29 & 93.79 & 8.16 & 72.67 & 20.20 & 72.48 & 9.94 & 60.63 & 26.00 & 60.80 & 16.72 \\
\bottomrule
\end{tabular}
}
\end{table*}
\vspace{-9pt}
\endgroup

\begin{figure*}[ht]
\centering

\begin{subfigure}[t]{0.19\linewidth}
  \centering
  \includegraphics[width=\linewidth]{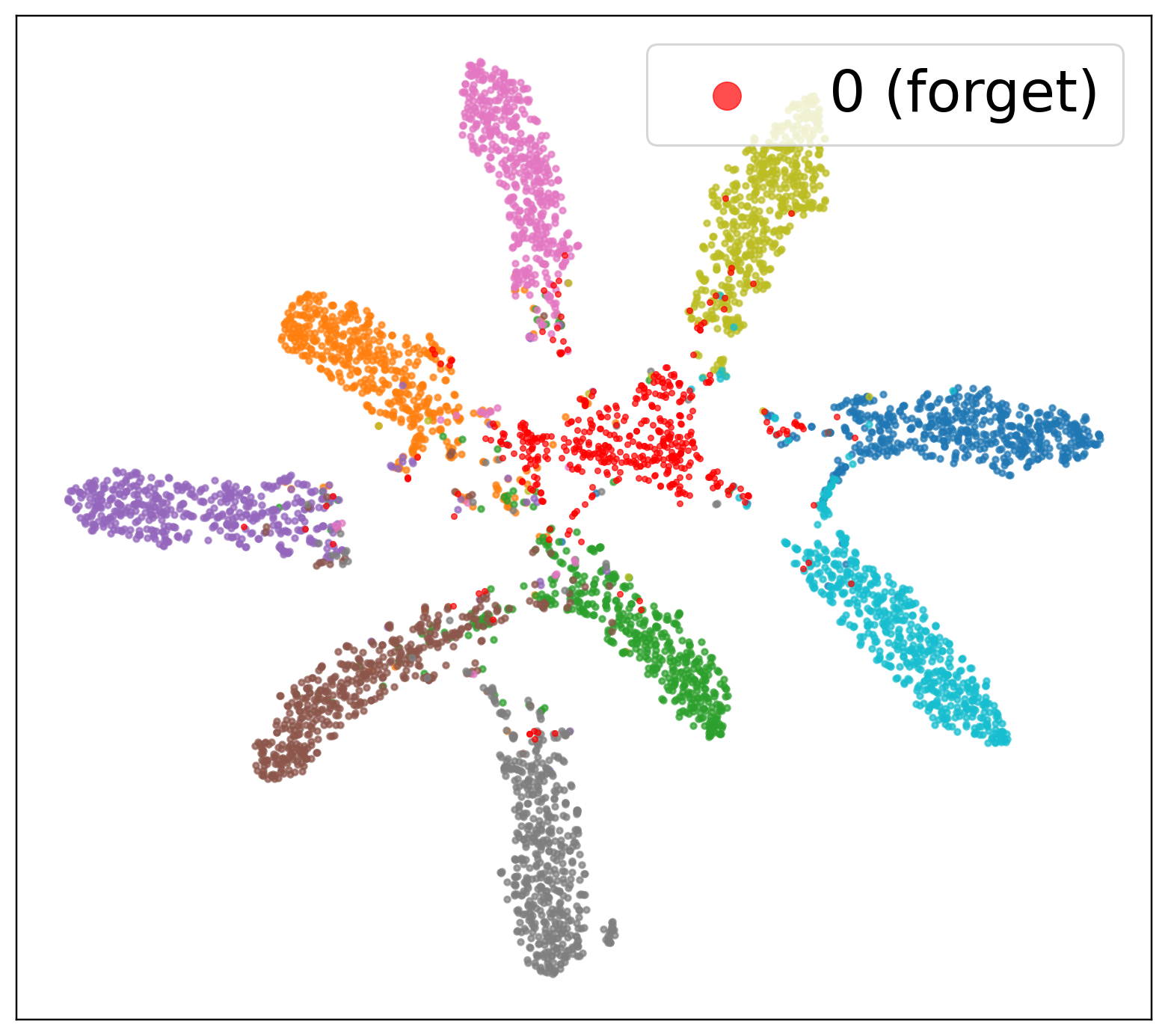}
  \caption{Random-label}
\end{subfigure}\hfill
\begin{subfigure}[t]{0.19\linewidth}
  \centering
  \includegraphics[width=\linewidth]{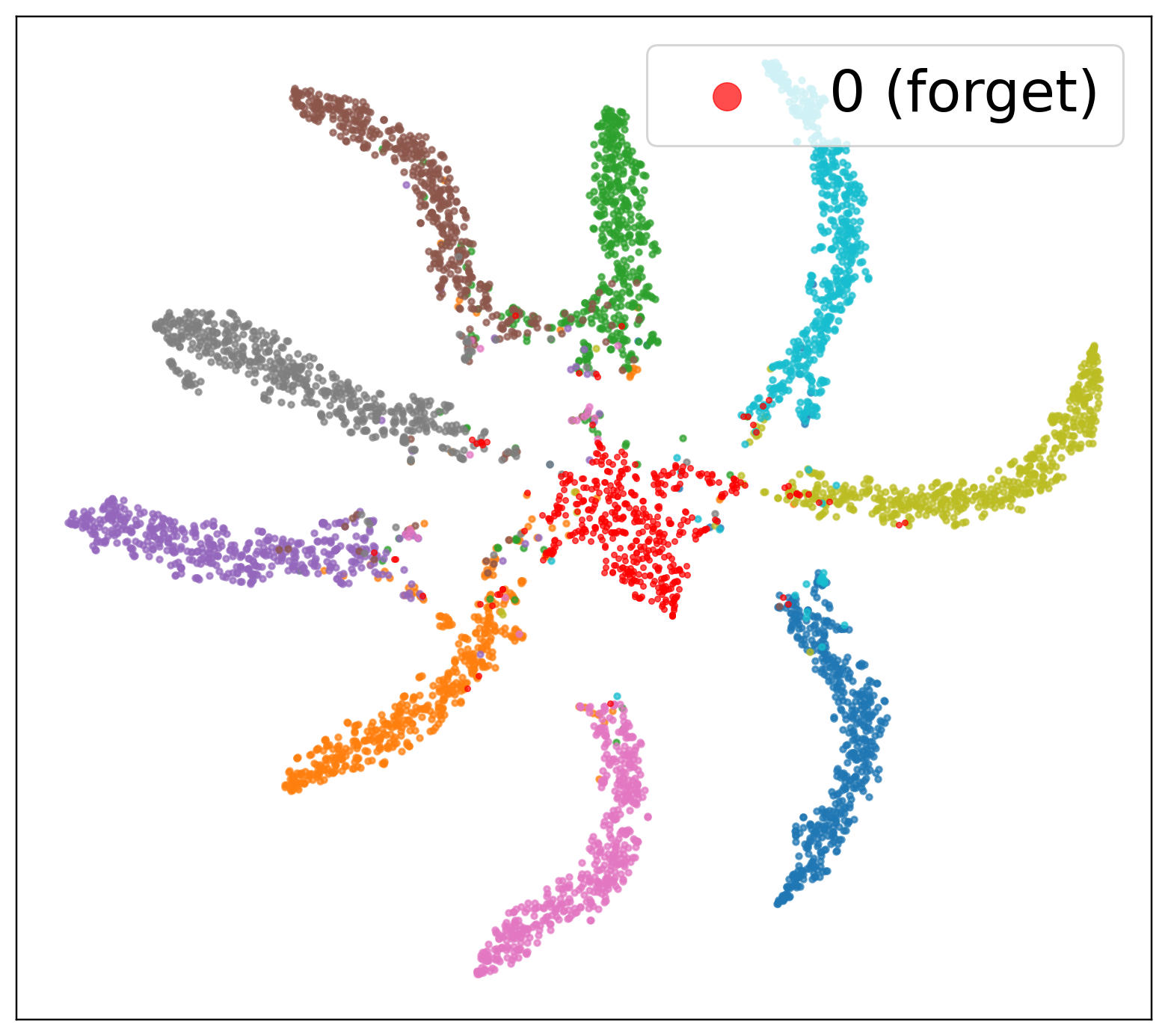}
  \caption{SalUn }
\end{subfigure}\hfill
\begin{subfigure}[t]{0.19\linewidth}
  \centering
  \includegraphics[width=\linewidth]{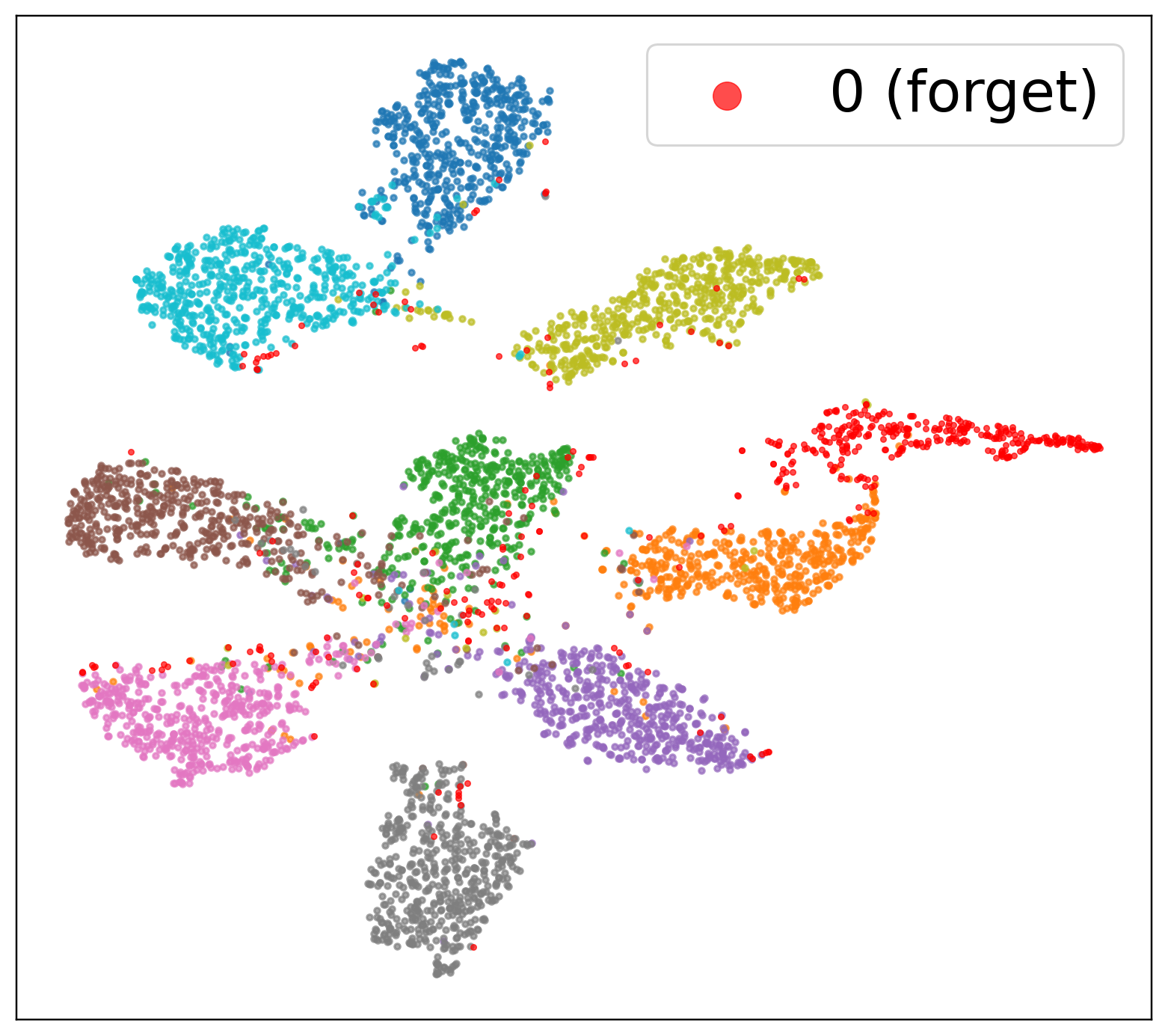}
  \caption{NegGrad+}
\end{subfigure}\hfill
\begin{subfigure}[t]{0.19\linewidth}
  \centering
  \includegraphics[width=\linewidth]{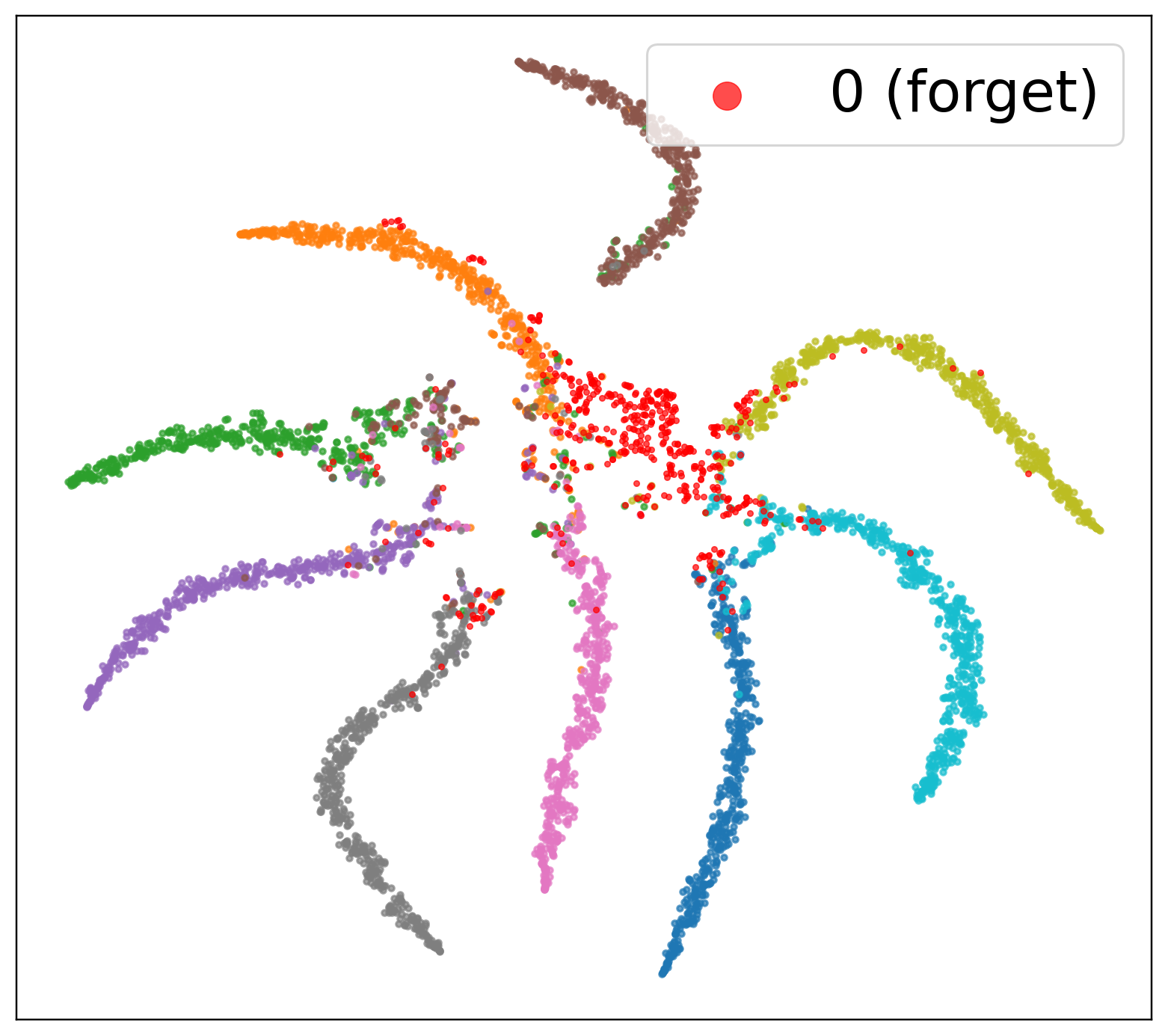}
  \caption{SCRUB}
\end{subfigure}\hfill
\begin{subfigure}[t]{0.19\linewidth}
  \centering
  \includegraphics[width=\linewidth]{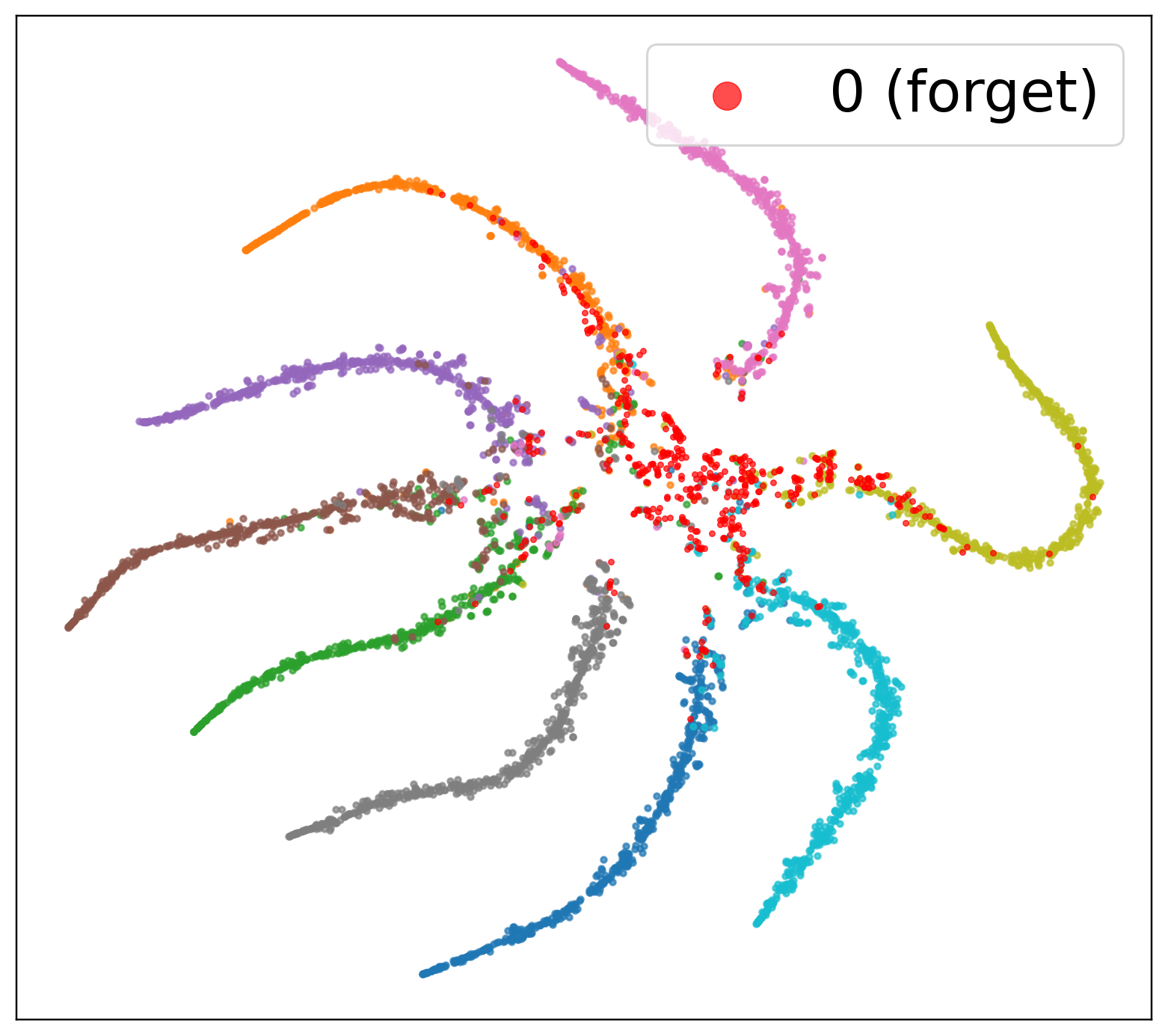}
  \caption{UNSIR}
\end{subfigure}
\vspace{-9pt}
\caption{t-SNE of features learned with CMF-based unlearning methods. The forgotten class (red points) exhibits a distribution that is markedly different from the retain classes and shows noticeable overlap with them. Here the overall feature distributions are reshaped due to an additional normalization step. 
}
\vspace{-13pt}
\label{fig:tsne_cmf_all_methods}
\end{figure*}

\paragraph{Observation 3: Classifier-only unlearning achieves comparable 
performance at the output level.}
As shown in Table~\ref{tab:unlearn_lastlayer}, updating only the classifier during unlearning achieves performance that is comparable to, or even slightly better than, full-model unlearning in terms of both forgetting and retaining accuracy across different MU methods for most scenarios. However, although models appear to forget at the output level, the feature mappings remain unchanged from the original model and continue to encode information about the forgotten classes. This finding underscores that current MU methods primarily achieve output suppression rather than representation erasure, challenging the validity of evaluating unlearning effectiveness solely through output-level metrics.

\subsection{Class-Mean-Features Unlearning}
\label{sec:cmf-unlearning}
To enable unlearning in the feature space, we propose representation-level unlearning methods that employ class-mean features (CMF) classifiers to address the classifier–feature misalignment described above. Specifically, inspired by the self-duality between features and classifiers, the CMF classifier was originally proposed in \citep{jiang2024generalized} to reduce trainable parameters by setting classifier weights to the exponential moving average of the mini-batch class-mean features during training. In our work, we adapt CMF classifiers to the unlearning setting, leveraging them to enforce alignment between classifiers and features throughout the unlearning process.

Formally, we construct the CMF classifier via 
\begin{equation}
\mW_{\mathrm{CMF}} \;=\;
\begin{bmatrix}
{\vmu}_1 & \cdots & 
{\vmu}_K
\end{bmatrix}^\top
\in\mathbb{R}^{K\times d},
\label{eq:cmf-head}
\end{equation}
where $\vmu_c$ denotes the mean feature vector for class $c$ as defined in \eqref{eq:means} and  can be updated at each epoch during training. The CMF classifier can be seamlessly integrated into existing MU methods by replacing $\mW$ with $\mW_{\mathrm{CMF}}$ in \eqref{eq:DNN} and plugging the resulting model into the general MU objective in \eqref{eq:general_mu}.

We apply the CMF classifier to multiple representative MU methods, including Random Label, SalUn, NegGrad+, SCRUB, and UNSIR, with mean results summarized in Table~\ref{tab:eval_resnet18}. 
By explicitly enforcing alignment between classifiers and features, the unlearning process becomes more challenging: CMF-based methods no longer trivially achieve zero forget accuracy at the output level. Nevertheless, model features now encode substantially less information about the forgotten classes, as indicated by the much lower feature-level forget accuracy under linear probing. Remarkably, the proposed CMF-enhanced unlearning methods consistently achieve significantly lower feature-level forget accuracy than the Retain-only Retrain baseline, while incurring only a very mild decrease in retain accuracy. This result highlights the strength of CMF in mitigating feature–classifier misalignment, ensuring that forgetting occurs not just in predictions but also within the hidden representations. We can
qualitatively observe this in the t-SNE plots of  Figure~\ref{fig:tsne_cmf_all_methods}.
Overall, our findings underscore that CMF provides a principled and effective framework for representation-level unlearning, offering a more faithful approach to removing information from deep models compared to existing baselines.

\section{Conclusion and Future Work}

In this paper we describe an {\it illusion of unlearning} where models appear to forget classes when
evaluated at the output level, while still
retaining information about the forgotten data in their hidden
representations. 
We demonstrate that training linear probes on features from unlearned models can recover performance on the forget set. Through an \NC\ analysis, we observe that the unlearning methods mainly alter the final classifier weights to be misaligned to the forget classes, while maintaining the representations of the forget classes in layers below the last layer. To mitigate this issue, we propose {\it class-mean-features unlearning}, which ties classifier weights to
class-mean features and encourages the removal of forgotten information from the representation space.

There are several promising directions for future work. First is the extension to generative diffusion and language models where our shallow unlearning observations are aligned with recent findings 
such as the fact that simple fine-tuning can inadvertently reintroduce erased concepts \citep{suriyakumar2024unstable}. 
Next, the transferability of features in deep learning poses challenges for unlearning. We would like to characterize the trade-off between removing feature-level information about the forget data while maintaining performance on the retained data.
Finally, neural collapse phenomena have
also been observed in intermediate layers~\citep{rangamani2023feature},
and future work may investigate whether extending CMF-style constraints
to deeper layers can further improve representation-level unlearning.

\section*{Acknowledgement}
YG and ZZ acknowledge support from NSF grants IIS-2312840 and IIS-2402952. We gratefully acknowledge Jinxin Zhou and Huminhao Zhu for valuable discussions.

\bibliography{references}
\bibliographystyle{apalike}

\newpage
\clearpage
\section*{Checklist}

\begin{enumerate}

  \item For all models and algorithms presented, check if you include:
  \begin{enumerate}
    \item A clear description of the mathematical setting, assumptions, algorithm, and/or model. [Yes] 
    
    See Sections~\ref{sec:background_DNN and NC} and~\ref{sec:cmf-unlearning} for the main description.  
Algorithm~\ref{alg:cmf_head}, Algorithm~\ref{alg:cmf_generic} in Appendix~\ref{app:algorithms} provide detailed pseudocode for the CMF unlearning strategies.
    \item An analysis of the properties and complexity (time, space, sample size) of any algorithm. [Yes]  

See Appendix~\ref{app:complexity} for a discussion of the time, space, and sample complexity of CMF-based unlearning.
    \item (Optional) Anonymized source code, with specification of all dependencies, including external libraries. [Yes] 
    
    An anonymized version of the source code with all dependencies (e.g., PyTorch, PyTorch Lightning, Torchmetrics, NumPy) will be released upon acceptance.
  \end{enumerate}

  \item For any theoretical claim, check if you include:
  \begin{enumerate}
    \item Statements of the full set of assumptions of all theoretical results. [Yes]

    See Section~\ref{sec:background_DNN and NC} for assumptions on Neural Collapse and class mean features, and Section~\ref{sec:cmf-unlearning} for assumptions underlying the unlearning analysis.
    
    \item Complete proofs of all theoretical results. [Yes]
    \item Clear explanations of any assumptions. [Yes]     
  \end{enumerate}

  \item For all figures and tables that present empirical results, check if you include:
  \begin{enumerate}
    \item The code, data, and instructions needed to reproduce the main experimental results (either in the supplemental material or as a URL). [Yes]
    \item All the training details (e.g., data splits, hyperparameters, how they were chosen). [Yes]
    \item A clear definition of the specific measure or statistics and error bars (e.g., with respect to the random seed after running experiments multiple times). [Yes]
    \item A description of the computing infrastructure used. (e.g., type of GPUs, internal cluster, or cloud provider). [Yes]
  \end{enumerate}

  \item If you are using existing assets (e.g., code, data, models) or curating/releasing new assets, check if you include:
  \begin{enumerate}
    \item Citations of the creator If your work uses existing assets. [Yes]
    \item The license information of the assets, if applicable. [Yes]
    \item New assets either in the supplemental material or as a URL, if applicable. [Yes]
    \item Information about consent from data providers/curators. [Yes] The datasets that
  we use are public.
    \item Discussion of sensible content if applicable, e.g., personally identifiable information or offensive content. [Not Applicable]
  \end{enumerate}

  \item If you used crowdsourcing or conducted research with human subjects, check if you include:
  \begin{enumerate}
    \item The full text of instructions given to participants and screenshots. [Not Applicable]
    \item Descriptions of potential participant risks, with links to Institutional Review Board (IRB) approvals if applicable. [Not Applicable]
    \item The estimated hourly wage paid to participants and the total amount spent on participant compensation. [Not Applicable]
  \end{enumerate}

\end{enumerate}

\clearpage
\thispagestyle{empty}

\onecolumn
\aistatstitle{An Illusion of Unlearning? Assessing Machine Unlearning Through Internal Representations: Appendix}
\label{app:appendix}
\appendix
\section{Experimental Settings}
\label{app:exp_settings}

\subsection{Experiment Environment}

All experiments are conducted on a Linux server running Ubuntu~20.04 (kernel 5.4), equipped with 8 NVIDIA RTX A5000 GPUs (24\,GB memory each), 
64 CPU cores, and 252\,GB RAM. 
The software environment uses Python~3.11, PyTorch~2.5.1, and CUDA~12.1.

\subsection{Datasets}

We evaluate class-unlearning on three standard image classification benchmarks: CIFAR-10, CIFAR-100, and Tiny-ImageNet.
CIFAR-10 contains 10 classes with 50,000 training images and 10,000 test images.
CIFAR-100 has the same number of images but with 100 classes.
Tiny-ImageNet contains 200 classes with 500 training images and 50 validation images per class.

For original model training, we use train/validation/test splits for model selection and evaluation.
For unlearning experiments, we use the standard train/test split and report performance on the test set.

\subsection{Unlearning Scenarios}\label{unlearning_scenrios}

We evaluate unlearning algorithms under both\emph{single-class} and \emph{multi-class}
forgetting scenarios on each dataset.
For every dataset and scenario, we construct multiple retain–forget dataset
combinations by selecting different class indices as the forget set.
Each unlearning algorithm is evaluated on 5–10 such combinations, depending
on the dataset and setting.
For each experiment group, we report the mean and standard deviation of the
evaluation metrics across all combinations.

\textbf{CIFAR-10.}
For CIFAR-10, we evaluate both single-class and three-class forgetting.
In the single-class setting, we sweep across all classes
($\{0\}, \{1\}, \ldots, \{9\}$).
In the three-class setting, we evaluate several representative class
combinations such as
$\{0,1,2\}$, $\{3,4,5\}$, $\{6,7,8\}$, $\{0,5,9\}$, and $\{2,4,8\}$.

\textbf{CIFAR-100.}
CIFAR-100 consists of 100 fine-grained classes grouped into 20 coarse classes,
each containing 5 fine classes.
For single-class unlearning, we evaluate several representative classes
$\{0\}, \{1\}, \{2\}, \{3\}, \{5\}$.
These classes are selected to cover different coarse classes.
Notably, classes 1 and 4 belong to the same coarse class, and therefore
we evaluate only one of them to avoid redundant experiments within the same superclass.

For multi-class unlearning, we remove 10 classes at a time.
Each such setting corresponds to the union of two coarse classes
(i.e., $2\times5$ fine classes).
We evaluate multiple such combinations to cover different regions of the label space.
The specific class sets used in our experiments include:
$\{3,15,19,21,31,38,42,43,88,97\}$,
$\{47,52,54,56,59,62,70,82,92,96\}$,
$\{5,20,22,25,39,40,84,86,87,94\}$,
$\{8,13,41,48,59,69,81,85,89,90\}$,
and
$\{1,4,30,32,55,67,72,73,91,95\}$.

\textbf{Tiny-ImageNet.}
Tiny-ImageNet contains 200 classes.
We evaluate both single-class forgetting and larger group unlearning settings.

For single-class forgetting, we evaluate several representative classes,
including $\{2\}, \{3\}, \{5\}, \{7\}, \{9\}$.

For multi-class unlearning, we remove groups of 20 classes at a time.
The class groups are constructed as contiguous ranges of class indices,
including
$\{0,\ldots,19\}$,
$\{20,\ldots,39\}$,
$\{40,\ldots,59\}$,
$\{60,\ldots,79\}$,
and
$\{80,\ldots,99\}$.
These groups cover different regions of the label space.

\subsection{Original Model Training}\label{original_model_training}

We use ResNet and ViT models in our experiments.
For the ResNet experiments, we train all models from scratch. Specifically, we use ResNet-18 on CIFAR-10 and CIFAR-100, and ResNet-50 on Tiny-ImageNet.
For the ViT experiments, we use ViT-S/16 models initialized from ImageNet-pretrained weights and then fine-tune them on each target dataset, including CIFAR-10, CIFAR-100, and Tiny-ImageNet.

During training, we apply standard data augmentation, including random cropping and random horizontal flipping.

For ViT models, input images are resized to $224\times224$ resolution.

For ResNet training, we use batch size 128 and train for up to 300 epochs with early stopping (patience 50).  For optimization, we use SGD with momentum 0.9 and weight decay $5\times 10^{-4}$.
The initial learning rate is set to $5\times10^{-2}$.
We apply a learning-rate warmup for the first 5 epochs followed by cosine learning-rate decay with a minimum learning rate of $1\times10^{-5}$.

For ViT experiments, we use ViT-S/16 models initialized from ImageNet-pretrained weights and fine-tune them on each target dataset.
The models are fine-tuned for 10 epochs with batch size 128 and learning rate $3\times10^{-4}$.
We use the same optimizer configuration as the ResNet training.

The resulting full-data model serves as the \emph{original model}, which is used as the starting point for all subsequent unlearning algorithms.

\subsection{Retrain-on-Retain Baseline (Gold Standard)}

To establish a reference for unlearning methods, we retrain models from scratch using only the retain subset. This baseline is considered the “gold standard” for machine unlearning.

The retrain models follow the same training settings described in 
Appendix~\ref{original_model_training}.  For each unlearning scenario, we construct the retain dataset according to the corresponding retain–forget split defined in Appendix~\ref{unlearning_scenrios}, and train separate retrain models as references for comparison with unlearning methods. These models are reported as \textit{Retain-only Retrain} in the experimental results.

\subsection{Unlearning Methods}

We evaluate several representative unlearning algorithms:

\begin{itemize}
  \item \textbf{Random Label (RL) \citep{Golatkar_2020_CVPR}}  
  Forget-class samples are reassigned random labels from the retain set.

  \item \textbf{SalUn \citep{fan2023salun}}  
  SalUn perturbs important model parameters associated with the forget classes based on saliency scores.

  \item \textbf{NegGrad+ (Grad-Ascent-Descent) \citep{choi2023towards}}  
  NegGrad+ adjusting the model’s output on forget data by performing gradient ascent on forget samples and gradient descent on retain samples.

  \item \textbf{SCRUB \citep{kurmanji2023towards}}  
  SCRUB formulates unlearning as a teacher--student distillation problem. 
  The original model acts as a teacher and a student model is trained to match the teacher on retain data while diverging from the teacher on forget data.

  \item \textbf{UNSIR \citep{tarun2023fast}}  
  UNSIR performs unlearning through an impair--repair process. 
  First, an error-maximizing noise matrix is generated to maximize the loss for the target forget classes. 
  The model is then updated using this noise together with a subset of retain data, followed by additional training on the retain data only to recover the model's performance.

  \item \textbf{SVD (Training-Free) \citep{kodge2024deep}}  
  A training-free method that removesforget-class information by performing singular value decomposition (SVD) on class-specific feature activations to estimate retain and forget subspaces, and suppressing the forget-discriminative components in the model parameters.
\end{itemize}

The detailed training hyperparameters for all methods are summarized in Table~\ref{tab:unlearn-hparams_resnet} and Table~\ref{tab:unlearn_hparams_vit}.

\subsection{Summary of Hyperparameters}
Tables \ref{tab:unlearn-hparams_resnet} and \ref{tab:unlearn_hparams_vit} summarize the hyperparameters across unlearning methods on ResNet and ViT models.

\subsection{Additional Results with Standard Deviations}
Tables~\ref{tab:Resnet_Normal_with_var}--\ref{tab:eval_vit_s_16_with_var}
report the mean accuracy together with the standard
deviation over multiple runs for all evaluated methods.

\section{CMF Unlearning Algorithms}
\label{app:algorithms}

\subsection{CMF-based Unlearning Framework}

In this appendix, we provide pseudocode for the core components of our
CMF-based unlearning framework.
Our goal is to make the CMF head reconstruction compatible with a wide
range of existing machine unlearning methods.

The key idea is to decouple the classifier head construction from the
underlying unlearning objective.
Instead of updating the classifier weights through standard gradient
training, we reconstruct the classifier head directly from class mean
features at the beginning of each epoch.
The head is then kept frozen while the encoder is updated according to
the objective of the chosen unlearning method.

Algorithm~\ref{alg:cmf_head} describes the CMF head reconstruction
procedure.
Given an encoder and a dataset, we compute the feature mean for each
class, center the class means, and normalize them to obtain the
classifier weights.
The resulting CMF head captures the geometric structure of the feature
space and is fixed during the subsequent optimization steps.

Algorithm~\ref{alg:cmf_generic} illustrates how the reconstructed CMF
head can be integrated into a generic gradient based unlearning
pipeline.
At the beginning of each epoch, the CMF classifier is rebuilt from the
current encoder features.

\begin{algorithm}[h]
\caption{CMF Head Reconstruction}
\label{alg:cmf_head}
\begin{algorithmic}[1]
\REQUIRE Encoder $z_\theta(\cdot)$, dataset $D=\{(\mathbf{x}_i,y_i)\}_{i=1}^{n}$ with class set $K$
\FOR{each class $k\in K$}
    \STATE Compute class-mean feature
    $\boldsymbol{\mu}_k=\frac{1}{|D_k|}\sum_{(\mathbf{x}_i,y_i)\in D_k} z_\theta(\mathbf{x}_i)$
\ENDFOR
\STATE Compute global mean $\bar{\boldsymbol{\mu}}=\frac{1}{K}\sum_{k\in K}\boldsymbol{\mu}_k$
\FOR{each class $k\in K$}
    \STATE Center and normalize
    $\mathbf{w}_k = \frac{\boldsymbol{\mu}_k-\bar{\boldsymbol{\mu}}}{\|\boldsymbol{\mu}_k-\bar{\boldsymbol{\mu}}\|}$
\ENDFOR
\STATE Form classifier head $\mathbf{W}_{\mathrm{CMF}}=[\mathbf{w}_1,\dots,\mathbf{w}_K]$
\STATE Freeze $\mathbf{W}_{\mathrm{CMF}}$
\STATE \textbf{return} $\mathbf{W}_{\mathrm{CMF}}$
\end{algorithmic}
\end{algorithm}

\begin{algorithm}[h]
\caption{Gradient-Based Unlearning with CMF}
\label{alg:cmf_generic}
\begin{algorithmic}[1]
\REQUIRE Encoder $z_\theta(\cdot)$, dataset $D=\{(\mathbf{x}_i,y_i)\}_{i=1}^{n}$, epochs $E$, unlearning objective $\mathcal{L}_U$
\STATE Reconstruct CMF head
$\mathbf{W}_{\mathrm{CMF}} \gets \textsc{CMF Head Reconstruction}(z_\theta,D)$
\FOR{$e=1$ \TO $E$}
    \FOR{mini-batch $\mathcal{B}=\{(\mathbf{x}_i,y_i)\}$}
        \STATE Compute unlearning loss
        $\mathcal{L} = \mathcal{L}_U(\mathbf{W}_{\mathrm{CMF}}z_\theta(\mathbf{x}_i), y_i)$
        \STATE Update encoder parameters $\theta$
    \ENDFOR
    \STATE Reconstruct CMF head
    $\mathbf{W}_{\mathrm{CMF}} \gets \textsc{CMF Head Reconstruction}(z_\theta,D)$
\ENDFOR
\STATE \textbf{return} encoder $z_\theta$ and $\mathbf{W}_{\mathrm{CMF}}$
\end{algorithmic}
\end{algorithm}

The loss $\mathcal{L}_U$ in Algorithm~\ref{alg:cmf_generic} corresponds
to the objective used by the underlying unlearning method.
For example, $\mathcal{L}_U$ may correspond to the random-label
cross-entropy loss in Random Label, or the ascent–descent objective used in gradient-based
unlearning methods such as NegGrad+.
Therefore, CMF head reconstruction can serve as a modular component
that augments a wide range of existing unlearning approaches.

\subsection{Complexity Analysis}
\label{app:complexity}

The CMF head reconstruction described in Algorithm~\ref{alg:cmf_head} introduces only a small computational overhead compared to standard stochastic gradient descent (SGD) training.

At the beginning of each epoch, the class-mean features are computed by a forward pass of the encoder over all samples in the dataset $D$. Let $T_f$ denote the cost of one forward pass of the encoder. Computing the class means therefore requires $O(|D|T_f)$ time.

After obtaining the class means, reconstructing the CMF classifier head requires centering and normalizing the mean vectors, which costs $O(Cd)$ where $C$ is the number of classes and $d$ is the feature
dimension.

The additional memory overhead is $O(Cd)$ for storing the class-mean vectors, which is negligible compared to the parameters of the encoder.

Therefore, the overall computational cost per epoch consists of the
standard training cost plus one small additive cost from CMF head reconstruction.
Since the primary cost during training derives from the forward/backward propagation of the encoder, the overhead introduced by CMF reconstruction is small
in practice.

\section{Analysis of Last layer Unlearning}
\label{app:last-layer-analysis}

In section \ref{subsec:nc_analysis} we measure the Neural Collapse (NC) metrics for networks that have been unlearned and observe that while the classes in the retain and forget sets remain separable, the distance between the classifier and the class mean features increases. This, combined with our Linear Probing results suggests that class unlearning in deep networks is primarily achieved by changing the alignment of the classifier without changing the features. To analyze how the classifier changes during unlearning, we derive the minima of the Neg-Grad loss under the assumption that the original model was trained to collapse, and that the class mean features do not move.

\begin{prop} \label{prop:nc_unlearn}
    Let $f_{\mb{\mb{W}, \vct{\theta}}} (\vx) = \mb{W} \phi_{\vct{\theta}}(\vx)$ be a classification model trained to collapse with last layer class mean features $\{\mb{\mu}_k\}_{k=1}^K$ that form a simplex equiangular tight frame, and assume that the mean features do not change during unlearning. If class $k \in [K]$ is unlearned using the Neg-grad objective, then the resulting weights satisfy $\mb{w}^{\textrm{un}}_k = - (1- \gamma) \mb{\mu}_k$ for the forget class $k$, and $\mb{w}^{\textrm{un}}_i \propto \alpha \mb{\mu}_i + \beta \mb{\mu}_k$ for the retain classes $i \neq k$ where $0<\alpha, \beta, \gamma<1$.
\end{prop}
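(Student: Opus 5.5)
Under neural collapse every training feature of class $c$ equals $\mb{\mu}_c$, and the means are frozen. So the Neg-Grad objective reduces to a finite-dimensional problem in $\mb{W}$ alone, evaluated on the $K$ points $\mb{\mu}_1,\dots,\mb{\mu}_K$ with $\mb{M}^\top\mb{M}=\mb{I}_K-\frac1K\mb{1}_K\mb{1}_K^\top$. Gradient ascent on a cross-entropy loss is unbounded, so I interpret ``the resulting weights'' as the stationary point of the regularized objective actually used in practice:
\begin{equation*}
\mathcal{J}(\mb{W})=\sum_{c\neq k}\calL(\mb{W}\mb{\mu}_c,\vy_c)-\lambda\,\calL(\mb{W}\mb{\mu}_k,\vy_k)+\frac{\eta}{2}\|\mb{W}\|_F^2 .
\end{equation*}
This can equivalently be viewed as a norm-constrained problem. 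The plan is to minimize $\mathcal{J}$ and read off the structure of the minimizer. This is the main modeling step, and the constants $\alpha,\beta,\gamma$ will depend on $\lambda,\eta,K$.

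\textbf{Reduction to the span of the means.} Each logit $\mb{w}_j^\top\mb{\mu}_c$ only sees the component of $\mb{w}_j$ in $\mathrm{span}\{\mb{\mu}_c\}$. The weight decay therefore forces every row of the optimizer into that span, so $\mb{W}^\top=\mb{M}\mb{A}$ for some $K\times K$ matrix $\mb{A}$. The logit matrix is then $\mb{A}^\top(\mb{I}-\frac1K\mb{1}\mb{1}^\top)$. Cross-entropy is invariant to adding a common constant to the logits, and $\mb{M}\mb{1}=\mb{0}$. Together these let us remove the $\mb{1}$-direction from each column of $\mb{A}$, which makes $\mb{A}$ identifiable.

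\textbf{Symmetry reduction.} The problem is invariant under permutations of the retain classes $\{i\neq k\}$, and the ETF geometry is preserved by them. Assuming a unique minimizer, which I would check via strict convexity of the regularized retain part on the relevant restricted parameterization, the minimizer must be equivariant under these permutations. This forces two things:
\begin{itemize}
\item $\mb{w}_k=a\mb{\mu}_k+b\sum_{i\neq k}\mb{\mu}_i=(a-b)\mb{\mu}_k$, using $\sum_c\mb{\mu}_c=0$.
\item $\mb{w}_i=\alpha'\mb{\mu}_i+\beta'\mb{\mu}_k+\delta\sum_{j\neq i,k}\mb{\mu}_j=(\alpha'-\delta)\mb{\mu}_i+(\beta'-\delta)\mb{\mu}_k$ for each retain class $i$.
\end{itemize}
This already yields the claimed forms. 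What remains is a small problem in three scalars: $s$ with $\mb{w}_k=s\mb{\mu}_k$, and $(\alpha,\beta)$ for the retain rows.

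\textbf{Sign and magnitude analysis.} I would compute the logits explicitly using $\mb{\mu}_c^\top\mb{\mu}_{c'}=1-\frac1K$ on the diagonal and $-\frac1K$ off it. I would then write the first-order conditions in $(s,\alpha,\beta)$.

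For $s$: the ascent term pushes the forget logit on $\mb{\mu}_k$ down, which drives $s<0$. Meanwhile the retain samples see $\mb{w}_k$ only through a term $-s/K$ that mildly favors negative $s$ as well. Balancing these against $\eta s$ should give $s=-(1-\gamma)$ with $\gamma\in(0,1)$, after fixing the normalization of the means.

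For $\alpha$: retain cross-entropy makes $\alpha>0$.

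For $\beta$: this is the step I expect to be the main obstacle. The forget-class loss is maximized by raising the wrong logits $\mb{w}_i^\top\mb{\mu}_k$, which pushes $\beta>0$. The retain loss penalizes $\beta$ only weakly, since $\mb{\mu}_k^\top\mb{\mu}_c=-1/K$ for $c\neq k$. To establish $0<\beta$, and $\alpha,\beta,\gamma<1$, I would use monotonicity of the softmax gradients in the scalar stationarity equations. If needed I would rescale, since only the direction of $\mb{w}_i$ is claimed ($\propto$), so the bounds $\alpha,\beta<1$ can be arranged by normalization.
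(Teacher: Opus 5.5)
\textbf{Gap 1: uniqueness of the minimizer.} Your route to the structural forms differs from the paper's. The paper sets $\nabla_{\vw_j}\calL=0$, invokes its KKT lemma to assert that the logits coincide across retained classes, and reads the coefficients off the gradient using $\sum_{c\neq j,k}\vmu_c=-\vmu_j-\vmu_k$. You instead derive the forms from equivariance of a \emph{unique} minimizer under permutations of the retained classes. The equivariance step is fine, but uniqueness carries the whole argument, and the check you propose cannot supply it. The ascent term $-\lambda\calL(\mW\vmu_k,\vy_k)$ is concave in $\mW$, so strict convexity of the regularized retain part says nothing about $\mathcal{J}$. What you can prove is this: the Hessian of log-sum-exp has operator norm at most $\frac12$, so $\mathcal{J}$ is $(\eta-\frac{\lambda}{2}\|\vmu_k\|^2)$-strongly convex. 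Your reduction is therefore valid under the extra hypothesis $\eta>\frac{\lambda}{2}(1-\frac1K)$, which must be added to the statement.

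Some such hypothesis is unavoidable. Take a symmetric stationary point and the perturbation $\delta\vw_i=\epsilon\vmu_k$, $\delta\vw_{i'}=-\epsilon\vmu_k$, with $i\neq i'$ both retained. Let $p_{kk}$ be the softmax mass the forget mean puts on class $k$, and $q=\frac{1-p_{kk}}{K-1}$. The contributions to the second derivative are:
\begin{itemize}
\item weight decay: $2\eta\|\vmu_k\|^2$;
\item retain loss: at most $\frac{K-1}{K^2}$;
\item ascent term: $-2\lambda q\|\vmu_k\|^4$.
\end{itemize}
The total is at most $\frac{K-1}{K^2}\bigl(2K\eta+1-2\lambda(1-p_{kk})\bigr)$. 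Hence a permutation-invariant minimizer must satisfy $\lambda(1-p_{kk})\le K\eta+\frac12$. In particular, one with $p_{kk}\le\frac12$ forces $\lambda\le 2K\eta+1$, so strong ascent is incompatible with a symmetric minimizer that actually unlearns. The paper's lemma has the same hole. ``$ce^x=x$ has one root'' is correct for $c<0$ (the retain block), but in the ascent block $c>0$ and two roots are possible.

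\textbf{Gap 2: the scalar step.} At a symmetric point, stationarity in $s$ is exactly $\eta s=r-\lambda(1-p_{kk})$, where $r$ is the softmax mass a retained mean puts on class $k$.
\begin{itemize}
\item Your heuristic has the retain contribution backwards. The retain loss wants the wrong-class logit $\vw_k^\top\vmu_c=-s/K$ small, so it pushes $s$ \emph{up}.
\item Thus $s<0$ needs $\lambda(1-p_{kk})>r$. This fails for small $\lambda$: as $\lambda\to0$, $s\to r/\eta>0$.
\item $|s|<1$ cannot be arranged by ``fixing the normalization.'' The means are pinned by $\mb{M}^\top\mb{M}=\mb{I}_K-\frac1K\mb{1}_K\mb{1}_K^\top$, and the forget-row claim is an equality, not a proportionality.
\item The identity only gives $\eta|s|<\max\{1,\lambda\}$. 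So you need something like $\eta\ge\max\{1,\lambda\}$, which also yields the strong convexity above.
\end{itemize}

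Conversely, $\beta>0$ is not the obstacle you expect. Let $P_a$ (resp.\ $P_b$) be the mass a retained mean puts on its own (resp.\ another retained) class. Retain-row stationarity reads $\eta\vw_i=(1+P_b-P_a)\vmu_i+(P_b+\lambda q)\vmu_k$. So the retain loss also pushes $\beta$ up, and $\alpha,\beta>0$ are immediate; this is exactly the paper's computation.

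In short, your argument can be completed only after adding explicit hypotheses. You need conditions such as $\eta\ge\max\{1,\lambda\}$ for uniqueness and $|s|<1$, and a lower bound on $\lambda$ for $s<0$. For example, $\lambda=\eta$ sufficiently large satisfies all of them.
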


\begin{proof}
    The regularized Neg-grad objective is given by:
    \begin{equation}\label{eqn:neg-grad}
        \begin{split}
            \mc{L} &= \frac{1}{K-1} \sum_{i \neq k} \textrm{log} \left( \sum_{j=1}^K \textrm{exp} (\mb{w}_j^\top \mb{\mu}_i) \right) - \mb{w}_i^\top \mb{\mu}_i \\ &+ \mb{w}_k^\top \mb{\mu}_k - \textrm{log} \left( \sum_{j=1}^K \textrm{exp} (\mb{w}_j^\top \mb{\mu}_k) \right) + \frac{\lambda_W}{2} \| \mb{W} \|_F^2
        \end{split}
    \end{equation}

Consider the gradients of the objective $\mc{L}$ wrt the weights of the forget and retain classes:

\begin{equation*}
    \frac{\partial \mathcal{L}}{\partial \mb{w}_j} = \frac{1}{K-1} \sum_{i\neq k} \frac{\exp(\mb{w}_j^T \mb{\mu}_i)}{\Lambda_i} \mb{\mu}_i - \frac{\mb{\mu}_j}{K-1} - \frac{\exp(\mb{w}_j^T \mb{\mu}_k)}{\Lambda_k} \mb{\mu}_k + \lambda_W \mb{w}_j
\end{equation*}

\begin{equation*}
    \frac{\partial \mathcal{L}}{\partial \mb{w}_k} = \frac{1}{K-1} \sum_{i\neq k} \frac{\exp(\mb{w}_k^T \mb{\mu}_i)}{\Lambda_i} \mb{\mu}_i + \mb{\mu}_k - \frac{\exp(\mb{w}_k^T \mb{\mu}_k)}{\Lambda_k} \mb{\mu}_k + \lambda_W \mb{w}_k
\end{equation*}

Here $\Lambda_i$ denotes the logsumexp of the classifier scores for mean feature $\mb{\mu}_i$. From Lemma \ref{lm:stat_pts} we can observe that at stationary points of the objective $\mc{L}$ for all $j \neq k$ and $l \neq j, k$ we have that $\mb{w}_j ^\top \mb{\mu}_l := b$ are all equal, $\mb{w}_j ^\top \mb{\mu}_j := a$ are equal, and $\mb{w}_k ^\top \mb{\mu}_j := c$ are equal. Moreover, we also have that $\mb{w}_j^\top \mb{\mu}_k := b'$ are equal for $j \neq k$ and $\mb{w}_k^\top \mb{\mu}_k = c$. This means that $\Lambda_k = \exp(c) + (K-1)\exp(b')$ and $\Lambda_i = \exp(a) + (K-2) \exp(b) + \exp(c) = \Lambda$ are equal for all $i \neq k$. Plugging this into the above gradient expressions and computing the stationary points, we obtain for $j \neq k$:

\begin{equation}
    \begin{split}
        \lambda_W \mb{w}_j^{\textrm{un}} &= \frac{\mb{\mu}_j}{K-1} - \frac{1}{K-1} \left[ \frac{\exp(b)}{\Lambda} \sum_{i\neq j,k} \mb{\mu}_i + \frac{\exp(a)}{\Lambda} \mb{\mu}_j \right] + \frac{\exp(b')}{\Lambda_k} \mb{\mu}_k \\
        &= \frac{1}{K-1} \left( 1- \frac{\exp(b) - \exp(a)}{\Lambda} \right) \mb{\mu}_j + \left( \frac{\exp(b')}{\Lambda_k} + \frac{\exp(b)}{(K-1)\Lambda} \right) \mb{\mu}_k
    \end{split}
\end{equation}
Where we have used the simplex ETF condition to obtain $\sum_{i\neq j,k} \mb{\mu}_i = -\mb{\mu}_j - \mb{\mu}_k$.

For the forget class $k$ we have:

\begin{equation}
    \begin{split}
        \lambda_W \mb{w}_k^{\textrm{un}} &= -\mb{\mu}_k + \frac{\exp(\mb{w}_k^T \mb{\mu}_k)}{\Lambda_k} \mb{\mu}_k - \frac{1}{K-1} \sum_{i\neq k} \frac{\exp(\mb{w}_k^T \mb{\mu}_i)}{\Lambda_i} \mb{\mu}_i\\
        &= -\mb{\mu}_k + \frac{\exp(c)}{\Lambda_k} \mb{\mu}_k - \frac{1}{K-1} \frac{\exp(c)}{\Lambda} \sum_{i\neq k} \mb{\mu}_i\\
        &= - \left( 1 - \frac{\exp(c)}{\Lambda_k} - \frac{1}{K-1} \frac{\exp(c)}{\Lambda} \right) \mb{\mu}_k
    \end{split}
\end{equation}
Since the two factors are $<1$, we have that $\mb{w}_k^{\textrm{un}} = - (1- \gamma) \mb{\mu}_k$ for $\gamma<1$

\end{proof}

\begin{lemma} \label{lm:stat_pts}
    Let $\mb{z}, \mb{z'} \in \mathbb{R}^K$ be any two real vectors, and $\mb{y}_i, \mb{y}_k$ be two one-hot vectors corresponding to different classes. Consider the following constrained optimization problem:
    \[ \min_{\mb{z}, \mb{z'}} \mc{L}_{CE} (\mb{z}, \mb{y}_i) - \mc{L}_{CE} (\mb{z'}, \mb{y}_k) \quad s.t. \| \mb{z} \|_2^2 \leq 1, \| \mb{z'} \|_2^2 \leq 1, z_k = z'_k\]

    The KKT points of this objective satisfy the following conditions $z_j = z_l, \forall j,l \neq k,i$ and $z'_j = z'_l, \forall j,l \neq k$. 
\end{lemma}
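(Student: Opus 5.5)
The natural route is to write down the KKT conditions explicitly and read off the symmetry coordinate by coordinate. We use $\mathcal{L}_{CE}(\mb{z},\mb{y}_i)=\log\sum_j e^{z_j}-z_i$, with gradient $\mb{p}(\mb{z})-\mb{y}_i$, where $\mb{p}$ is the softmax. Introduce multipliers $\lambda,\lambda'\ge 0$ for the two norm constraints and a free multiplier $\nu$ for the coupling $z_k=z'_k$. Stationarity of the Lagrangian then gives, coordinatewise,
\begin{equation*}
p_j(\mb{z})-\delta_{ji}+2\lambda z_j+\nu\delta_{jk}=0,\qquad
-\bigl(p_j(\mb{z}')-\delta_{jk}\bigr)+2\lambda' z'_j-\nu\delta_{jk}=0 .
\end{equation*}
The point is that for indices $j\notin\{i,k\}$ (respectively $j\neq k$) neither $\nu$ nor the label enters. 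So each such coordinate solves the same scalar equation, with one common set of constants $S=\sum_l e^{z_l}$ and $\lambda$ (respectively $S'$ and $\lambda'$).

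For $\mb{z}$, the equation is $e^{z_j}/S+2\lambda z_j=0$ for all $j\notin\{i,k\}$. First I would note that $\lambda>0$. If $\lambda=0$ we would need $e^{z_j}=0$, which is impossible, so the ball constraint is active. Then $g(t)=e^t/S+2\lambda t$ is strictly increasing, so it has exactly one root. Hence all $z_j$ with $j\neq i,k$ coincide, which proves the first claim.

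For $\mb{z}'$, the equation is $h(t):=2\lambda' t-e^t/S'=0$ for every $j\neq k$. Here $\lambda'>0$ again by the same argument. However, $h$ is concave rather than monotone: a line can meet the convex curve $e^t/S'$ in up to two points. The KKT conditions alone therefore only yield that the $z'_j$, $j\neq k$, take \emph{at most two distinct values} $t_1<t_2$. Removing this second root is the main obstacle.

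My first attempt would be a second-order argument. Suppose $m\ge 1$ coordinates sit at $t_2$ and the rest at $t_1$. Perturb along a direction that moves mass between a $t_1$-coordinate and a $t_2$-coordinate while staying tangent to the sphere, with $z'_k$ fixed. I would then check the sign of the Hessian of the Lagrangian, $-\bigl(\mathrm{diag}(\mb{p}')-\mb{p}'\mb{p}'^\top\bigr)+2\lambda'\mb{I}$, on that tangent direction. Since the paper uses the lemma only at the stationary points relevant to the Neg-grad analysis (Proposition~\ref{prop:nc_unlearn}), I would also use the permutation symmetry of the problem among the indices $j\neq k$. Averaging over permutations, or restricting to symmetric KKT points, gives the equal-value configuration directly.

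I should flag a genuine risk here. The $\mb{z}'$ part maximizes the convex function $\mathcal{L}_{CE}(\mb{z}',\mb{y}_k)$ over a ball slice. Maximizers of convex functions over such sets tend to concentrate on a single coordinate, so the two-valued configuration may in fact be a legitimate KKT point, and even the global optimum. If the second-order check confirms this, the statement for $\mb{z}'$ holds only for the symmetric branch of KKT points. The correct conclusion would then be the weaker ``at most two values'' claim together with the symmetric-branch restriction used in Proposition~\ref{prop:nc_unlearn}.
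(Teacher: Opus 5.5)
Your argument for $\mb{z}$ is correct and is the paper's argument. The paper rewrites stationarity as $c\exp(z_j)=z_j$ with $c<0$ and uses uniqueness of the root; your points that $\lambda>0$ is forced and that $g$ is strictly increasing say the same thing more carefully.

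For $\mb{z}'$ the paper only says ``a similar argument'' works. That implicitly relies on the claim that $c\exp(x)=x$ has a unique real solution, which, as you noticed, is false when $c=1/(2\lambda' S')>0$. So your suspicion of that step is justified. However, your proposal then leaves the $\mb{z}'$ claim unproved. The second-order check is never carried out, and restricting to symmetric KKT points assumes what is to be shown. Your conjecture that a two-valued configuration could be a genuine KKT point, or even optimal, is wrong. The weakened ``at most two values'' conclusion you propose is therefore unnecessary.

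The missing idea is to use feasibility to discard the second root. Any root of $2\lambda' t=e^t/S'$ is positive and satisfies $e^t/t=2\lambda' S'$. The map $t\mapsto e^t/t$ is strictly decreasing on $(0,1)$ and strictly increasing on $(1,\infty)$, with minimum $e$ at $t=1$. Hence two distinct roots must satisfy $t_1<1<t_2$, and a double root sits exactly at $t=1$. But every feasible point has $|z'_j|\le\|\mb{z}'\|_2\le 1$, so $t_2$ can never occur as a coordinate. All $z'_j$ with $j\neq k$ therefore equal the same root, and the lemma holds for every KKT point as stated.

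This also removes your convexity worry. Stationarity forces every $z'_j$ with $j\neq k$ to be strictly positive, so a configuration concentrated on one coordinate is not a KKT point. The second intersection of the line $2\lambda' t$ with the convex curve $e^t/S'$ always lies outside the unit ball.
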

\begin{proof}
    The lagrangian for our problem is:

    \begin{equation*}
        \begin{split}
            \mc{L} &= \log \left( \sum_{j=1}^K \exp(z_j) \right) - z_i + z'_k - \log \left( \sum_{j=1}^K \exp(z'_j) \right) \\  &+ \lambda_1 (\|\mb{z}\|_2^2 -1) +\lambda_2 (\|\mb{z'}\|_2^2 -1) + \lambda_3(z_k - z'_k)
        \end{split}
    \end{equation*}

    at stationary points of the Lagrangian, we have for entries of $\mb{z}$:
    \begin{equation*}
    \begin{split}        
        \frac{\partial \mc{L}}{\partial z_j} &= \frac{\exp{z_j}}{\sum_{l=1}^K \exp(z_l)} + \lambda_1 z_j = 0 \quad j \neq i,k \\
        \frac{\partial \mc{L}}{\partial z_i} &= \frac{\exp{z_i}}{\sum_{l=1}^K \exp(z_l)} -1 + \lambda_1 z_i = 0\\
        \frac{\partial \mc{L}}{\partial z_k} &= \frac{\exp{z_k}}{\sum_{l=1}^K \exp(z_l)} + (\lambda_1 - \lambda_3) z_k =0
    \end{split}
    \end{equation*}

    From the conditions on $z_j, j\neq k,i$ we obtain the following equation: $ \frac{\exp{z_j}}{- \lambda_1 \sum_{l=1}^K \exp(z_l)} = z_j$. Since the equation $c \exp(x) = x$ has only one solution in $x \in \mathbb{R}$, we get the condition that $z_j$ are all equal for $j\neq k,i$. The stationarity conditions for $z_i, z_k$ are different, and hence those values will be different.

    Using a similar argument for the stationarity conditions on $z'_j, j\neq k$ we show that $z'_j$ are all equal for $j \neq k$
\end{proof}

\newpage
\section{Tables on Experiments and Results}
\label{app:tables}

\subsection{Hyperparameter Settings}
\label{app:hyperparameter}
\begin{table*}[ht!]
\caption{Hyperparameters for ResNet-based experiments across CIFAR-10, CIFAR-100, and Tiny-ImageNet. 
SVD is training-free: no encoder updates; the batch size correspond only to data samples drawn separately from the retain and forget datasets.}
\label{tab:unlearn-hparams_resnet}
\centering
\resizebox{\textwidth}{!}{
\begin{tabular}{l l l c c c c l}
\toprule
Dataset & Model & Method & Epochs & Batch & LR & Mom. & Other Key Flags \\
\midrule

CIFAR-10 & ResNet18 & Original & 300 & 128 & 0.01 & 0.9 & cosine LR; WD=$5\times10^{-4}$ \\
CIFAR-100 & ResNet18 & Original & 300 & 128 & 0.01 & 0.9 & cosine LR; WD=$5\times10^{-4}$ \\
Tiny-ImageNet & ResNet50 & Original & 300 & 128 & 0.05 & 0.9 &cosine LR; WD=$5\times10^{-4}$\\

CIFAR-10 & ResNet18 & Retain-only Retrain & 200 & 128 & 0.01 & 0.9 & WD=$5\times10^{-4}$; val-ratio=0.1 \\
CIFAR-100 & ResNet18 &  Retain-only Retrain & 200 & 128 & 0.01 & 0.9 & WD=$5\times10^{-4}$ \\
Tiny-ImageNet & ResNet50 &  Retain-only Retrain & 150 & 256 & 0.05 & 0.9 & WD=$5\times10^{-4}$ \\

\midrule

CIFAR-10 & ResNet18 & Retain-only FT & 3 & 128 & $1\times10^{-3}$ & 0.9 & \\
CIFAR-100 & ResNet18 & Retain-only FT & 3 & 128 & $1\times10^{-3}$ & 0.9 & \\
Tiny-ImageNet & ResNet50 & Retain-only FT & 3 & 128 & $1\times10^{-3}$ & 0.9 &  \\

CIFAR-10 & ResNet18 & Random Label & 3 & 128 & $1\times10^{-4}$ & 0.9 &\\
CIFAR-100 & ResNet18 & Random Label & 3 & 128 & $3\times10^{-3}$ & -- & \\
Tiny-ImageNet & ResNet50 & Random Label & 3 & 128 & $5\times10^{-4}$ & -- &  \\

CIFAR-10 & ResNet18 & SalUN & 3 & 128 & $1\times10^{-4}$ & -- & threshold=0.5 \\
CIFAR-100 & ResNet18 & SalUN & 3 & 128 & $1\times10^{-3}$ & -- & threshold=0.5 \\
Tiny-ImageNet & ResNet50 & SalUN & 3 & 128 & $5\times10^{-4}$ & -- & threshold=0.5 \\

CIFAR-10 & ResNet18 & NegGrad+ & 3 & 128 & $1\times10^{-4}$ & -- & grad-clip=1.0 \\
CIFAR-100 & ResNet18 & NegGrad+ & 3 & 128 & $5\times10^{-3}$ & -- & grad-clip=1.0 \\
Tiny-ImageNet & ResNet50 & NegGrad+ & 3 & 128 & $5\times10^{-4}$ & -- & grad-clip=1.0 \\

CIFAR-10 & ResNet18 & SCRUB & 3 & 64 & $1\times10^{-4}$ & -- & sgda-bsz=64; msteps=2 \\
CIFAR-100 & ResNet18 & SCRUB & 3 & 64 & $1\times10^{-3}$ & -- & sgda-bsz=64; msteps=2 \\
Tiny-ImageNet & ResNet50 & SCRUB & 3 & 64 & $5\times10^{-3}$ & -- & sgda-bsz=64; msteps=2 \\

CIFAR-10 & ResNet18 & UNSIR & 3 & 128 & $5\times10^{-5}$ & -- & 3 epochs impair/repair training \\
CIFAR-100 & ResNet18 & UNSIR & 3 & 128 & $3\times10^{-5}$ & -- & 3 epochs impair/repair training \\
Tiny-ImageNet & ResNet50 & UNSIR & 3 & 128 & $2\times10^{-5}$ & -- & 3 epochs impair/repair training \\

CIFAR-10 & ResNet18 & SVD (TF) & -- & 900 & -- & -- & $\alpha_r = 1000,\alpha_f = 30$ \\
CIFAR-100 & ResNet18 & SVD (TF) & -- & 990 & -- & -- & $\alpha_r = 1000,\alpha_f = 30$  \\
Tiny-ImageNet & ResNet50 & SVD (TF) & --  & 999 & -- & -- & $\alpha_r = 30,\alpha_f = 10$  \\

\midrule

CIFAR-10 & ResNet18 & Random Label + CMF & 4 & 128 & $2\times10^{-3}$ & -- &  \\
CIFAR-100 & ResNet18 & Random Label + CMF & 4 & 128 & $2\times10^{-3}$ & -- &  \\
Tiny-ImageNet & ResNet50 & Random Label + CMF & 4 & 128 & $1\times10^{-2}$ & -- &  \\

CIFAR-10 & ResNet18 & SalUN + CMF & 4 & 128 & $2\times10^{-3}$ & -- & threshold=0.5 \\
CIFAR-100 & ResNet18 & SalUN + CMF & 4 & 128 & $2\times10^{-3}$ & -- & threshold=0.5 \\
Tiny-ImageNet & ResNet50 & SalUN + CMF & 4 & 128 & $1\times10^{-2}$ & -- & threshold=0.5 \\

CIFAR-10 & ResNet18 & NegGrad+ + CMF & 3 & 128 & $1\times10^{-4}$ & -- & grad-clip=1.0 \\
CIFAR-100 & ResNet18 & NegGrad+ + CMF & 3 & 128 & $1\times10^{-4}$ & -- & grad-clip=1.0 \\
Tiny-ImageNet & ResNet50 & NegGrad+ + CMF & 3 & 128 & $3\times10^{-5}$ & -- & grad-clip=1.0 \\

CIFAR-10 & ResNet18 & SCRUB + CMF & 3 & 64 & $5\times10^{-3}$ & -- & sgda-bsz=64; msteps=2\\
CIFAR-100 & ResNet18 & SCRUB + CMF & 3 & 64 & $5\times10^{-3}$ & -- & sgda-bsz=64; msteps=2 \\
Tiny-ImageNet & ResNet50 & SCRUB + CMF & 3 & 64 & $1\times10^{-3}$ & -- & sgda-bsz=64; msteps=2 \\

CIFAR-10 & ResNet18 & UNSIR + CMF & 3 & 128 & $5\times10^{-5}$ & -- & 3 epochs impair/repair training \\
CIFAR-100 & ResNet18 & UNSIR + CMF & 3 & 128 & $5\times10^{-5}$ & -- & 3 epochs impair/repair training \\
Tiny-ImageNet & ResNet50 & UNSIR + CMF & 3 & 128 & $2\times10^{-5}$ & -- & 3 epochs  impair/repair training \\

\bottomrule
\end{tabular}}
\label{tab:resnet-unlearn-hparams}
\end{table*}
\newpage
\begin{table*}[ht!]
\caption{Hyperparameters for ViT-based unlearning experiments across CIFAR-10, CIFAR-100, and Tiny-ImageNet.}
\centering
\resizebox{\textwidth}{!}{
\begin{tabular}{l l l c c c c l}
\toprule
Dataset & Model & Method & Epochs & Batch & LR & Mom. & Other Key Flags \\
\midrule

CIFAR-10 & ViT-S/16 & Original & 10 & 128 & $3\times10^{-4}$ & -- & pretrained backbone \\
CIFAR-100 & ViT-S/16 & Original & 10 & 128 & $3\times10^{-4}$ & -- & pretrained backbone \\
Tiny-ImageNet & ViT-S/16 & Original & 10 & 128 & $1\times10^{-4}$ & -- & pretrained backbone \\

CIFAR-10 & ViT-S/16 & Retrain & 10 & 128 & $3\times10^{-4}$ & -- & pretrained backbone \\
CIFAR-100 & ViT-S/16 & Retrain & 10 & 128 & $3\times10^{-4}$ & -- & pretrained backbone \\
Tiny-ImageNet & ViT-S/16 & Retrain & 10 & 128 & $1\times10^{-4}$ & -- & pretrained backbone \\

\midrule

CIFAR-10 & ViT-S/16 & Random Label & 3 & 128 & $3\times10^{-4}$ & -- &  \\
CIFAR-100 & ViT-S/16 & Random Label & 3 & 128 & $3\times10^{-4}$ & -- &  \\
Tiny-ImageNet & ViT-S/16 & Random Label & 10 & 128 & $1\times10^{-4}$ & -- &  \\

CIFAR-10 & ViT-S/16 & SalUN & 3 & 128 & $3\times10^{-4}$ & -- & threshold=0.5 \\
CIFAR-100 & ViT-S/16 & SalUN & 3 & 128 & $3\times10^{-4}$ & -- & threshold=0.5 \\
Tiny-ImageNet & ViT-S/16 & SalUN & 10 & 128 & $1\times10^{-4}$ & -- & threshold=0.5 \\

CIFAR-10 & ViT-S/16 & NegGrad+ & 3 & 128 & $3\times10^{-4}$ & -- & grad-clip=1.0 \\
CIFAR-100 & ViT-S/16 & NegGrad+ & 3 & 128 & $3\times10^{-4}$ & -- & grad-clip=1.0 \\
Tiny-ImageNet & ViT-S/16 & NegGrad+ & 3 & 128 & $1\times10^{-4}$ & -- & grad-clip=1.0 \\

\midrule

CIFAR-10 & ViT-S/16 & Random Label + CMF & 4 & 128 & $1\times10^{-3}$ & -- &  \\
CIFAR-100 & ViT-S/16 & Random Label + CMF & 4 & 128 & $2\times10^{-2}$ & -- &  \\
Tiny-ImageNet & ViT-S/16 & Random Label + CMF & 4 & 128 & $2\times10^{-3}$ & -- &  \\

CIFAR-10 & ViT-S/16 & SalUN + CMF & 4 & 128 & $3\times10^{-3}/2\times10^{-3}$ & -- & threshold=0.5 \\
CIFAR-100 & ViT-S/16 & SalUN + CMF & 4 & 128 & $2\times10^{-2}/1\times10^{-2}$ & -- & threshold=0.5 \\
Tiny-ImageNet & ViT-S/16 & SalUN + CMF & 4 & 128 & $1\times10^{-2}$ & -- & threshold=0.5 \\

CIFAR-10 & ViT-S/16 & NegGrad+ + CMF & 3 & 128 &  $5\times10^{-5}/5\times10^{-4}$ & -- & grad-clip=1.0 \\
CIFAR-100 & ViT-S/16 & NegGrad+ + CMF & 3 & 128 &  $5\times10^{-5}/5\times10^{-4}$ & -- & grad-clip=1.0 \\
Tiny-ImageNet & ViT-S/16 & NegGrad+ + CMF & 3 & 128 &  $1\times10^{-5}/1\times10^{-2}$ & -- & grad-clip=1.0 \\

\bottomrule
\end{tabular}}
\label{tab:unlearn_hparams_vit}
\end{table*}

\subsection{Experimental Results}

\begingroup
\setlength{\tabcolsep}{4pt}
\small

\begin{table*}[ht!]
\centering
\caption{
Evaluation of various MU methods on three datasets for unlearning certain number of classes. For all the (unlearned) models, we report both forget accuracy and retain accuracy evaluated for the entire model (labeled as Output) and the feature mapping by linear probe and nearest class center
(NCC) classification accuracy. We report the variance of the results to complement the mean performance values shown in \Cref{tab:ce_linearprobe}.
}
\vspace{-.1in}
\label{tab:Resnet_Normal_with_var}
\resizebox{1.0\linewidth}{!}{
\begin{tabular}{l l cc cc cc cc cc cc}
\toprule
\multirow{3}{*}{\textbf{Method}}
  & \multirow{3}{*}{\textbf{Accuracy}}
  & \multicolumn{4}{c}{\textbf{CIFAR-10}}
  & \multicolumn{4}{c}{\textbf{CIFAR-100}}
  & \multicolumn{4}{c}{\textbf{Tiny-ImageNet}}\\
\cmidrule(lr){3-6}\cmidrule(lr){7-10}\cmidrule(lr){11-14}
  &  & \multicolumn{2}{c}{1} & \multicolumn{2}{c}{3}
     & \multicolumn{2}{c}{1} & \multicolumn{2}{c}{10}
     & \multicolumn{2}{c}{1} & \multicolumn{2}{c}{20}\\
\cmidrule(lr){3-4}\cmidrule(lr){5-6}\cmidrule(lr){7-8}\cmidrule(lr){9-10}\cmidrule(lr){11-12}\cmidrule(lr){13-14}
  &  & Retain & Forget & Retain & Forget & Retain & Forget & Retain & Forget & Retain & Forget & Retain & Forget \\
\midrule
Original
   & Output & $93.98{\scriptstyle \pm 0.39}$ & $93.98{\scriptstyle \pm 3.48}$ & $94.00{\scriptstyle \pm 0.89}$ & $93.94{\scriptstyle \pm 2.09}$ & $74.61{\scriptstyle \pm 0.13}$ & $74.40{\scriptstyle \pm 12.86}$ & $74.47{\scriptstyle \pm 0.72}$ & $75.88{\scriptstyle \pm 6.48}$  & $65.27{\scriptstyle \pm 0.06}$ & $58.80{\scriptstyle \pm 12.54}$ & $65.15{\scriptstyle \pm 0.47}$ & $66.02{\scriptstyle \pm 4.25}$ \\
 & Linear Probe & $94.02{\scriptstyle \pm 0.39}$ & $94.02{\scriptstyle \pm 3.54}$ & $94.03{\scriptstyle \pm 0.90}$ & $94.00{\scriptstyle \pm 2.10}$ & $74.53{\scriptstyle \pm 0.12}$ & $75.00{\scriptstyle \pm 11.64}$ & $74.38{\scriptstyle \pm 0.73}$ & $75.90{\scriptstyle \pm 6.60}$ & $65.10{\scriptstyle \pm 0.05}$ & $60.80{\scriptstyle \pm 9.01}$ & $64.97{\scriptstyle \pm 0.44}$ & $66.08{\scriptstyle \pm 3.96}$ \\
 & NCC & $94.00{\scriptstyle \pm 0.39}$ & $93.99{\scriptstyle \pm 3.51}$ & $94.03{\scriptstyle \pm 0.93}$ & $93.92{\scriptstyle \pm 2.17}$ & $74.40{\scriptstyle \pm 0.11}$ & $75.00{\scriptstyle \pm 11.60}$ & $74.28{\scriptstyle \pm 0.72}$ & $75.68{\scriptstyle \pm 6.44}$ & $64.65{\scriptstyle \pm 0.04}$ & $60.00{\scriptstyle \pm 8.25}$ & $64.56{\scriptstyle \pm 0.41}$ & $65.00{\scriptstyle \pm 3.70}$ \\
\midrule
Retain-only Retrain 
  & Output & $94.74{\scriptstyle \pm 0.53}$ & $0.00{\scriptstyle \pm 0.00}$ & $95.37{\scriptstyle \pm 1.02}$ & $0.00{\scriptstyle \pm 0.00}$ & $76.01{\scriptstyle \pm 0.14}$ & $0.00{\scriptstyle \pm 0.00}$ & $76.50{\scriptstyle \pm 0.63}$ & $0.00{\scriptstyle \pm 0.00}$ & $66.52{\scriptstyle \pm 0.18}$ & $0.00{\scriptstyle \pm 0.00}$ & $66.38{\scriptstyle \pm 0.61}$ & $0.00{\scriptstyle \pm 0.00}$ \\
 & Linear Probe & $90.49{\scriptstyle \pm 1.00}$ & $77.35{\scriptstyle \pm 4.55}$ & $85.64{\scriptstyle \pm 2.69}$ & $67.33{\scriptstyle \pm 7.48}$ & $74.09{\scriptstyle \pm 1.06}$ & $85.20{\scriptstyle \pm 5.89}$ & $69.34{\scriptstyle \pm 1.42}$ & $60.94{\scriptstyle \pm 5.76}$ & $65.90{\scriptstyle \pm 0.16}$ & $46.40{\scriptstyle \pm 16.88}$ & $65.21{\scriptstyle \pm 0.48}$ & $30.36{\scriptstyle \pm 1.72}$ \\
 & NCC & $93.31{\scriptstyle \pm 0.45}$ & $47.06{\scriptstyle \pm 3.68}$ & $91.37{\scriptstyle \pm 1.86}$ & $37.07{\scriptstyle \pm 8.71}$ & $73.90{\scriptstyle \pm 1.28}$ & $70.40{\scriptstyle \pm 8.73}$ & $70.98{\scriptstyle \pm 0.68}$ & $43.18{\scriptstyle \pm 6.33}$ & $63.57{\scriptstyle \pm 1.10}$ & $71.20{\scriptstyle \pm 2.68}$ & $59.37{\scriptstyle \pm 0.56}$ & $44.30{\scriptstyle \pm 2.52}$ \\

\midrule
Retain-only FT
  & Output         & $94.26{\scriptstyle \pm 0.57}$ & $47.67{\scriptstyle \pm 29.14}$ & $95.24{\scriptstyle \pm 1.10}$ & $52.48{\scriptstyle \pm 20.15}$ & $74.08{\scriptstyle \pm 0.19}$ & $53.20{\scriptstyle \pm 18.75}$ & $74.53{\scriptstyle \pm 0.99}$ & $64.96{\scriptstyle \pm 7.84}$ & $65.26{\scriptstyle \pm 0.09}$ & $37.60{\scriptstyle \pm 9.84}$ & $65.60{\scriptstyle \pm 0.63}$ & $50.92{\scriptstyle \pm 6.92}$ \\
  & Linear Probe   & $93.94{\scriptstyle \pm 0.32}$ & $89.71{\scriptstyle \pm 5.08}$ & $94.14{\scriptstyle \pm 0.81}$ & $90.44{\scriptstyle \pm 2.40}$ & $73.89{\scriptstyle \pm 0.16}$ & $73.80{\scriptstyle \pm 10.76}$ & $73.97{\scriptstyle \pm 0.74}$ & $74.50{\scriptstyle \pm 6.31}$ & $64.44{\scriptstyle \pm 0.11}$ & $56.00{\scriptstyle \pm 9.80}$ & $64.05{\scriptstyle \pm 0.44}$ & $63.82{\scriptstyle \pm 2.80}$ \\
  & NCC   & $93.70{\scriptstyle \pm 0.25}$ & $89.63{\scriptstyle \pm 4.27}$ & $93.80{\scriptstyle \pm 0.87}$ & $88.75{\scriptstyle \pm 2.69}$ & $74.04{\scriptstyle \pm 0.16}$ & $73.20{\scriptstyle \pm 10.38}$ & $74.08{\scriptstyle \pm 0.79}$ & $73.56{\scriptstyle \pm 6.18}$ & $64.00{\scriptstyle \pm 0.14}$ & $57.60{\scriptstyle \pm 8.17}$ & $63.90{\scriptstyle \pm 0.29}$ & $63.58{\scriptstyle \pm 2.98}$ \\
  
\midrule
NegGrad+
  & Output        & $92.85{\scriptstyle \pm 1.20}$ &  $0.00{\scriptstyle \pm 0.00}$ & $93.29{\scriptstyle \pm 0.84}$ & $0.01{\scriptstyle \pm 0.01}$ & $69.90{\scriptstyle \pm 1.53}$ & $0.00{\scriptstyle \pm 0.00}$ & $70.80{\scriptstyle \pm 1.03}$ & $0.28{\scriptstyle \pm 0.22}$ &  $57.96{\scriptstyle \pm 0.89}$ & $0.00{\scriptstyle \pm 0.00}$ & $59.06{\scriptstyle \pm 2.62}$ & $0.00{\scriptstyle \pm 0.00}$ \\
  & Linear Probe   & $92.14{\scriptstyle \pm 0.46}$ & $67.09{\scriptstyle \pm 4.88}$ & $88.18{\scriptstyle \pm 1.50}$ & $73.91{\scriptstyle \pm 5.55}$ & $72.55{\scriptstyle \pm 0.44}$ & $67.20{\scriptstyle \pm 13.55}$ & $72.20{\scriptstyle \pm 0.51}$ & $62.32{\scriptstyle \pm 7.49}$ & $60.43{\scriptstyle \pm 0.40}$ & $58.00{\scriptstyle \pm 9.17}$ & $60.75{\scriptstyle \pm 0.44}$ & $54.68{\scriptstyle \pm 2.33}$ \\
  & NCC   & $91.33{\scriptstyle \pm 0.62}$ & $52.00{\scriptstyle \pm 3.45}$ & $87.28{\scriptstyle \pm 2.10}$ & $59.17{\scriptstyle \pm 7.68}$ & $71.58{\scriptstyle \pm 0.81}$ & $41.40{\scriptstyle \pm 6.66}$ & $70.08{\scriptstyle \pm 0.80}$ & $41.24{\scriptstyle \pm 4.58}$ & $59.01{\scriptstyle \pm 0.43}$ & $37.60{\scriptstyle \pm 8.17}$ & $56.45{\scriptstyle \pm 1.21}$ & $36.78{\scriptstyle \pm 3.98}$ \\
  
\midrule
SVD
  & Output        & $92.02{\scriptstyle \pm 1.47}$ &  $0.00{\scriptstyle \pm 0.00}$ & $94.05{\scriptstyle \pm 1.02}$ & $57.43{\scriptstyle \pm 1.35}$ &$71.09{\scriptstyle \pm 0.60}$ &  $0.00{\scriptstyle \pm 0.00}$ & $73.10{\scriptstyle \pm 1.23}$ & $55.56{\scriptstyle \pm 4.46}$ & $64.43{\scriptstyle \pm 0.19}$ & $2.00{\scriptstyle \pm 1.41}$ & $65.45{\scriptstyle \pm 0.45}$ & $59.02{\scriptstyle \pm 4.38}$ \\
  & Linear Probe   & $90.44{\scriptstyle \pm 1.24}$ & $61.80{\scriptstyle \pm 4.21}$ & $92.43{\scriptstyle \pm 1.65}$ & $83.58{\scriptstyle \pm 2.23}$ & $73.14{\scriptstyle \pm 0.37}$ & $67.00{\scriptstyle \pm 10.26}$ & $73.38{\scriptstyle \pm 0.84}$ & $74.08{\scriptstyle \pm 5.37}$ & $63.10{\scriptstyle \pm 0.06}$ & $60.40{\scriptstyle \pm 11.78}$ & $63.03{\scriptstyle \pm 0.52}$ & $63.64{\scriptstyle \pm 4.66}$ \\
  & NCC   & $90.11{\scriptstyle \pm 1.64}$ & $34.54{\scriptstyle \pm 3.88}$ & $93.23{\scriptstyle \pm 1.47}$ & $72.32{\scriptstyle \pm 1.62}$ & $71.81{\scriptstyle \pm 0.58}$ & $64.80{\scriptstyle \pm 9.09}$ & $73.33{\scriptstyle \pm 0.97}$ & $72.22{\scriptstyle \pm 5.38}$  & $64.65{\scriptstyle \pm 0.04}$ & $60.00{\scriptstyle \pm 8.25}$ & $64.56{\scriptstyle \pm 0.41}$ & $65.00{\scriptstyle \pm 3.70}$  \\

\midrule
Random-label 
  & Output        & $92.93{\scriptstyle \pm 0.94}$ &  $0.00{\scriptstyle \pm 0.00}$ & $94.14{\scriptstyle \pm 1.06}$ &  $0.00{\scriptstyle \pm 0.00}$ & $72.33{\scriptstyle \pm 0.15}$ & $0.00{\scriptstyle \pm 0.00}$ & $72.19{\scriptstyle \pm 0.19}$ & $0.00{\scriptstyle \pm 0.00}$ & $65.48{\scriptstyle \pm 0.05}$ & $0.40{\scriptstyle \pm 0.89}$ & $64.85{\scriptstyle \pm 0.47}$ & $0.98{\scriptstyle \pm 0.33}$ \\
  & Linear Probe   & $92.65{\scriptstyle \pm 0.58}$ & $92.49{\scriptstyle \pm 3.70}$ & $92.45{\scriptstyle \pm 1.00}$ & $90.25{\scriptstyle \pm 3.07}$ & $73.08{\scriptstyle \pm 0.27}$ & $79.00{\scriptstyle \pm 11.60}$ & $72.08{\scriptstyle \pm 0.59}$ & $72.08{\scriptstyle \pm 7.13}$ & $64.07{\scriptstyle \pm 0.19}$ & $58.00{\scriptstyle \pm 8.49}$ & $62.02{\scriptstyle \pm 0.48}$ & $57.82{\scriptstyle \pm 2.86}$\\
  & NCC   & $92.25{\scriptstyle \pm 1.09}$ & $80.25{\scriptstyle \pm 11.82}$ & $91.92{\scriptstyle \pm 1.20}$ & $73.22{\scriptstyle \pm 11.14}$ & $72.38{\scriptstyle \pm 0.51}$ & $87.20{\scriptstyle \pm 8.67}$ & $70.67{\scriptstyle \pm 0.65}$ & $62.22{\scriptstyle \pm 7.95}$& $63.20{\scriptstyle \pm 0.28}$ & $69.20{\scriptstyle \pm 7.69}$ & $59.42{\scriptstyle \pm 0.55}$ & $42.72{\scriptstyle \pm 2.57}$ \\

\midrule
SalUn
  & Output       & $93.19{\scriptstyle \pm 0.79}$ &  $0.00{\scriptstyle \pm 0.00}$ & $94.43{\scriptstyle \pm 0.96}$ &  $0.00{\scriptstyle \pm 0.00}$ & $72.96{\scriptstyle \pm 0.20}$ & $0.00{\scriptstyle \pm 0.00}$ & $72.92{\scriptstyle \pm 0.84}$ & $0.06{\scriptstyle \pm 0.09}$ & $65.49{\scriptstyle \pm 0.04}$ & $0.40{\scriptstyle \pm 0.89}$ & $64.63{\scriptstyle \pm 0.31}$ & $3.40{\scriptstyle \pm 0.83}$ \\
  & Linear Probe   & $93.05{\scriptstyle \pm 0.42}$ & $92.57{\scriptstyle \pm 3.23}$ & $92.89{\scriptstyle \pm 0.81}$ & $89.63{\scriptstyle \pm 3.78}$ & $73.26{\scriptstyle \pm 0.12}$ & $77.80{\scriptstyle \pm 16.12}$ & $72.10{\scriptstyle \pm 0.70}$ & $72.66{\scriptstyle \pm 6.93}$ & $64.07{\scriptstyle \pm 0.07}$ & $55.60{\scriptstyle \pm 9.53}$ & $61.86{\scriptstyle \pm 0.28}$ & $56.52{\scriptstyle \pm 2.54}$  \\

  & NCC   & $91.31{\scriptstyle \pm 0.42}$ & $93.70{\scriptstyle \pm 2.14}$ & $91.99{\scriptstyle \pm 0.98}$ & $68.45{\scriptstyle \pm 7.29}$ & $72.65{\scriptstyle \pm 0.30}$ & $85.60{\scriptstyle \pm 10.64}$ & $70.62{\scriptstyle \pm 0.83}$ & $63.34{\scriptstyle \pm 6.14}$ & $63.33{\scriptstyle \pm 0.20}$ & $68.80{\scriptstyle \pm 6.57}$ & $59.07{\scriptstyle \pm 0.57}$ & $39.70{\scriptstyle \pm 3.48}$   \\
\midrule
SCRUB
  & Output        & $91.37{\scriptstyle \pm 2.25}$ & $0.00{\scriptstyle \pm 0.00}$ & $93.61{\scriptstyle \pm 1.25}$ &  $0.00{\scriptstyle \pm 0.00}$ & $73.67{\scriptstyle \pm 0.17}$ & $0.20{\scriptstyle \pm 0.45}$ & $74.56{\scriptstyle \pm 0.83}$ & $0.32{\scriptstyle \pm 0.32}$ & $65.43{\scriptstyle \pm 0.11}$ & $1.20{\scriptstyle \pm 1.79}$ & $65.30{\scriptstyle \pm 0.46}$ & $5.48{\scriptstyle \pm 3.46}$  \\
  & Linear Probe   & $91.71{\scriptstyle \pm 1.76}$ & $74.79{\scriptstyle \pm 8.03}$ & $91.88{\scriptstyle \pm 1.11}$ & $78.23{\scriptstyle \pm 2.99}$ &  $73.69{\scriptstyle \pm 0.13}$ & $72.40{\scriptstyle \pm 13.20}$ & $73.05{\scriptstyle \pm 0.72}$ & $66.92{\scriptstyle \pm 6.67}$ & $64.44{\scriptstyle \pm 0.34}$ & $56.00{\scriptstyle \pm 9.49}$ & $62.87{\scriptstyle \pm 0.50}$ & $56.90{\scriptstyle \pm 4.63}$ \\

  & NCC   & $89.96{\scriptstyle \pm 1.60}$ & $55.30{\scriptstyle \pm 10.92}$ & $89.73{\scriptstyle \pm 1.85}$ & $47.52{\scriptstyle \pm 6.11}$ & $73.51{\scriptstyle \pm 0.19}$ & $72.40{\scriptstyle \pm 10.76}$ & $72.35{\scriptstyle \pm 0.54}$ & $53.58{\scriptstyle \pm 6.04}$ & $64.37{\scriptstyle \pm 0.08}$ & $60.40{\scriptstyle \pm 6.07}$ & $61.41{\scriptstyle \pm 0.41}$ & $53.50{\scriptstyle \pm 4.93}$ \\

  \midrule
\multirow{3}{*}{UNSIR} & Output & $91.84{\scriptstyle \pm 0.75}$ & $0.48{\scriptstyle \pm 0.58}$ & $92.87{\scriptstyle \pm 1.33}$ & $0.01{\scriptstyle \pm 0.01}$ & $73.77{\scriptstyle \pm 0.28}$ & $3.80{\scriptstyle \pm 3.56}$ & $73.58{\scriptstyle \pm 0.85}$ & $14.58{\scriptstyle \pm 4.89}$ & $64.66{\scriptstyle \pm 0.27}$ & $0.00{\scriptstyle \pm 0.00}$ & $65.46{\scriptstyle \pm 0.61}$ & $9.72{\scriptstyle \pm 5.11}$ \\
 & Linear Probe & $89.71{\scriptstyle \pm 0.50}$ & $85.29{\scriptstyle \pm 7.14}$ & $88.21{\scriptstyle \pm 1.97}$ & $70.59{\scriptstyle \pm 6.63}$ & $73.40{\scriptstyle \pm 0.27}$ & $73.60{\scriptstyle \pm 14.88}$ & $72.15{\scriptstyle \pm 0.78}$ & $67.64{\scriptstyle \pm 6.85}$ & $63.88{\scriptstyle \pm 0.23}$ & $61.20{\scriptstyle \pm 9.96}$ & $62.87{\scriptstyle \pm 0.33}$ & $61.10{\scriptstyle \pm 3.24}$ \\
 & NCC & $89.73{\scriptstyle \pm 0.49}$ & $62.72{\scriptstyle \pm 5.04}$ & $87.73{\scriptstyle \pm 2.37}$ & $49.42{\scriptstyle \pm 8.20}$ & $73.05{\scriptstyle \pm 0.40}$ & $71.00{\scriptstyle \pm 13.32}$ & $71.26{\scriptstyle \pm 0.55}$ & $59.24{\scriptstyle \pm 6.82}$ & $63.16{\scriptstyle \pm 0.16}$ & $59.20{\scriptstyle \pm 5.02}$ & $61.80{\scriptstyle \pm 0.32}$ & $56.14{\scriptstyle \pm 2.40}$ \\
\bottomrule
\end{tabular}
}
\vspace{-.2in}
\end{table*}

\endgroup

\begin{table*}[ht!]
\centering
\caption{ 
Unlearning from full model VS only classifier evaluated through output-level forget and retain accuracies. We report the variance of the results to complement the mean performance values shown in \Cref{tab:unlearn_lastlayer}.
}
\label{tab:unlearn_lastlayer_with_var}
\vspace{-.15in}
\resizebox{1.0\linewidth}{!}{
\begin{tabular}{l l cc cc cc cc}
\toprule
\multirow{3}{*}{\textbf{Method}}
  & \multirow{3}{*}{\shortstack{\textbf{Layers}\\\textbf{Finetuned}}}
  & \multicolumn{4}{c}{\textbf{CIFAR-10}}
  & \multicolumn{4}{c}{\textbf{CIFAR-100}} \\
\cmidrule(lr){3-6}\cmidrule(lr){7-10}
  &  & \multicolumn{2}{c}{1} & \multicolumn{2}{c}{3}
     & \multicolumn{2}{c}{1} & \multicolumn{2}{c}{10} \\
\cmidrule(lr){3-4}\cmidrule(lr){5-6}\cmidrule(lr){7-8}\cmidrule(lr){9-10}
  &  & Retain & Forget & Retain & Forget & Retain & Forget & Retain & Forget \\
\midrule
Original
  & Full Model             & $93.98{\scriptstyle \pm 0.39}$ & $93.98{\scriptstyle \pm 3.48}$ & $94.00{\scriptstyle \pm 0.89}$ & $93.94{\scriptstyle \pm 2.09}$ & $74.61{\scriptstyle \pm 0.13}$ & $74.40{\scriptstyle \pm 12.86}$ & $74.47{\scriptstyle \pm 0.72}$ & $75.88{\scriptstyle \pm 6.48}$ \\

\midrule
Retain-only Retrain
  & Full Model             & $94.74{\scriptstyle \pm 0.53}$ & $0.00{\scriptstyle \pm 0.00}$ & $95.37{\scriptstyle \pm 1.02}$ & $0.00{\scriptstyle \pm 0.00}$ & $76.01{\scriptstyle \pm 0.14}$ & $0.00{\scriptstyle \pm 0.00}$ & $76.50{\scriptstyle \pm 0.63}$ & $0.00{\scriptstyle \pm 0.00}$  \\

\midrule
\multirow{2}{*}{Retain-only FT}
  & Full Model             & $94.26{\scriptstyle \pm 0.57}$ & $47.67{\scriptstyle \pm 29.14}$ & $95.24{\scriptstyle \pm 1.10}$ & $52.48{\scriptstyle \pm 20.15}$ & $74.08{\scriptstyle \pm 0.19}$ & $53.20{\scriptstyle \pm 18.75}$ & $74.53{\scriptstyle \pm 0.99}$ & $64.96{\scriptstyle \pm 7.84}$\\
  & Classifier only        & $93.20{\scriptstyle \pm 0.72}$ &  $0.00{\scriptstyle \pm 0.00}$ & $93.66{\scriptstyle \pm 0.85}$ &  $0.00{\scriptstyle \pm 0.00}$ & $73.29{\scriptstyle \pm 0.37}$ &  $0.00{\scriptstyle \pm 0.00}$ & $73.51{\scriptstyle \pm 0.84}$ &  $0.00{\scriptstyle \pm 0.00}$ \\

\midrule
\multirow{2}{*}{NegGrad+}
  & Full Model             & $92.85{\scriptstyle \pm 1.20}$ &  $0.00{\scriptstyle \pm 0.00}$ & $93.29{\scriptstyle \pm 0.84}$ & $0.01{\scriptstyle \pm 0.01}$ & $69.90{\scriptstyle \pm 1.53}$ & $0.00{\scriptstyle \pm 0.00}$ & $70.80{\scriptstyle \pm 1.03}$ & $0.28{\scriptstyle \pm 0.22}$  \\
  & Classifier only        & $93.08{\scriptstyle \pm 0.85}$ &  $0.00{\scriptstyle \pm 0.00}$ & $93.72{\scriptstyle \pm 0.97}$ &  $0.00{\scriptstyle \pm 0.00}$ & $73.28{\scriptstyle \pm 1.11}$ &  $0.00{\scriptstyle \pm 0.00}$ & $66.85{\scriptstyle \pm 6.25}$ &  $0.00{\scriptstyle \pm 0.00}$ \\

\midrule
\multirow{2}{*}{Random-label}
  & Full Model             & $92.93{\scriptstyle \pm 0.94}$ &  $0.00{\scriptstyle \pm 0.00}$ & $94.14{\scriptstyle \pm 1.06}$ &  $0.00{\scriptstyle \pm 0.00}$ & $72.33{\scriptstyle \pm 0.15}$ & $0.00{\scriptstyle \pm 0.00}$ & $72.19{\scriptstyle \pm 0.19}$ & $0.00{\scriptstyle \pm 0.00}$ \\
  & Classifier only        & $93.39{\scriptstyle \pm 0.75}$ &  $0.00{\scriptstyle \pm 0.00}$ & $94.56{\scriptstyle \pm 1.04}$ &  $0.00{\scriptstyle \pm 0.00}$ & $73.87{\scriptstyle \pm 0.12}$ &  $0.20{\scriptstyle \pm 0.40}$ & $74.09{\scriptstyle \pm 0.73}$ &  $2.32{\scriptstyle \pm 0.56}$ \\
\midrule
\multirow{2}{*}{Salun}
  & Full Model             & $93.19{\scriptstyle \pm 0.79}$ &  $0.00{\scriptstyle \pm 0.00}$ & $94.43{\scriptstyle \pm 0.96}$ &  $0.00{\scriptstyle \pm 0.00}$ & $72.96{\scriptstyle \pm 0.20}$ & $0.00{\scriptstyle \pm 0.00}$ & $72.92{\scriptstyle \pm 0.84}$ & $0.06{\scriptstyle \pm 0.09}$ \\
  & Classifier only        & $93.38{\scriptstyle \pm 0.72}$ &  $0.00{\scriptstyle \pm 0.00}$ & $94.44{\scriptstyle \pm 1.07}$ &  $0.00{\scriptstyle \pm 0.00}$ & $73.93{\scriptstyle \pm 0.20}$ &  $1.00{\scriptstyle \pm 0.63}$ & $73.98{\scriptstyle \pm 0.72}$ &  $4.18{\scriptstyle \pm 1.20}$ \\
\midrule
\multirow{2}{*}{SVD}
  & Full Model             & $92.02{\scriptstyle \pm 1.47}$ &  $0.00{\scriptstyle \pm 0.00}$ & $94.05{\scriptstyle \pm 1.02}$ & $57.43{\scriptstyle \pm 1.35}$ &$71.09{\scriptstyle \pm 0.60}$ &  $0.00{\scriptstyle \pm 0.00}$ & $73.10{\scriptstyle \pm 1.23}$ & $55.56{\scriptstyle \pm 4.46}$ \\
  & Classifier only        & $93.53{\scriptstyle \pm 0.66}$ &  $0.01{\scriptstyle \pm 0.03}$ & $93.73{\scriptstyle \pm 0.87}$ & $68.45{\scriptstyle \pm 9.33}$ & $73.47{\scriptstyle \pm 0.18}$ &  $2.60{\scriptstyle \pm 1.86}$ & $74.13{\scriptstyle \pm 0.66}$ & $50.56{\scriptstyle \pm 5.57}$ \\
  \midrule
\multirow{2}{*}{SCRUB}
  & Full Model            & $91.37{\scriptstyle \pm 2.25}$ & $0.00{\scriptstyle \pm 0.00}$ & $93.61{\scriptstyle \pm 1.25}$ &  $0.00{\scriptstyle \pm 0.00}$ & $73.67{\scriptstyle \pm 0.17}$ & $0.20{\scriptstyle \pm 0.45}$ & $74.56{\scriptstyle \pm 0.83}$ & $0.32{\scriptstyle \pm 0.32}$ \\
  & Classifier only        & $93.12{\scriptstyle \pm 0.76}$ &  $0.00{\scriptstyle \pm 0.00}$ & $93.68{\scriptstyle \pm 0.95}$ & $0.00{\scriptstyle \pm 0.00}$ & $74.01{\scriptstyle \pm 0.24}$ &  $5.00{\scriptstyle \pm 9.01}$ & $74.01{\scriptstyle \pm 0.75}$ & $0.00{\scriptstyle \pm 0.00}$ \\
  \midrule
\multirow{2}{*}{UNSIR}
  & Full Model             & $91.84{\scriptstyle \pm 0.75}$ & $0.48{\scriptstyle \pm 0.58}$ & $92.87{\scriptstyle \pm 1.33}$ & $0.01{\scriptstyle \pm 0.01}$ & $73.77{\scriptstyle \pm 0.28}$ & $3.80{\scriptstyle \pm 3.56}$ & $73.58{\scriptstyle \pm 0.85}$ & $14.58{\scriptstyle \pm 4.89}$  \\
  & Classifier only        & $94.48{\scriptstyle \pm 0.66}$ & $0.00{\scriptstyle \pm 0.00}$ & $94.46{\scriptstyle \pm 0.82}$ & $1.39{\scriptstyle \pm 0.96}$ & $74.64{\scriptstyle \pm 0.13}$ & $0.00{\scriptstyle \pm 0.00}$ & $74.27{\scriptstyle \pm 0.79}$ & $0.52{\scriptstyle \pm 1.11}$  \\
\bottomrule
\end{tabular}
}
\vspace{-.1in}
\end{table*}

\begingroup
\setlength{\tabcolsep}{4pt}
\small
\begin{table*}[t]
\centering
\caption{Evaluation of MU methods on \textbf{ResNet} with CMF classifiers on three datasets for unlearning certain number of classes. We report the variance of the results to complement the mean performance values shown in \Cref{tab:eval_resnet18}.}
\label{tab:eval_resnet18_with_var}
\vspace{-0.12in}
\resizebox{1.0\linewidth}{!}{
\begin{tabular}{l l cccc cccc cccc}
\toprule
\multirow{3}{*}{\textbf{Method}} & \multirow{3}{*}{\textbf{Accuracy}} & \multicolumn{4}{c}{\textbf{CIFAR-10}} & \multicolumn{4}{c}{\textbf{CIFAR-100}} & \multicolumn{4}{c}{\textbf{Tiny-ImageNet}} \\
\cmidrule(lr){3-6}\cmidrule(lr){7-10}\cmidrule(lr){11-14}
 &  & \multicolumn{2}{c}{\textbf{1}} & \multicolumn{2}{c}{\textbf{3}} & \multicolumn{2}{c}{\textbf{1}} & \multicolumn{2}{c}{\textbf{10}} & \multicolumn{2}{c}{\textbf{1}} & \multicolumn{2}{c}{\textbf{20}}\\
\cmidrule(lr){3-4}\cmidrule(lr){5-6}\cmidrule(lr){7-8}\cmidrule(lr){9-10}\cmidrule(lr){11-12}\cmidrule(lr){13-14}
 &  & \textbf{Retain} & \textbf{Forget} & \textbf{Retain} & \textbf{Forget} & \textbf{Retain} & \textbf{Forget} & \textbf{Retain} & \textbf{Forget}& \textbf{Retain} & \textbf{Forget} & \textbf{Retain} & \textbf{Forget} \\
 \midrule
 \multirow{3}{*}{Original} & Output & $93.98{\scriptstyle \pm 0.39}$ & $93.98{\scriptstyle \pm 3.48}$ & $94.00{\scriptstyle \pm 0.89}$ & $93.94{\scriptstyle \pm 2.09}$ & $74.61{\scriptstyle \pm 0.13}$ & $74.40{\scriptstyle \pm 12.86}$ & $74.47{\scriptstyle \pm 0.72}$ & $75.88{\scriptstyle \pm 6.48}$ & $65.27{\scriptstyle \pm 0.06}$ & $58.80{\scriptstyle \pm 12.54}$ & $65.15{\scriptstyle \pm 0.47}$ & $66.02{\scriptstyle \pm 4.25}$ \\
 & Linear Probe & $94.02{\scriptstyle \pm 0.39}$ & $94.02{\scriptstyle \pm 3.54}$ & $94.03{\scriptstyle \pm 0.90}$ & $94.00{\scriptstyle \pm 2.10}$ & $74.53{\scriptstyle \pm 0.12}$ & $75.00{\scriptstyle \pm 11.64}$ & $74.38{\scriptstyle \pm 0.73}$ & $75.90{\scriptstyle \pm 6.60}$ & $65.10{\scriptstyle \pm 0.05}$ & $60.80{\scriptstyle \pm 9.01}$ & $64.97{\scriptstyle \pm 0.44}$ & $66.08{\scriptstyle \pm 3.96}$ \\
 & NCC & $94.00{\scriptstyle \pm 0.39}$ & $93.99{\scriptstyle \pm 3.51}$ & $94.03{\scriptstyle \pm 0.93}$ & $93.92{\scriptstyle \pm 2.17}$ & $74.40{\scriptstyle \pm 0.11}$ & $75.00{\scriptstyle \pm 11.60}$ & $74.28{\scriptstyle \pm 0.72}$ & $75.68{\scriptstyle \pm 6.44}$ & $64.65{\scriptstyle \pm 0.04}$ & $60.00{\scriptstyle \pm 8.25}$ & $64.56{\scriptstyle \pm 0.41}$ & $65.00{\scriptstyle \pm 3.70}$ \\
 \midrule
 \multirow{3}{*}{Retain-only Retrain} & Output & $94.74{\scriptstyle \pm 0.53}$ & $0.00{\scriptstyle \pm 0.00}$ & $95.37{\scriptstyle \pm 1.02}$ & $0.00{\scriptstyle \pm 0.00}$ & $76.01{\scriptstyle \pm 0.14}$ & $0.00{\scriptstyle \pm 0.00}$ & $76.50{\scriptstyle \pm 0.63}$ & $0.00{\scriptstyle \pm 0.00}$ & $66.52{\scriptstyle \pm 0.18}$ & $0.00{\scriptstyle \pm 0.00}$ & $66.38{\scriptstyle \pm 0.61}$ & $0.00{\scriptstyle \pm 0.00}$ \\
 & Linear Probe& $90.49{\scriptstyle \pm 1.00}$ & $77.35{\scriptstyle \pm 4.55}$ & $85.64{\scriptstyle \pm 2.69}$ & $67.33{\scriptstyle \pm 7.48}$ & $74.09{\scriptstyle \pm 1.06}$ & $85.20{\scriptstyle \pm 5.89}$ & $69.34{\scriptstyle \pm 1.42}$ & $60.94{\scriptstyle \pm 5.76}$ & $65.90{\scriptstyle \pm 0.16}$ & $46.40{\scriptstyle \pm 16.88}$ & $65.21{\scriptstyle \pm 0.48}$ & $30.36{\scriptstyle \pm 1.72}$ \\
 & NCC & $93.31{\scriptstyle \pm 0.45}$ & $47.06{\scriptstyle \pm 3.68}$ & $91.37{\scriptstyle \pm 1.86}$ & $37.07{\scriptstyle \pm 8.71}$ & $73.90{\scriptstyle \pm 1.28}$ & $70.40{\scriptstyle \pm 8.73}$ & $70.98{\scriptstyle \pm 0.68}$ & $43.18{\scriptstyle \pm 6.33}$ & $63.57{\scriptstyle \pm 1.10}$ & $71.20{\scriptstyle \pm 2.68}$ & $59.37{\scriptstyle \pm 0.56}$ & $44.30{\scriptstyle \pm 2.52}$ \\
\midrule
\multirow{3}{*}{Random-label with CMF} & Output  & $94.27{\scriptstyle \pm 0.57}$ & $80.70{\scriptstyle \pm 8.73}$ & $94.81{\scriptstyle \pm 0.99}$ & $75.69{\scriptstyle \pm 4.88}$  & $74.38{\scriptstyle \pm 0.12}$ & $55.60{\scriptstyle \pm 16.35}$ & $74.85{\scriptstyle \pm 0.77}$ & $54.40{\scriptstyle \pm 7.52}$ & $62.01{\scriptstyle \pm 0.25}$ & $22.40{\scriptstyle \pm 14.66}$ & $62.25{\scriptstyle \pm 0.61}$ & $22.20{\scriptstyle \pm 1.76}$\\
 & Linear Probe & $94.19{\scriptstyle \pm 0.52}$ & $85.71{\scriptstyle \pm 5.82}$ & $94.49{\scriptstyle \pm 0.95}$ & $81.47{\scriptstyle \pm 3.78}$ & $74.63{\scriptstyle \pm 0.17}$ & $59.20{\scriptstyle \pm 12.68}$ & $74.98{\scriptstyle \pm 0.76}$ & $66.44{\scriptstyle \pm 6.87}$ & $62.56{\scriptstyle \pm 0.11}$ & $32.80{\scriptstyle \pm 9.96}$ & $62.38{\scriptstyle \pm 0.30}$ & $36.58{\scriptstyle \pm 2.56}$ \\
 & NCC & $94.25{\scriptstyle \pm 0.55}$ & $82.04{\scriptstyle \pm 8.34}$ & $94.73{\scriptstyle \pm 0.95}$ & $76.74{\scriptstyle \pm 4.65}$ & $74.49{\scriptstyle \pm 0.11}$ & $59.20{\scriptstyle \pm 15.42}$ & $74.82{\scriptstyle \pm 0.79}$ & $60.24{\scriptstyle \pm 7.04}$ & $61.96{\scriptstyle \pm 0.24}$ & $27.20{\scriptstyle \pm 11.80}$ & $61.93{\scriptstyle \pm 0.48}$ & $28.24{\scriptstyle \pm 2.40}$ \\
\midrule
\multirow{3}{*}{Salun with CMF} & Output& $94.33{\scriptstyle \pm 0.60}$ & $78.07{\scriptstyle \pm 10.38}$ & $95.01{\scriptstyle \pm 0.99}$ & $75.96{\scriptstyle \pm 4.06}$ & $74.62{\scriptstyle \pm 0.11}$ & $60.40{\scriptstyle \pm 14.31}$ & $74.98{\scriptstyle \pm 0.69}$ & $62.10{\scriptstyle \pm 6.97}$  
& $62.62{\scriptstyle \pm 0.08}$ & $30.00{\scriptstyle \pm 14.21}$ & $63.14{\scriptstyle \pm 0.42}$ & $32.74{\scriptstyle \pm 2.53}$ \\
 & Linear Probe  & $94.26{\scriptstyle \pm 0.58}$ & $84.68{\scriptstyle \pm 7.30}$ & $94.60{\scriptstyle \pm 1.01}$ & $83.39{\scriptstyle \pm 4.22}$ & $74.79{\scriptstyle \pm 0.14}$ & $59.40{\scriptstyle \pm 15.99}$ & $75.18{\scriptstyle \pm 0.72}$ & $67.20{\scriptstyle \pm 6.73}$ & $63.18{\scriptstyle \pm 0.15}$ & $41.20{\scriptstyle \pm 8.07}$ & $63.12{\scriptstyle \pm 0.20}$ & $46.14{\scriptstyle \pm 2.37}$ \\
 & NCC & $94.32{\scriptstyle \pm 0.60}$ & $79.50{\scriptstyle \pm 9.80}$ & $94.91{\scriptstyle \pm 0.99}$ & $77.31{\scriptstyle \pm 4.34}$ & $74.78{\scriptstyle \pm 0.05}$ & $63.60{\scriptstyle \pm 13.92}$ & $75.03{\scriptstyle \pm 0.65}$ & $65.20{\scriptstyle \pm 6.75}$ & $62.67{\scriptstyle \pm 0.10}$ & $37.20{\scriptstyle \pm 9.96}$ & $62.77{\scriptstyle \pm 0.31}$ & $38.10{\scriptstyle \pm 2.10}$ \\
\midrule
\multirow{3}{*}{NegGrad+ with CMF} & Output & $91.87{\scriptstyle \pm 1.15}$ & $54.50{\scriptstyle \pm 6.24}$ & $94.97{\scriptstyle \pm 0.87}$ & $60.00{\scriptstyle \pm 6.42}$ & $71.82{\scriptstyle \pm 1.26}$ & $41.00{\scriptstyle \pm 15.46}$ & $71.45{\scriptstyle \pm 0.70}$ & $30.38{\scriptstyle \pm 4.43}$ & $61.18{\scriptstyle \pm 0.67}$ & $22.00{\scriptstyle \pm 12.88}$ & $60.82{\scriptstyle \pm 1.04}$ & $41.42{\scriptstyle \pm 3.41}$\\
 & Linear Probe & $92.35{\scriptstyle \pm 0.94}$ & $68.02{\scriptstyle \pm 5.00}$ & $94.60{\scriptstyle \pm 0.93}$ & $75.29{\scriptstyle \pm 4.62}$ & $72.86{\scriptstyle \pm 0.72}$ & $55.20{\scriptstyle \pm 13.88}$ & $72.35{\scriptstyle \pm 0.70}$ & $50.62{\scriptstyle \pm 5.74}$ & $62.90{\scriptstyle \pm 0.45}$ & $41.20{\scriptstyle \pm 11.01}$ & $62.66{\scriptstyle \pm 0.92}$ & $53.82{\scriptstyle \pm 2.59}$ \\
 & NCC & $91.99{\scriptstyle \pm 1.10}$ & $57.83{\scriptstyle \pm 5.83}$ & $94.93{\scriptstyle \pm 0.87}$ & $62.89{\scriptstyle \pm 6.01}$ & $72.36{\scriptstyle \pm 1.11}$ & $51.20{\scriptstyle \pm 14.04}$ & $71.75{\scriptstyle \pm 0.72}$ & $41.54{\scriptstyle \pm 5.18}$ & $62.31{\scriptstyle \pm 0.40}$ & $38.80{\scriptstyle \pm 11.54}$ & $62.11{\scriptstyle \pm 0.98}$ & $51.30{\scriptstyle \pm 2.66}$ \\
 \midrule
\multirow{3}{*}{Scrub with CMF} & Output  & $92.51{\scriptstyle \pm 0.91}$ & $33.78{\scriptstyle \pm 6.39}$ & $95.37{\scriptstyle \pm 1.00}$ & $35.11{\scriptstyle \pm 7.27}$ & $73.86{\scriptstyle \pm 0.34}$ & $40.60{\scriptstyle \pm 18.53}$ & $74.27{\scriptstyle \pm 0.74}$ & $35.02{\scriptstyle \pm 7.13}$ & $61.64{\scriptstyle \pm 0.18}$ & $27.20{\scriptstyle \pm 13.54}$ & $63.31{\scriptstyle \pm 0.35}$ & $47.76{\scriptstyle \pm 4.62}$ \\
 & Linear Probe  & $92.48{\scriptstyle \pm 0.83}$ & $60.68{\scriptstyle \pm 5.61}$ & $95.26{\scriptstyle \pm 0.98}$ & $62.77{\scriptstyle \pm 6.06}$ & $74.03{\scriptstyle \pm 0.41}$ & $55.60{\scriptstyle \pm 15.95}$ & $74.18{\scriptstyle \pm 0.67}$ & $58.34{\scriptstyle \pm 7.13}$ & $62.13{\scriptstyle \pm 0.21}$ & $36.80{\scriptstyle \pm 4.82}$ & $63.76{\scriptstyle \pm 0.34}$ & $54.28{\scriptstyle \pm 3.55}$ \\
 & NCC & $92.48{\scriptstyle \pm 0.90}$ & $35.53{\scriptstyle \pm 6.51}$ & $95.34{\scriptstyle \pm 0.98}$ & $40.04{\scriptstyle \pm 7.86}$ & $73.87{\scriptstyle \pm 0.39}$ & $47.00{\scriptstyle \pm 15.41}$ & $74.12{\scriptstyle \pm 0.68}$ & $42.82{\scriptstyle \pm 6.21}$ & $61.66{\scriptstyle \pm 0.25}$ & $34.40{\scriptstyle \pm 6.07}$ & $63.28{\scriptstyle \pm 0.35}$ & $50.42{\scriptstyle \pm 3.23}$ \\

 \midrule
\multirow{3}{*}{UNSIR with CMF}& Output & $91.79{\scriptstyle \pm 0.92}$ & $12.91{\scriptstyle \pm 7.43}$ & $93.56{\scriptstyle \pm 1.02}$ & $11.51{\scriptstyle \pm 3.00}$ & $72.72{\scriptstyle \pm 0.15}$ & $21.00{\scriptstyle \pm 11.98}$ & $72.61{\scriptstyle \pm 1.10}$ & $9.16{\scriptstyle \pm 3.23}$ & $60.81{\scriptstyle \pm 0.22}$ & $14.00{\scriptstyle \pm 8.25}$ & $61.34{\scriptstyle \pm 0.54}$ & $14.44{\scriptstyle \pm 0.90}$ \\
 & Linear Probe & $91.63{\scriptstyle \pm 0.90}$ & $31.16{\scriptstyle \pm 6.35}$ & $93.01{\scriptstyle \pm 1.17}$ & $28.65{\scriptstyle \pm 3.13}$ & $72.91{\scriptstyle \pm 0.16}$ & $35.20{\scriptstyle \pm 17.25}$ & $72.51{\scriptstyle \pm 0.87}$ & $20.98{\scriptstyle \pm 5.73}$ & $60.96{\scriptstyle \pm 0.21}$ & $26.80{\scriptstyle \pm 3.35}$ & $61.27{\scriptstyle \pm 0.36}$ & $23.02{\scriptstyle \pm 0.59}$ \\
 & NCC & $91.87{\scriptstyle \pm 0.91}$ & $9.29{\scriptstyle \pm 4.22}$ & $93.79{\scriptstyle \pm 1.03}$ & $8.16{\scriptstyle \pm 2.32}$ & $72.67{\scriptstyle \pm 0.16}$ & $20.20{\scriptstyle \pm 10.57}$ & $72.48{\scriptstyle \pm 1.17}$ & $9.94{\scriptstyle \pm 3.48}$ & $60.63{\scriptstyle \pm 0.24}$ & $26.00{\scriptstyle \pm 3.16}$ & $60.80{\scriptstyle \pm 0.46}$ & $16.72{\scriptstyle \pm 1.44}$ \\
\bottomrule
\end{tabular}
}
\end{table*}
\endgroup

\begingroup
\setlength{\tabcolsep}{4pt}
\small

\begin{table*}[t]
\centering
\caption{
Evaluation of machine unlearning (MU) methods on \textbf{ViT-S/16} across CIFAR-10, CIFAR-100, and Tiny-ImageNet.
We report mean accuracy ($\pm$ standard deviation) on \textbf{retain} and \textbf{forget} subsets under different unlearning bucket sizes (single vs.\ multi-class).
Results are shown for three evaluation heads: the full model classifier, a linear probe on frozen features (LP), and the Nearest Class-Center (NC) classifier.
All experiments are based on models pre-trained on \textbf{ImageNet}.
For ViT-S/16, the pre-trained model is obtained by fine-tuning the ImageNet-pre-trained backbone on the full target dataset.
\emph{Retrain} refers to a model obtained by fine-tuning the same pre-trained backbone using only the retain subset, serving as an oracle baseline that fully removes the target data.
All unlearning methods are likewise initialized from the same pre-trained model.
All values are averaged over multiple runs..
}
\label{tab:Vit_original_linearprobe_with_var}
\vspace{-.15in}
\resizebox{1.0\linewidth}{!}{
\begin{tabular}{l l cc cc cc cc cc cc}
\toprule
\multirow{3}{*}{\textbf{Method}}
  & \multirow{3}{*}{\textbf{Accuracy}}
  & \multicolumn{4}{c}{\textbf{CIFAR-10}}
  & \multicolumn{4}{c}{\textbf{CIFAR-100}}
  & \multicolumn{4}{c}{\textbf{Tiny-ImageNet}}\\
\cmidrule(lr){3-6}\cmidrule(lr){7-10}\cmidrule(lr){11-14}
  &  & \multicolumn{2}{c}{1} & \multicolumn{2}{c}{3}
     & \multicolumn{2}{c}{1} & \multicolumn{2}{c}{10}
     & \multicolumn{2}{c}{1} & \multicolumn{2}{c}{20}\\
\cmidrule(lr){3-4}\cmidrule(lr){5-6}\cmidrule(lr){7-8}\cmidrule(lr){9-10}\cmidrule(lr){11-12}\cmidrule(lr){13-14}
  &  & Retain & Forget & Retain & Forget & Retain & Forget & Retain & Forget & Retain & Forget & Retain & Forget \\
\midrule
\multirow{3}{*}{Original} & Output & $98.20{\scriptstyle \pm 0.15}$ & $98.20{\scriptstyle \pm 1.39}$ & $98.16{\scriptstyle \pm 0.44}$ & $98.29{\scriptstyle \pm 1.02}$ & $90.09{\scriptstyle \pm 0.06}$ & $92.00{\scriptstyle \pm 5.87}$ & $90.09{\scriptstyle \pm 0.46}$ & $90.32{\scriptstyle \pm 4.18}$ & $84.90{\scriptstyle \pm 0.05}$ & $79.20{\scriptstyle \pm 10.83}$ & $84.81{\scriptstyle \pm 0.31}$ & $85.44{\scriptstyle \pm 2.82}$ \\
 & Linear Probe & $98.17{\scriptstyle \pm 0.14}$ & $98.17{\scriptstyle \pm 1.26}$ & $98.15{\scriptstyle \pm 0.36}$ & $98.22{\scriptstyle \pm 0.85}$ & $89.93{\scriptstyle \pm 0.05}$ & $92.20{\scriptstyle \pm 5.07}$ & $89.94{\scriptstyle \pm 0.43}$ & $90.08{\scriptstyle \pm 3.83}$ & $84.97{\scriptstyle \pm 0.04}$ & $81.20{\scriptstyle \pm 7.82}$ & $84.87{\scriptstyle \pm 0.26}$ & $85.68{\scriptstyle \pm 2.31}$ \\
 & NCC & $98.06{\scriptstyle \pm 0.20}$ & $98.06{\scriptstyle \pm 1.76}$ & $98.02{\scriptstyle \pm 0.50}$ & $98.15{\scriptstyle \pm 1.16}$ & $89.62{\scriptstyle \pm 0.05}$ & $93.00{\scriptstyle \pm 5.10}$ & $89.67{\scriptstyle \pm 0.47}$ & $89.58{\scriptstyle \pm 4.01}$ & $83.62{\scriptstyle \pm 0.03}$ & $80.40{\scriptstyle \pm 6.84}$ & $83.49{\scriptstyle \pm 0.31}$ & $84.70{\scriptstyle \pm 2.83}$ \\

 \midrule
\multirow{3}{*}{Retain-only Retrain} & Output & $98.46{\scriptstyle \pm 0.29}$ & $0.00{\scriptstyle \pm 0.00}$ & $98.61{\scriptstyle \pm 0.51}$ & $0.00{\scriptstyle \pm 0.00}$ & $89.95{\scriptstyle \pm 0.05}$ & $0.00{\scriptstyle \pm 0.00}$ & $90.10{\scriptstyle \pm 0.46}$ & $0.00{\scriptstyle \pm 0.00}$ & $83.85{\scriptstyle \pm 0.14}$ & $0.00{\scriptstyle \pm 0.00}$ & $84.14{\scriptstyle \pm 0.67}$ & $0.00{\scriptstyle \pm 0.00}$ \\
 & Linear Probe & $98.28{\scriptstyle \pm 0.08}$ & $97.71{\scriptstyle \pm 1.24}$ & $98.17{\scriptstyle \pm 0.36}$ & $97.35{\scriptstyle \pm 1.32}$ & $90.32{\scriptstyle \pm 0.11}$ & $90.80{\scriptstyle \pm 7.09}$ & $90.13{\scriptstyle \pm 0.42}$ & $87.82{\scriptstyle \pm 4.67}$ & $84.47{\scriptstyle \pm 0.13}$ & $77.60{\scriptstyle \pm 5.55}$ & $84.45{\scriptstyle \pm 0.38}$ & $81.44{\scriptstyle \pm 2.79}$ \\
 & NCC & $98.12{\scriptstyle \pm 0.17}$ & $94.24{\scriptstyle \pm 3.73}$ & $97.98{\scriptstyle \pm 0.43}$ & $94.51{\scriptstyle \pm 1.95}$ & $89.24{\scriptstyle \pm 0.09}$ & $87.80{\scriptstyle \pm 5.97}$ & $89.31{\scriptstyle \pm 0.36}$ & $81.88{\scriptstyle \pm 5.19}$ & $82.93{\scriptstyle \pm 0.05}$ & $74.80{\scriptstyle \pm 5.76}$ & $82.74{\scriptstyle \pm 0.30}$ & $78.64{\scriptstyle \pm 3.19}$ \\

\midrule
\multirow{3}{*}{Random-label} & Output & $98.50{\scriptstyle \pm 0.25}$ & $0.00{\scriptstyle \pm 0.00}$ & $98.69{\scriptstyle \pm 0.43}$ & $0.00{\scriptstyle \pm 0.00}$ & $90.59{\scriptstyle \pm 0.10}$ & $0.00{\scriptstyle \pm 0.00}$ & $90.79{\scriptstyle \pm 0.42}$ & $0.10{\scriptstyle \pm 0.17}$ & $85.87{\scriptstyle \pm 0.07}$ & $7.20{\scriptstyle \pm 3.35}$ & $86.23{\scriptstyle \pm 0.55}$ & $16.24{\scriptstyle \pm 4.51}$ \\
 & Linear Probe & $98.35{\scriptstyle \pm 0.10}$ & $98.64{\scriptstyle \pm 1.08}$ & $98.15{\scriptstyle \pm 0.47}$ & $98.26{\scriptstyle \pm 1.56}$ & $90.47{\scriptstyle \pm 0.05}$ & $94.00{\scriptstyle \pm 5.15}$ & $90.26{\scriptstyle \pm 0.37}$ & $88.32{\scriptstyle \pm 5.18}$ & $85.69{\scriptstyle \pm 0.07}$ & $82.80{\scriptstyle \pm 5.40}$ & $85.57{\scriptstyle \pm 0.23}$ & $85.10{\scriptstyle \pm 2.47}$ \\
 & NCC & $97.50{\scriptstyle \pm 0.24}$ & $99.57{\scriptstyle \pm 0.46}$ & $96.46{\scriptstyle \pm 0.70}$ & $97.84{\scriptstyle \pm 3.37}$ & $90.01{\scriptstyle \pm 0.09}$ & $96.00{\scriptstyle \pm 4.64}$ & $89.56{\scriptstyle \pm 0.62}$ & $82.70{\scriptstyle \pm 6.70}$ & $84.22{\scriptstyle \pm 0.09}$ & $82.40{\scriptstyle \pm 6.84}$ & $84.42{\scriptstyle \pm 0.35}$ & $83.92{\scriptstyle \pm 3.07}$ \\
\midrule
\multirow{3}{*}{Salun} & Output & $98.46{\scriptstyle \pm 0.25}$ & $0.00{\scriptstyle \pm 0.00}$ & $98.68{\scriptstyle \pm 0.46}$ & $0.00{\scriptstyle \pm 0.00}$ & $90.58{\scriptstyle \pm 0.13}$ & $0.20{\scriptstyle \pm 0.45}$ & $90.64{\scriptstyle \pm 0.46}$ & $0.18{\scriptstyle \pm 0.16}$ & $85.64{\scriptstyle \pm 0.06}$ & $7.60{\scriptstyle \pm 5.18}$ & $85.99{\scriptstyle \pm 0.71}$ & $15.72{\scriptstyle \pm 4.48}$ \\
 & Linear Probe & $98.33{\scriptstyle \pm 0.18}$ & $98.59{\scriptstyle \pm 1.16}$ & $98.13{\scriptstyle \pm 0.35}$ & $98.26{\scriptstyle \pm 1.59}$ & $90.36{\scriptstyle \pm 0.19}$ & $93.60{\scriptstyle \pm 5.18}$ & $90.22{\scriptstyle \pm 0.51}$ & $88.24{\scriptstyle \pm 4.93}$ & $85.50{\scriptstyle \pm 0.09}$ & $81.60{\scriptstyle \pm 7.67}$ & $85.33{\scriptstyle \pm 0.32}$ & $84.74{\scriptstyle \pm 2.58}$ \\
 & NCC & $97.51{\scriptstyle \pm 0.19}$ & $99.51{\scriptstyle \pm 0.49}$ & $96.61{\scriptstyle \pm 0.46}$ & $97.85{\scriptstyle \pm 3.50}$ & $89.96{\scriptstyle \pm 0.11}$ & $95.60{\scriptstyle \pm 4.88}$ & $89.33{\scriptstyle \pm 0.62}$ & $81.68{\scriptstyle \pm 4.38}$ & $84.16{\scriptstyle \pm 0.11}$ & $82.00{\scriptstyle \pm 8.72}$ & $84.11{\scriptstyle \pm 0.32}$ & $83.46{\scriptstyle \pm 3.06}$ \\

\midrule
\multirow{3}{*}{NegGrad+} & Output & $98.00{\scriptstyle \pm 0.33}$ & $0.00{\scriptstyle \pm 0.00}$ & $98.21{\scriptstyle \pm 0.56}$ & $6.63{\scriptstyle \pm 14.82}$ & $89.88{\scriptstyle \pm 0.26}$ & $0.00{\scriptstyle \pm 0.00}$ & $89.85{\scriptstyle \pm 0.55}$ & $6.14{\scriptstyle \pm 5.61}$ & $85.12{\scriptstyle \pm 0.26}$ & $0.00{\scriptstyle \pm 0.00}$ & $85.48{\scriptstyle \pm 0.73}$ & $0.52{\scriptstyle \pm 0.69}$ \\
 & Linear Probe & $98.03{\scriptstyle \pm 0.16}$ & $96.15{\scriptstyle \pm 2.47}$ & $97.57{\scriptstyle \pm 0.67}$ & $95.76{\scriptstyle \pm 1.13}$ & $90.24{\scriptstyle \pm 0.19}$ & $87.20{\scriptstyle \pm 7.43}$ & $89.96{\scriptstyle \pm 0.55}$ & $82.98{\scriptstyle \pm 4.66}$ & $84.95{\scriptstyle \pm 0.18}$ & $75.20{\scriptstyle \pm 5.93}$ & $84.74{\scriptstyle \pm 0.29}$ & $67.50{\scriptstyle \pm 4.21}$ \\
 & NCC & $94.88{\scriptstyle \pm 1.60}$ & $87.05{\scriptstyle \pm 5.49}$ & $93.86{\scriptstyle \pm 2.70}$ & $73.38{\scriptstyle \pm 4.37}$ & $89.00{\scriptstyle \pm 0.32}$ & $86.80{\scriptstyle \pm 6.26}$ & $89.20{\scriptstyle \pm 0.73}$ & $43.40{\scriptstyle \pm 5.94}$ & $83.29{\scriptstyle \pm 0.29}$ & $84.40{\scriptstyle \pm 5.55}$ & $82.94{\scriptstyle \pm 0.30}$ & $57.16{\scriptstyle \pm 7.44}$ \\

\bottomrule
\end{tabular}
}
\end{table*}

\endgroup

\begingroup
\setlength{\tabcolsep}{4pt}
\small
\begin{table*}[t]
\centering
\caption{
Evaluation of \textbf{CMF-based machine unlearning} methods on the \textbf{ViT-S/16} model across CIFAR-10, CIFAR-100, and Tiny-ImageNet.
Results report mean accuracy on retain and forget subsets under different unlearning bucket sizes.
These results demonstrate that CMF-based unlearning effectively removes information not only at the output level but also in the \textbf{feature representations} of ImageNet-pretrained vision transformers.
}
\label{tab:eval_vit_s_16_with_var}
\vspace{-0.12in}
\resizebox{1.0\linewidth}{!}{
\begin{tabular}{l l cccc cccc cccc}
\toprule
\multirow{3}{*}{\textbf{Method}} & \multirow{3}{*}{\textbf{Accuracy}} & \multicolumn{4}{c}{\textbf{CIFAR-10}} & \multicolumn{4}{c}{\textbf{CIFAR-100}} & \multicolumn{4}{c}{\textbf{Tiny-ImageNet}} \\
\cmidrule(lr){3-6}\cmidrule(lr){7-10}\cmidrule(lr){11-14}
 &  & \multicolumn{2}{c}{\textbf{1}} & \multicolumn{2}{c}{\textbf{3}} & \multicolumn{2}{c}{\textbf{1}} & \multicolumn{2}{c}{\textbf{10}} & \multicolumn{2}{c}{\textbf{1}} & \multicolumn{2}{c}{\textbf{20}} \\
\cmidrule(lr){3-4}\cmidrule(lr){5-6}\cmidrule(lr){7-8}\cmidrule(lr){9-10}\cmidrule(lr){11-12}\cmidrule(lr){13-14}
 &  & \textbf{Retain} & \textbf{Forget} & \textbf{Retain} & \textbf{Forget} & \textbf{Retain} & \textbf{Forget} & \textbf{Retain} & \textbf{Forget} & \textbf{Retain} & \textbf{Forget} & \textbf{Retain} & \textbf{Forget} \\
\midrule
\multirow{3}{*}{Original} & Output & $98.20{\scriptstyle \pm 0.15}$ & $98.20{\scriptstyle \pm 1.39}$ & $98.16{\scriptstyle \pm 0.44}$ & $98.29{\scriptstyle \pm 1.02}$ & $90.09{\scriptstyle \pm 0.06}$ & $92.00{\scriptstyle \pm 5.87}$ & $90.09{\scriptstyle \pm 0.46}$ & $90.32{\scriptstyle \pm 4.18}$ & $84.90{\scriptstyle \pm 0.05}$ & $79.20{\scriptstyle \pm 10.83}$ & $84.81{\scriptstyle \pm 0.31}$ & $85.44{\scriptstyle \pm 2.82}$ \\
 & Linear Probe & $98.17{\scriptstyle \pm 0.14}$ & $98.17{\scriptstyle \pm 1.26}$ & $98.15{\scriptstyle \pm 0.36}$ & $98.22{\scriptstyle \pm 0.85}$ & $89.93{\scriptstyle \pm 0.05}$ & $92.20{\scriptstyle \pm 5.07}$ & $89.94{\scriptstyle \pm 0.43}$ & $90.08{\scriptstyle \pm 3.83}$ & $84.97{\scriptstyle \pm 0.04}$ & $81.20{\scriptstyle \pm 7.82}$ & $84.87{\scriptstyle \pm 0.26}$ & $85.68{\scriptstyle \pm 2.31}$ \\
 & NCC & $98.06{\scriptstyle \pm 0.20}$ & $98.06{\scriptstyle \pm 1.76}$ & $98.02{\scriptstyle \pm 0.50}$ & $98.15{\scriptstyle \pm 1.16}$ & $89.62{\scriptstyle \pm 0.05}$ & $93.00{\scriptstyle \pm 5.10}$ & $89.67{\scriptstyle \pm 0.47}$ & $89.58{\scriptstyle \pm 4.01}$ & $83.62{\scriptstyle \pm 0.03}$ & $80.40{\scriptstyle \pm 6.84}$ & $83.49{\scriptstyle \pm 0.31}$ & $84.70{\scriptstyle \pm 2.83}$ \\
 
 \midrule
\multirow{3}{*}{Retain-only Retrain} & Output & $98.46{\scriptstyle \pm 0.29}$ & $0.00{\scriptstyle \pm 0.00}$ & $98.61{\scriptstyle \pm 0.51}$ & $0.00{\scriptstyle \pm 0.00}$ & $89.95{\scriptstyle \pm 0.05}$ & $0.00{\scriptstyle \pm 0.00}$ & $90.10{\scriptstyle \pm 0.46}$ & $0.00{\scriptstyle \pm 0.00}$ & $83.85{\scriptstyle \pm 0.14}$ & $0.00{\scriptstyle \pm 0.00}$ & $84.14{\scriptstyle \pm 0.67}$ & $0.00{\scriptstyle \pm 0.00}$ \\
 & Linear Probe & $98.28{\scriptstyle \pm 0.08}$ & $97.71{\scriptstyle \pm 1.24}$ & $98.17{\scriptstyle \pm 0.36}$ & $97.35{\scriptstyle \pm 1.32}$ & $90.32{\scriptstyle \pm 0.11}$ & $90.80{\scriptstyle \pm 7.09}$ & $90.13{\scriptstyle \pm 0.42}$ & $87.82{\scriptstyle \pm 4.67}$ & $84.47{\scriptstyle \pm 0.13}$ & $77.60{\scriptstyle \pm 5.55}$ & $84.45{\scriptstyle \pm 0.38}$ & $81.44{\scriptstyle \pm 2.79}$ \\
 & NCC & $98.12{\scriptstyle \pm 0.17}$ & $94.24{\scriptstyle \pm 3.73}$ & $97.98{\scriptstyle \pm 0.43}$ & $94.51{\scriptstyle \pm 1.95}$ & $89.24{\scriptstyle \pm 0.09}$ & $87.80{\scriptstyle \pm 5.97}$ & $89.31{\scriptstyle \pm 0.36}$ & $81.88{\scriptstyle \pm 5.19}$ & $82.93{\scriptstyle \pm 0.05}$ & $74.80{\scriptstyle \pm 5.76}$ & $82.74{\scriptstyle \pm 0.30}$ & $78.64{\scriptstyle \pm 3.19}$ \\

\midrule
\multirow{3}{*}{Random-label with CMF} & Output & $97.38{\scriptstyle \pm 0.30}$ & $52.77{\scriptstyle \pm 15.71}$ & $96.78{\scriptstyle \pm 1.23}$ & $46.83{\scriptstyle \pm 8.54}$ & $86.99{\scriptstyle \pm 0.18}$ & $55.20{\scriptstyle \pm 14.82}$ & $87.81{\scriptstyle \pm 0.78}$ & $53.06{\scriptstyle \pm 14.14}$ & $77.60{\scriptstyle \pm 0.10}$ & $61.60{\scriptstyle \pm 8.41}$ & $78.12{\scriptstyle \pm 0.61}$ & $62.56{\scriptstyle \pm 2.43}$ \\
 & Linear Probe & $97.20{\scriptstyle \pm 0.33}$ & $72.30{\scriptstyle \pm 9.32}$ & $96.19{\scriptstyle \pm 1.14}$ & $66.69{\scriptstyle \pm 8.22}$ &  $84.79{\scriptstyle \pm 0.18}$ & $24.60{\scriptstyle \pm 14.77}$ & $86.01{\scriptstyle \pm 0.97}$ & $41.10{\scriptstyle \pm 16.54}$ & $80.20{\scriptstyle \pm 0.09}$ & $68.40{\scriptstyle \pm 7.92}$ & $80.57{\scriptstyle \pm 0.52}$ & $71.04{\scriptstyle \pm 3.19}$ \\
 & NCC & $97.35{\scriptstyle \pm 0.29}$ & $55.82{\scriptstyle \pm 14.25}$ & $96.66{\scriptstyle \pm 1.20}$ & $49.97{\scriptstyle \pm 9.02}$ & $86.97{\scriptstyle \pm 0.20}$ & $55.20{\scriptstyle \pm 11.90}$ & $87.79{\scriptstyle \pm 0.79}$ & $52.54{\scriptstyle \pm 14.90}$ & $77.09{\scriptstyle \pm 0.11}$ & $61.20{\scriptstyle \pm 8.44}$ & $77.55{\scriptstyle \pm 0.59}$ & $60.48{\scriptstyle \pm 3.34}$ \\

\midrule
\multirow{3}{*}{Salun with CMF} & Output & $97.30{\scriptstyle \pm 0.38}$ & $53.69{\scriptstyle \pm 15.79}$ & $96.94{\scriptstyle \pm 1.20}$ & $49.31{\scriptstyle \pm 10.28}$& $86.20{\scriptstyle \pm 0.43}$ & $53.20{\scriptstyle \pm 24.73}$ & $87.42{\scriptstyle \pm 0.69}$ & $46.40{\scriptstyle \pm 12.73}$ & $77.91{\scriptstyle \pm 0.07}$ & $68.00{\scriptstyle \pm 8.00}$ & $78.34{\scriptstyle \pm 0.76}$ & $68.66{\scriptstyle \pm 3.22}$ \\
 & Linear Probe & $97.11{\scriptstyle \pm 0.37}$ & $71.98{\scriptstyle \pm 9.51}$ & $96.23{\scriptstyle \pm 1.28}$ & $68.07{\scriptstyle \pm 8.89}$ & $83.43{\scriptstyle \pm 0.59}$ & $23.20{\scriptstyle \pm 18.21}$ & $85.37{\scriptstyle \pm 0.89}$ & $34.70{\scriptstyle \pm 14.59}$ & $81.07{\scriptstyle \pm 0.11}$ & $74.40{\scriptstyle \pm 8.88}$ & $81.36{\scriptstyle \pm 0.32}$ & $75.24{\scriptstyle \pm 2.45}$ \\
 & NCC & $97.26{\scriptstyle \pm 0.35}$ & $56.14{\scriptstyle \pm 14.52}$ & $96.81{\scriptstyle \pm 1.23}$ & $52.30{\scriptstyle \pm 10.73}$ & $86.20{\scriptstyle \pm 0.46}$ & $51.20{\scriptstyle \pm 22.88}$ & $87.39{\scriptstyle \pm 0.70}$ & $46.34{\scriptstyle \pm 12.84}$& $77.61{\scriptstyle \pm 0.08}$ & $64.00{\scriptstyle \pm 8.49}$ & $77.84{\scriptstyle \pm 0.77}$ & $67.62{\scriptstyle \pm 3.17}$ \\

 \midrule
\multirow{3}{*}{NegGrad+ with CMF} & Output & $92.94{\scriptstyle \pm 2.02}$ & $41.42{\scriptstyle \pm 14.64}$ & $93.41{\scriptstyle \pm 1.23}$ & $48.50{\scriptstyle \pm 6.68}$ & $85.48{\scriptstyle \pm 1.45}$ & $55.80{\scriptstyle \pm 13.18}$ & $83.91{\scriptstyle \pm 2.25}$ & $47.08{\scriptstyle \pm 5.58}$ & $77.81{\scriptstyle \pm 1.57}$ & $17.20{\scriptstyle \pm 8.44}$ & $65.38{\scriptstyle \pm 8.49}$ & $31.24{\scriptstyle \pm 4.90}$ \\
 & Linear Probe & $93.76{\scriptstyle \pm 1.60}$ & $55.61{\scriptstyle \pm 10.32}$ & $93.68{\scriptstyle \pm 1.02}$ & $56.75{\scriptstyle \pm 8.04}$ & $83.47{\scriptstyle \pm 1.72}$ & $24.40{\scriptstyle \pm 15.79}$ & $81.19{\scriptstyle \pm 2.46}$ & $22.30{\scriptstyle \pm 4.31}$ & $81.06{\scriptstyle \pm 1.11}$ & $34.40{\scriptstyle \pm 12.36}$ & $69.65{\scriptstyle \pm 8.88}$ & $33.98{\scriptstyle \pm 9.05}$ \\
 & NCC & $93.04{\scriptstyle \pm 1.92}$ & $43.89{\scriptstyle \pm 12.64}$ & $93.45{\scriptstyle \pm 1.16}$ & $48.08{\scriptstyle \pm 5.34}$ & $85.53{\scriptstyle \pm 1.47}$ & $58.40{\scriptstyle \pm 13.32}$ & $83.97{\scriptstyle \pm 2.30}$ & $45.28{\scriptstyle \pm 5.97}$ & $77.10{\scriptstyle \pm 1.82}$ & $12.00{\scriptstyle \pm 3.74}$ & $63.65{\scriptstyle \pm 8.81}$ & $27.58{\scriptstyle \pm 4.94}$ \\

\bottomrule
\end{tabular}
}
\end{table*}
\endgroup

\end{document}